\newcommand{\suchthat}{\;\ifnum\currentgrouptype=16 \middle\fi|\;}
\DeclarePairedDelimiter\ceil{\lceil}{\rceil}
\newcommand{\R}{\mathbb{R}}
\newcommand{\Reg}{\text{Reg}}
\newcommand{\hatf}{\hat{f}}
\newcommand{\hata}{\hat{a}}
\newcommand{\F}{\mathcal{F}}
\newcommand{\E}{\mathop{\mathbb{E}}}
\newcommand{\A}{\mathcal{A}}
\newcommand{\Xscript}{\mathcal{X}}
\newcommand{\ordO}{\mathcal{O}}
\newcommand{\ordOt}{\tilde{\mathcal{O}}}
\newcommand{\Unif}{\text{Unif}}
\newcommand{\I}{\mathbb{I}}
\newcommand{\G}{\mathcal{G}}
\newcommand{\hatG}{\hat{\mathcal{G}}}
\newcommand{\hatg}{\hat{g}}
\newcommand{\erisk}{\mathcal{E}}
\newcommand{\haterisk}{\hat{\mathcal{E}}}
\newcommand{\event}{\mathcal{W}}
\newcommand{\activesteps}{\mathcal{T}_{\text{active}}}
\newcommand{\passivesteps}{\mathcal{T}_{\text{passive}}}
\newcommand{\comp}{\textbf{comp}}
\newcommand{\Stilde}{\widetilde{S}}
\newcommand{\Zscript}{\mathcal{Z}}
\newcommand{\GeneralizedFalcon}{\text{Epsilon-FALCON}}
\newtheorem{assumption}{Assumption}
\newtheorem{lemma}{Lemma}
\newtheorem{corollary}{Corollary}
\begin{document}

%

%

\twocolumn[

\aistatstitle{Tractable contextual bandits beyond realizability}

\aistatsauthor{ Sanath Kumar Krishnamurthy \And Vitor Hadad \And  Susan Athey }

\aistatsaddress{ Stanford University \And  Stanford University \And Stanford University } ]

\begin{abstract}
  Tractable contextual bandit algorithms often rely on the realizability assumption -- i.e., that the true expected reward model belongs to a known class, such as linear functions. In this work, we present a tractable bandit algorithm that is not sensitive to the realizability assumption and computationally reduces to solving a constrained regression problem in every epoch. When realizability does not hold, our algorithm ensures the same guarantees on regret achieved by realizability-based algorithms under realizability, up to an additive term that accounts for the misspecification error. This extra term is proportional to T times a function of the mean squared error between the best model in the class and the true model, where T is the total number of time-steps. Our work sheds light on the bias-variance trade-off for tractable contextual bandits. This trade-off is not captured by algorithms that assume realizability, since under this assumption there exists an estimator in the class that attains zero bias.
\end{abstract}

\section{Introduction}
\label{sec:introduction}

Contextual bandit algorithms serve as a fundamental tool for online decision making and have been used in a wide range of settings from recommendation systems \cite{agarwal2016making} to mobile health \cite{tewari2017ads}, and due to their applicability over the past couple of decades there has been an increasing amount of research in contextual bandits \cite{lattimore2020bandit}. However, the performance of many common algorithms relies on an assumption called ``realizability'', which requires the analyst to possess some knowledge about the underlying data generating process -- and often also relies on some luck that the process be relatively simple. When this assumption is satisfied, there exist algorithms that are statistically optimal and computationally tractable (in a sense we'll discuss more below). However, when it is violated, the performance of these algorithms can degrade in unexpected ways. The search for tractable algorithms that do not rely on this assumption an ongoing open problem \cite{foster2019model}.  In this work, we propose an algorithm that is optimal when the ``realizability'' assumption is satisfied and whose behavior is accurately characterized in its absence. We will also point to directions of research that may help do away with this assumption entirely.

Our underlying setup is the general stochastic contextual bandit setting. Using potential outcome notation, observations are represented as a sequence of iid random variables $(x_t, r_t)$, where $x_t \in \mathcal{X}$ stands for a context in arbitrary set $\mathcal{X}$ and $r_t \in [0, 1]^{K}$ is a vector of rewards, where $K : = |\mathcal{A}|$ is the (finite) number of actions. Upon selecting one of action $a_t \in \mathcal{A}$, the algorithm observes $r_t(a_t)$. Therefore, the sequence of observed data points is $(x_t, a_t, r_t(a_t))$. Here $t$ denotes the time-step which and is also the index for the sequence of observations. This sequence has length $T$, which may be known or unknown. A ``policy'' is a deterministic mapping from contexts to actions, representing a particular action selection strategy. Relative to the set of all policies $\mathcal{A}^{\mathcal{X}}$, we define the optimal policy as $\pi^{*} = \arg\max_{\pi \in \mathcal{A}^{\mathcal{X}}} \E_{x_t,r_t}[r_t(\pi(x_t))]$, where the expectation is taken over contexts and rewards.\footnote{Uniqueness of the optimal policy is not important for our results.} A ``reward model'' or ``outcome model'' is a function that (potentially inaccurately) represents the conditional expectation of potential outcomes given action and context. Reward models will often be represented as $f(x,a)$. We say that a reward model ``induces a policy $\pi$'' if $\pi(x) \in \arg\max_{a} f(x, a)$ for every $x$.

The goal of bandit algorithms is to find a sequence of actions that maximizes the sum of rewards observed during the experiment or, equivalently, to minimize cumulative regret, defined as the difference between the reward that was observed and that that would have been observed under the optimal policy,
\begin{align}
  \label{eq:regret}
  R_T := \sum_{t=1}^{T} r_t(\pi^*(x_t)) - r_t(a_t).
\end{align}
The statistical performance of different algorithms is characterized by the rate at which \eqref{eq:regret} grows with the length of the experiment $T$.

Contextual bandit algorithms can often be categorized into three groups, depending on what is assumed about the underlying data-generating process. The first group of algorithms are the ``agnostic'' algorithms. These algorithms make no assumptions about the reward model, and they learn the best policy in some fixed class $\Pi \subseteq \mathcal{A}^{\mathcal{X}}$ while balancing the exploration-exploitation trade-off. To do this, these algorithms \cite{beygelzimer2011contextual, dudik2011efficient, agarwal2014taming} need to construct a distribution over the policies $\Pi$ in every epoch. Constructing this distribution is computationally challenging, and hence this approach is colloquially referred to as the ``Monster'' \cite{langford2014interactive}. We now focus on the results in \cite{agarwal2014taming} because computationally and statistically, they provide the state of the art agnostic algorithms. When $\Pi$ is a finite class, \cite{agarwal2014taming} present an algorithm called ILTCB that constructs a distribution with support size of $\ordO(\log|\Pi|)$ and the regret \footnote{Note that while this notion of regret compares against the best policy in $\Pi$, the notion of regret used in this paper compares against the true optimal policy $\pi^*$.} against the best policy in $\Pi$ scales at the rate $\ordOt(KT\log|\Pi|)$. Each policy in the support of this distribution can be computed by solving the following cost-sensitive classification problem:
\begin{align}
  \label{eq:cost_sensitive}
  \arg\max_{\pi \in \Pi} \sum_{s=1}^{t} \hat{r}_s(\pi(x_s)),
\end{align}
where $(x_s,\hat{r}_s)$ is some sequence in $\Xscript\times[0,1]^K$. When $\Pi$ is large, the support of the distribution needed to be computed in every epoch of ILTCB may be large and hence would still be impractical to implement. To overcome this limitation, \cite{agarwal2014taming} propose a heuristic called Online Cover (with parameter $l$) that computes a distribution over polices using the same approach as ILTCB but stops increasing the support of this distribution after computing some fixed number of policies $l$ for the support. To the best of our knowledge there aren't any theoretical guarantees for Online Cover. Further, finding an exact solutions to \eqref{eq:cost_sensitive} is generally intractable, so implementations of Online Cover use heuristics to solve this optimization problem.  

The second group of algorithms requires knowledge about some set of functions $\mathcal{F}$ that is assumed to include the true reward model. That is, that there exists a function $f^* \in \mathcal{F}$ such that $f^*(x, a) = \E_{x_t,r_t}[r_t(a) | x_t=x]$ for all contexts and actions. This assumption is called ``realizability'', and it often allows for algorithms that computationally tractable and typically easier to implement. Computationally, algorithms in this category rely on being able to solve the regression problem
\begin{align}
  \label{eq:regression_oracle}
  \hatf_t = \arg\min_{f \in \mathcal{F}} \sum_{s=1}^{t-1} (f(x_s, a_s) - r_s(a_s))^2,
\end{align}
or a weighted version of it, either online or offline. A routine that solves \eqref{eq:regression_oracle} is called ``regression oracle''. This class includes algorithms built on upper confidence bounds \cite{li2010contextual, abbasi2011improved, foster2018practical} or Thompson sampling \cite{agrawal2013thompson, russo2018tutorial}, and algorithms built on simple probabilistic selection strategies \cite{abe1999associative, foster2020beyond, simchi2020bypassing}. Regret rates for this class of algorithms are related to the complexity class of the outcome model $\mathcal{F}$, and under realizability optimal algorithms attain a rate of $\ordOt(\sqrt{TK\log|\F|})$ for finite classes $\F$ (similar results are available for more general classes). In particular, the FALCON  algorithm of \cite{simchi2020bypassing} will serve as the basis of our method attains this statistically optimal rate (so long as realizability holds) and is computationally tractable, in that the algorithm only needs to solve the problem \eqref{eq:regression_oracle} a small (at most logarithmic) number of times during the experiment.

\begin{auxmulticols}{1}
\begin{figure*}[ht]
  \centering
  \includegraphics[width=\textwidth]{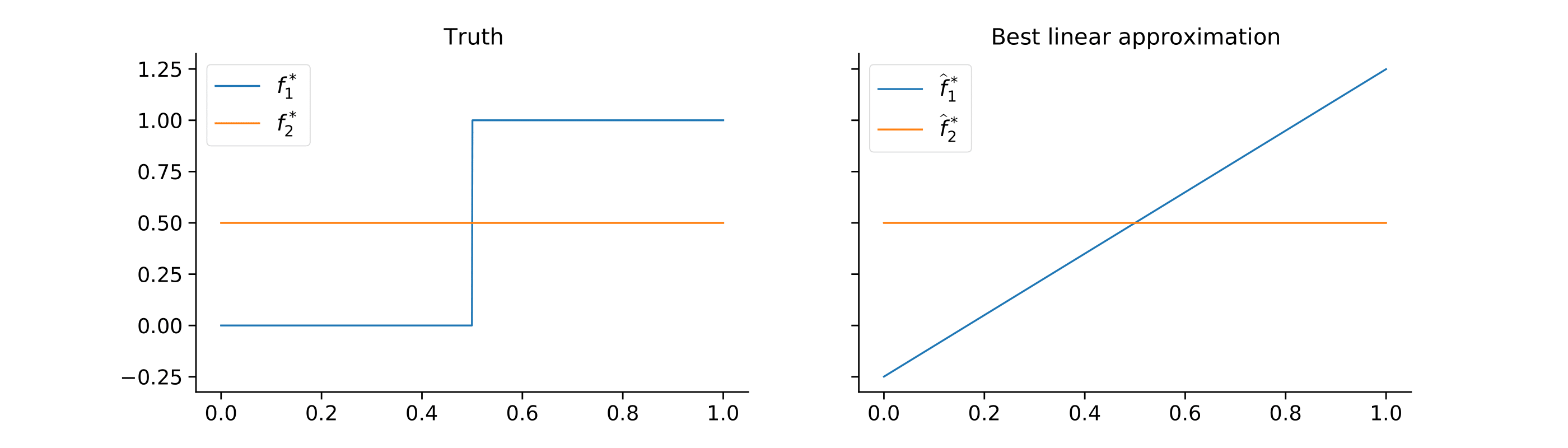}
  \vspace{.1in}
  \caption{\textbf{Example}. Data-generating process (left) and its best approximation in the class of linear functions (right). The induced policies are the same.}
  \label{fig:example_truth_blp}
\end{figure*}
\end{auxmulticols}

\begin{auxmulticols}{1}
\begin{figure*}[ht]
  \centering
  \vspace{.1in}
  \includegraphics[width=\textwidth]{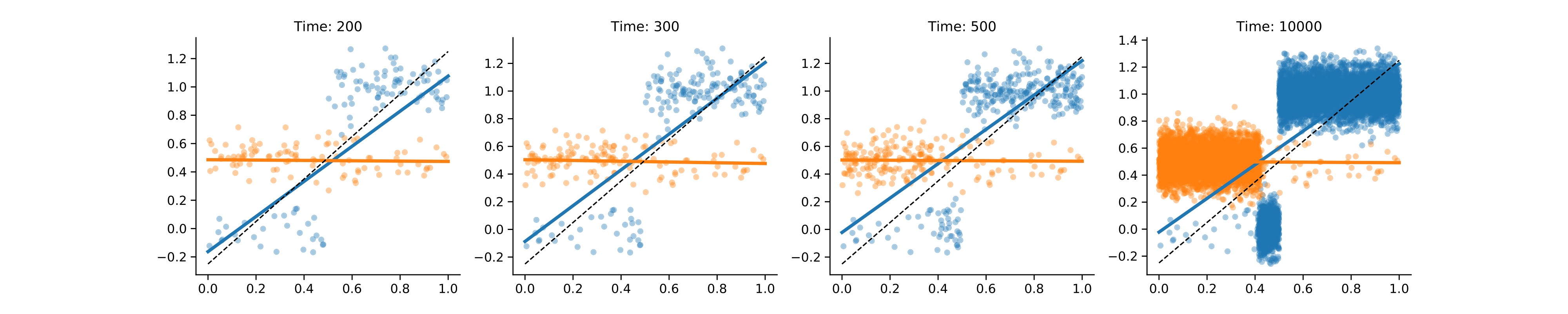}
  \vspace{.1in}
  \caption{\textbf{Evolution of model estimates} for the UCB example. Although the model is initially correct, the distribution shift caused by the assignment mechanism ends up biasing reward estimates over time. Best linear approximation in black, dotted line. In our proposed algorithm, the blue and black lines are kept appropriately close, preventing this phenomenon.}
  \label{fig:example_evolution}
\end{figure*}
\end{auxmulticols}

A third set of bandit algorithms that does not fall neatly into any of the two categories above are algorithms that allow for a non-parametric model class. For example, in \cite{rigollet2010nonparametric}, \cite{perchet2013multi} the reward model is assumed to be H\"{o}lder continuous but non-differentiable, and in \cite{hu2020smooth}, \cite{gur2019smoothness} it satisfies a H\"{o}lder smoothness assumption. The main characteristic of this class of algorithms is that they partition the covariate space into hypercubes of appropriate size and run multi-armed bandit algorithms within each cube. Depending on the smoothness of the reward model, there can be some information sharing across cubes that induces correlation across assignments in adjacent hypercubes and decreases regret. Although this is a very interesting direction of research, the structure of these algorithms forces the running time to exponentially depend on the context dimension, making them computationally intractable and hard to implement for most real life problems. Hence, for the rest of the paper, we will focus on the first two classes of algorithms.

\subsection{The problem with realizability}
\label{sec:problem_with_realizability}

As we have mentioned before, realizability is an extremely convenient and pervasive assumption in many tractable contextual bandit algorithms, but it is nevertheless very strong. In this section, we attempt to shed light on some issues that may arise in its absence.

To fix ideas, start from the following illustration. There are two actions and a single context is distributed uniformly on the unit interval. However, unbeknownst to the researcher, the conditional average rewards for each arm are a step function $f_1^*(x_t) := \mathbb{I}\{ x_t > 0.5 \}$ and a constant $f_2^*(x_t) \equiv 0.5$ (Figure \ref{fig:example_truth_blp}).  Rewards are observed with error $\epsilon_t \sim \mathcal{N}(0, .01)$. The researcher erroneously assumes that both can be realized in the class $\mathcal{F}$ of linear functions. Fortunately, in this example the best linear approximation $(\hatf_1^*, \hatf_2^*)$ induces a good policy. In fact, it coincides with the one the researcher would obtain if they had knowledge about the true function class -- i.e., the policy induced by the best linear approximation $\hat{\pi}^*$ defined by $\hat{\pi}(x) = 1$ if $x > 0.5$ and $\hat{\pi}(x) = 2$ otherwise actually coincides with the policy induces by the true model $\pi^*(x) := \arg\max_{a} f^*(x,a)$. Therefore, if the sequence of fitted models $(\hatf_{1,t}, \hatf_{2,t})$ converges to the best linear approximation $(\hatf_{1,t}^*, \hatf_{2,t}^*)$, regret should decay to zero asymptotically. However, as we will see next, this convergence may not happen.

Let us assume that the researcher collects data via LinUCB \cite{li2010contextual}, with model updates in batches of 100 observations. Figure \ref{fig:example_evolution} shows the evolution of the estimated models $(\hatf_{1,t}, \hatf_{2,t})$ over time for a single simulation. After about a few hundred observations, the estimated model approximates the best linear approximation well and regret is small since the induced policy is nearly optimal. However, by continuing to assign treatments following this policy (plus some negligible exploration), the distribution of observations changes, which pushes the model away from the best linear approximation. In turn, this causes per-period regret to increase over time, as we show on Figure \ref{fig:regret_increase}.

\begin{figure}[H]
  \centering
  \includegraphics[width=.45\textwidth]{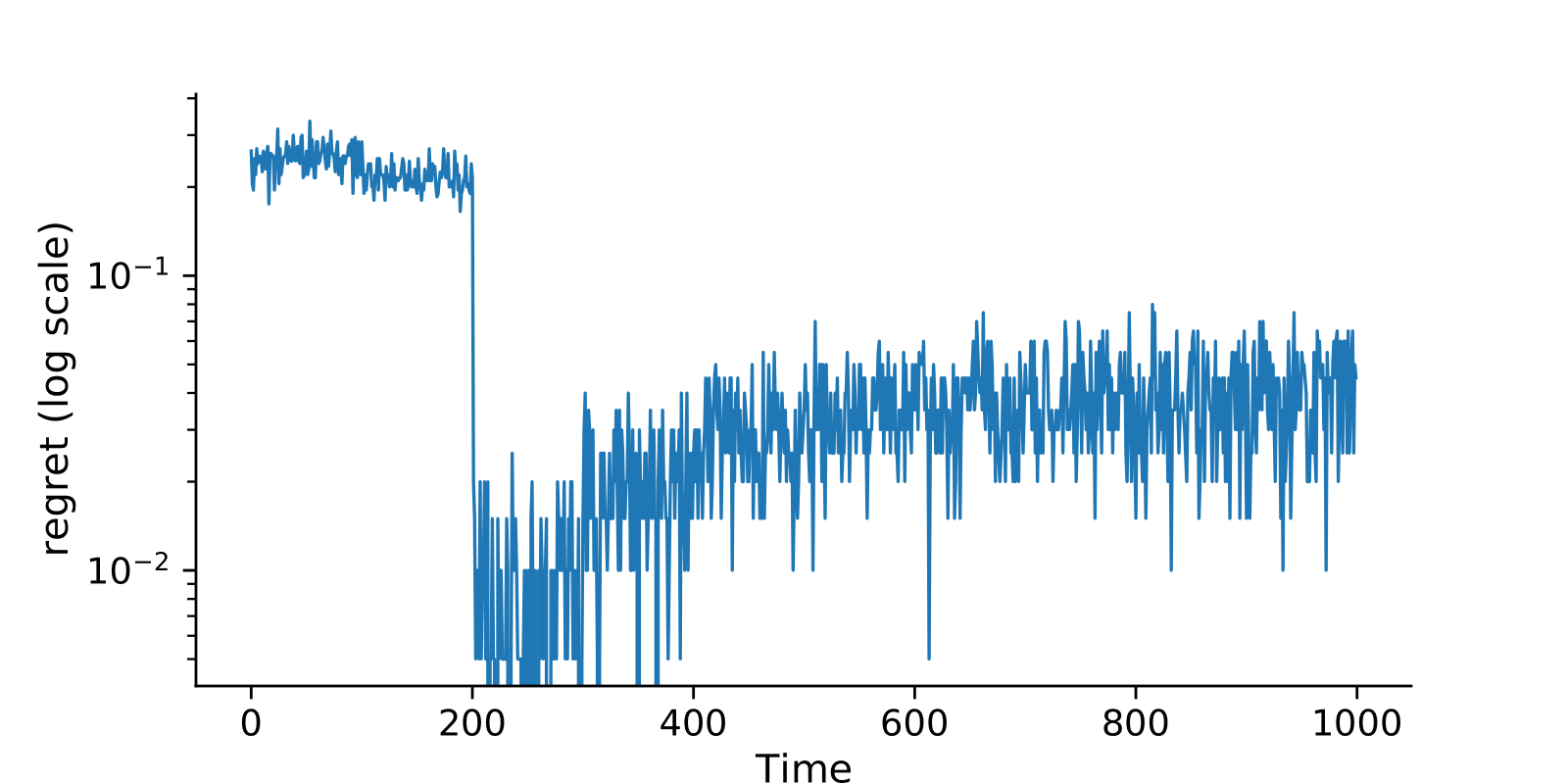}
  \vspace{.1in}
  \caption{\textbf{Evolution of per-period regret} for the UCB example described in the text. Regret initially decreases because the estimated models $(\hatf_{1,t}, \hatf_{2,t})$ are close to the best linear approximations $(\hatf^*_1, \hatf^*_2)$, which in this example induces a good policy. However, as we gather more data that was collected with an exploitation objective, the estimated model diverges and the performance of the algorithm degrades. (Average across 100 simulations.)}
  \label{fig:regret_increase}
\end{figure}

 The previous example demonstrates that in the absence of realizability the dynamics of adaptive data collection can lead the algorithm to learn a policy that is suboptimal relative to the one that it would have learned under non-adaptive data collection. As an extreme thought example, one may also consider a situation in which actions are assigned via the optimal policy $\pi^*$. If we were to fit a linear model using exclusively this data, we would estimate that $\hatf_1 \equiv 1$ and $\hatf_2 \equiv 0.5$, which would in turn induce the policy $\pi(x) \equiv 1$ -- a policy that always assigns arm $1$ everywhere and therefore clearly suboptimal. In fact, more can be said. We can construct examples where even when the approximation error $b$ is arbitrarily small, given data from the optimal policy, the confidence intervals used by LinUCB would tightly concentrate around a high regret policy, showing that the confidence intervals used by LinUCB are extremely sensitive to the realizability assumption (See \Cref{app:sensitivity-example}).

To prevent this phenomenon, in the next section we consider an algorithm that constraints the estimate of the outcome model $\hatf$ to be close to $\hatf^*$. This also allows us to derive upper bounds on regret in terms of the deviation of the best in-class model $\hatf^*$ from the true model $f^*$. This characterization is important as it allows us to take into account regret incurred due to model misspecification -- a cost that often assumed away under realizability.

\subsection{Related work on misspecification}

As we have discussed above, bandit algorithms relying on regression oracles are computationally tractable, and when their model is well-specified they often exhibit attractive statistical properties. More recently, there has been interest in developing algorithms that are robust to misspecification. These works differ in how they define and measure misspecification, and how their regret bound degrade as the level of misspecification increases.

\cite{neu2020efficient, zanette2020learning} assume that the absolute deviation between the true reward function and its best linear approximation is at most $\epsilon$ uniformly across contexts and actions. Under this assumption, they develop bandit algorithms whose regret overhead due to misspecification is bounded in terms of this measure of misspecification $\epsilon$. Under the same measure of misspecification, 
\cite{foster2020beyond} provide similar results that hold for any class of models that have an online regression oracle.

This type of uniform bound on model misspecification can be arbitrarily large even in relatively benign examples (see \Cref{app:sensitivity-example}). Concurrent work of \cite{foster2020adapting}  use a different measure of misspecification that allows them to derive tighter regret bounds while relying on online regression oracles. Their measure of misspecification turns out to be very similar to the one we use in this work, however we rely on constrained offline regression oracles instead. Moreover, \cite{foster2020adapting} also adapt to unknown misspecification by relying on master algorithms (see \Cref{sec:discussion}). 

\cite{lattimore2020learning} and \cite{ghosh2017misspecified} also study the related problem of misspecified non-contextual linear bandits. 

\section{Main results}
\label{sec:main_results}

We propose an algorithm that we call $\GeneralizedFalcon$, which is a modification of the ``FAst Least-squares-regression-oracle CONtextual bandits'', or FALCON algorithm described in \cite{simchi2020bypassing}. The main departure from FALCON is that although we do posit some ``tentative'' set $\F$ that could contain the true outcome model, our regret guarantees do not depend on this assumption being satisfied. For simplicity of exposition we will initially assume that $\mathcal{F}$ is a convex subset of a $d$-dimensional linear space \footnote{Consider the class of estimators $\F$ where linear functions estimate rewards for each arm using a total of $d$ parameters. Note that this is a special case of requiring $\F$ to be a convex subset of a $d$-dimensional linear space. Hence, the guarantees in \Cref{thm:linear-theorem} hold for stochastic linear bandits.}, but our results can be extended to more complex classes as we show later.

We will need some additional notation. Let $f^*$ represent the true outcome model, i.e., $f^*(x, a) = \E_{x_t,r_t}[r_t(a)|x_t=x]$ for all $x$ and $a$. Moreover, let $\hatf^*$ denote the best in-class approximation to the true outcome model when data is collected non-adaptively, or
\begin{equation}
  \label{eq:fhatstar}
  \hatf^* :=\arg\min_{f\in\F} \E_{x \sim D_{\Xscript}} \E_{a \sim \Unif(\A)}[(f(x,a) - f^*(x,a))^2],
\end{equation}
where $D_{\Xscript}$ is the distribution of contexts, and $\Unif(\A)$ is a probability distribution that assigns equal probability to every arm. The approximation error between these two functions is denoted as
\begin{align}
  \label{eq:b}
  b := \E_{x\sim D_{\Xscript}}\E_{a\sim\Unif(\A)}[(\hatf^*(x,a)-f^*(x,a))^2].
\end{align}
Naturally, the approximation error \eqref{eq:b} will be zero when realizability holds. And when it doesn't hold, we will show that the algorithm will incur some regret whose upper bound increases with the approximation error. This is what allows us to accurately characterize the cost that we pay when we $\mathcal{F}$ is misspecified (i.e., $f^* \not\in \F$).

 \paragraph{Algorithm:} $\GeneralizedFalcon$ is implemented in increasing epochs (batches) that are indexed by $m$. Each epoch $m$ begins at period $\tau_{m-1}$, we set epoch schedule so that $\tau_0=0$, $\tau_1\geq 4$, and $\tau_{m+1}=2\tau_m$ for any epoch $m\geq 1$. Every epoch $m$ starts out with an estimated reward model $\hatf_m$ obtained at the end of the last batch, with $\hatf_1 \equiv 0$. For a fraction $\epsilon$ of each epoch, called the ``passive'' phase, $\GeneralizedFalcon$ draws actions uniformly at random. For the remaining $1 - \epsilon$ fraction of the epoch, in what we call the ``active'' phase, it acts as a modified version of FALCON.\footnote{More precisely, it acts as a modified version of the FALCON+ algorithm in the same paper, but the distinction is minor enough that we will ignore it for the purposes of naming our method.}

 Our action selection mechanism is the same as FALCON's, so let's briefly review it. At each epoch $m$, given the current reward model estimate $\hatf_m$ and a scaling parameter $\gamma_m > 0$, actions are drawn from the probability distribution described by the following ``action selection kernel'',
 \begin{align}
   \label{eq:action_kernel}
   p_m(a|x):=
   \begin{cases}
   \frac{1}{K+\gamma_m \left(\hatf_m(x, \hat{a}) - \hatf_m(x,a)\right) }&\text{for } a\neq \hat{a}\\
   1 - \sum_{a'\neq \hat{a}} p(a'|x) &\text{for } a=\hat{a}.
   \end{cases}
 \end{align}
 where $\hat{a} = \max_a \hatf_m(a, x)$ is the best predicted action. The assignment rule \eqref{eq:action_kernel} ensures that actions that are predicted to be good according to the current model estimate $\hatf_m$ are given higher probability. The scaling parameter, set to $\gamma_m \simeq \sqrt{K (\tau_{m-1} - \tau_{m-2})/(d \ln( m / \delta))}$ with initial values $\gamma_1 = 1 $, control the degree of exploration during the active phase, with higher values of $\gamma_m$ indicating less exploration. We may sometimes refer to $\gamma_m$ and $\epsilon$ as the active and passive exploration parameters respectively. 
 
The main difference between our method and FALCON is in how we estimate the outcome model $\hatf_{m+1}$ from data collected in the previous epoch $m$. The original algorithm simply uses the estimator that minimizes empirical risk on data collected in the previous time-steps, but as we saw in the example in Section \ref{sec:problem_with_realizability}, when realizability fails the sequence of estimators $\hatf_m$ may not converge to $\hatf^*$. This is due to the fact that the empirical risk minimizer when data is collected adaptively may be very different from the one attained when data is collected non-adaptively, and its performance may not be well understood (See Figure \ref{fig:constraint}). In order to ensure that our estimates converge to $\hatf^*$, our algorithm uses a ``\emph{constrained} regression oracle'' that ensures that the estimated model is always close to the best approximation $\hatf^*$. Let's see how this is done.

Denote the data collected using the passive and active phases of the epoch $m$ by $S_m'$ and $S_m$ respectively. Moreover, let $\F'_m$ denote the subset of functions $f \in \F$ for which the following constraint in satisfied, 
\begin{equation}
\label{eq:linear-constraint-conreg}
    \sum_{(x,a,r(a))\in S_m'} (f(x,a)-r(a))^2 \leq \alpha_m + C_1d\ln(12m^2/\delta) 
\end{equation}
where $\alpha_m := \min_{g \in\F} \sum_{(x,a,r(a))\in S_m'} (g(x,a)-r(a))^2$ is the sum of squared residuals in the model fitted on the data collected in the ``passive'' phase, and $C_1$ is a constant chosen appropriately to ensure that $\hatf^{*}$ also lies in $\F'$ with probability at least $1-\delta/(12m^2)$. \footnote{We pin down the value of this constant in the Appendix}

The estimated model $\hatf_{m+1}$ will be constrained to lie in this set. More specifically, it is the output of the following constrained regression problem:
 \begin{equation}
  \label{eq:constrained_problem}
  \begin{aligned}
   \min_{f\in\F} \quad &\sum_{(x,a,r(a))\in S_m} (f(x,a)-r(a))^2\\
   \textrm{s.t.} \quad & \;\;\; f \in \F'_m.
   \end{aligned}
 \end{equation}
The intuition, again, is that since $\hatf_{m+1} \in \F'_m$ by construction, and since $\hatf^* \in \F'_m$ with high probability, the two will likely remain close. And since $\F'_m$ shrinks over time, $\hatf_{m+1}$ must ultimately converge to $\hatf^*$. Therefore, the convergence issues we saw in our example in Section \ref{sec:problem_with_realizability} cannot happen. This is what allows us to derive regret guarantees even when realizability fails (see Figure \ref{fig:constraint} for an intuitive illustration). The full description and the pseudocode for the general algorithm can be found in the Appendix (\Cref{alg:generalized-falcon}). \footnote{Except for the choice of $\gamma_m$ and the RHS of the constraint \Cref{eq:linear-constraint-conreg}, the algorithm for general $\F$ is the same as the description in this section.}

\begin{auxmulticols}{1}
\begin{figure*}[ht]
  \centering
  \includegraphics[width=.9\textwidth]{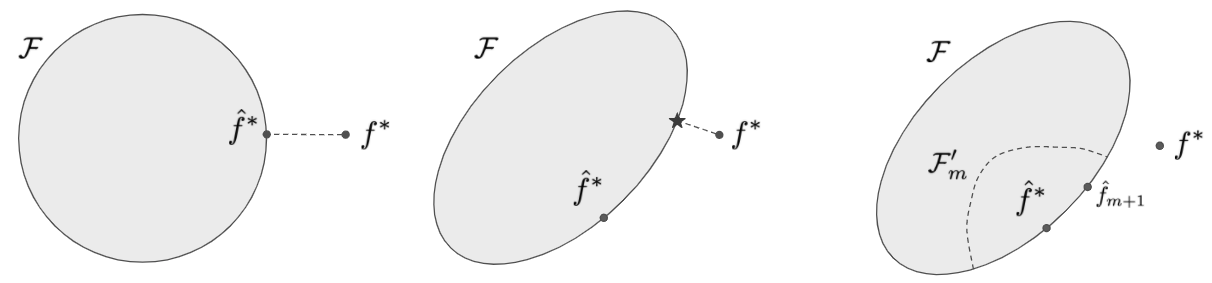}
  \vspace{.3in}
  \caption{Intuition for our method. Left: the function $\hatf^*$ is the best in-class approximation to $f^*$ under non-adaptive data collection. Middle: under a different distribution, the best in-class approximation (starred) may lie very far away from $\hatf^*$, and there are no guarantees on its performance. Right: in our method, we construct a shrinking sequence of sets $\F'_m$ that contain $\hatf^*$ with high probability, and ensure that our model estimates lie in this set.}
    \label{fig:constraint}
\end{figure*}
\end{auxmulticols}

\paragraph{Computational tractability of the constrained regression problem:} Note that $\GeneralizedFalcon$ is very easy to implement given a constrained regression oracle. Hence, for the computational tractability of $\GeneralizedFalcon$, it is sufficient to argue that the constrained regression problem is computationally tractable. When $\F$ is the class of linear reward models, then clearly the constrained regression problem is a convex and can be solved efficiently. In general, when $\F$ is any convex class, we show that the constrained regression problem can be solved efficiently with a weighted regression oracle (see \Cref{app:conreg-tractability}). Hence we can use any of the many existing algorithms for weighted regression as a subroutine to solve the constrained regression problem. While this is one approach to solve the constrained regression problem, in practice directly solving the constrained regression problem may be faster.

Theorem \ref{thm:linear-theorem} provides a high probability regret guarantee for $\GeneralizedFalcon$ when $\F$ is a convex subset of some $d$-dimensional linear space.

\begin{restatable}[Linear case]{theorem}{thmLinear}
\label{thm:linear-theorem} Suppose $\F$ is a convex subset of a $d$-dimensional linear space. With probability at least $1 - \delta$, $\GeneralizedFalcon$ with passive exploration parameter $\epsilon > 0$ attains the following regret guarantee:
\begin{align}
    \label{eq:linear-theorem}
    R_T \leq \ordO\left(\sqrt{KTd\ln\Big(\frac{\ln(T)}{\delta}\Big)} + KT \sqrt{\frac{b}{\sqrt{\epsilon}}} +  \epsilon T \right).
\end{align}
\end{restatable}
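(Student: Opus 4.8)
The plan is to split the horizon according to the two phases of $\GeneralizedFalcon$ and bound each separately. Let $\passivesteps$ and $\activesteps$ be the sets of time-steps falling in passive and active phases. Since rewards lie in $[0,1]$, the passive phase contributes at most $|\passivesteps|$ to $R_T$, and by construction $|\passivesteps| \le \epsilon T + \ordO(\ln T)$; replacing this deterministic bound by the exact count only costs an $\ordOt(\sqrt T)$ martingale fluctuation (Azuma--Hoeffding), which is absorbed into the first term of \eqref{eq:linear-theorem}. This accounts for the $\epsilon T$ summand, and the rest of the argument concerns $\sum_{t\in\activesteps}\big(r_t(\pi^*(x_t))-r_t(a_t)\big)$, where we adapt the epoch-level analysis of FALCON \cite{simchi2020bypassing}.

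For the active phase I would first pass from realized to expected regret: writing $\tilde{p}_m$ for the active action-selection kernel \eqref{eq:action_kernel} used in epoch $m$ and $\Reg(\tilde{p}_m):=\E_x[f^*(x,\pi^*(x))-\sum_a \tilde{p}_m(a\mid x)f^*(x,a)]$, a Freedman inequality applied to the differences $r_t(\pi^*(x_t))-r_t(a_t)-\Reg(\tilde{p}_m)$ leaves only $\ordOt(\sqrt T)$ lower-order terms. Next I would invoke FALCON's decoupling lemma: for the kernel \eqref{eq:action_kernel} with parameter $\gamma_m$ and predictor $\hatf_m$, $\Reg(\tilde{p}_m)\lesssim K/\gamma_m+\gamma_m\,\E_x\E_{a\sim\tilde{p}_m}[(\hatf_m(x,a)-f^*(x,a))^2]$. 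The new ingredient is to control this prediction error when realizability fails, via the split $(\hatf_m-f^*)^2\le 2(\hatf_m-\hatf^*)^2+2(\hatf^*-f^*)^2$: the first summand is estimation error that the constrained regression oracle forces toward zero, while the second is the irreducible approximation error which, after a change of measure from $L^2(\Unif)$ to $L^2(\tilde{p}_m)$, is $\ordO(Kb)$ and ultimately produces the $KT\sqrt{b/\sqrt\epsilon}$ term.

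To control the estimation part $\E[(\hatf_m-\hatf^*)^2]$ I would use the constrained-regression machinery. First, a Bernstein/uniform-convergence argument on the $\asymp\epsilon(\tau_m-\tau_{m-1})$ i.i.d.\ samples collected in the passive phase shows that, for the constant $C_1$ pinned down in the appendix, $\hatf^*$ satisfies the constraint \eqref{eq:linear-constraint-conreg} with probability at least $1-\delta/(12m^2)$, so $\hatf^*\in\F'_m$ on a high-probability event obtained by union bound over epochs. Second, on that event every $f\in\F'_m$ has $L^2(\Unif)$ excess risk $\ordO\big(d\ln(m^2/\delta)/(\epsilon\tau_m)\big)$; since $\F$ is convex, $\hatf^*$ is the $L^2(\Unif)$ projection of $f^*$ onto $\F$, so the Pythagorean inequality upgrades this to $\E_{\Unif}[(f-\hatf^*)^2]=\ordO\big(d\ln(m^2/\delta)/(\epsilon\tau_m)\big)$, and in particular $\hatf_m$, being constrained to lie in $\F'_{m-1}$, inherits this bound. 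Plugging the resulting prediction-error estimate into the decoupling lemma, the schedule's choice of $\gamma_m$ balances the $K/\gamma_m$ exploration term against the estimation term to recover FALCON's $\ordO(\sqrt{KTd\ln(\ln(T)/\delta)})$ rate after summing the geometric series over the $\ordO(\log T)$ epochs, while a separate balancing of the approximation term against $\epsilon$ yields $KT\sqrt{b/\sqrt\epsilon}$.

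The main obstacle is precisely this last step, and more specifically reconciling the two distributions at play: the passive data can only certify that $\hatf_m$ is close to $\hatf^*$ in $L^2(\Unif)$, whereas FALCON's regret accounting needs the prediction error under the active kernel $\tilde{p}_m$, which concentrates mass on the arm $\hatf_m$ currently believes is best. Carefully tracking the density-ratio losses in this change of measure -- for both the estimation term (where $\hatf^*$ may be inaccurate at $\arg\max_a\hatf_m(x,a)$) and the approximation term -- together with the epoch-doubling schedule and the tuning of $\gamma_m$, is what determines the exact exponents ($K$, $\sqrt b$, $\epsilon^{-1/4}$) in the misspecification term and requires the most care. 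One must also check that the constraint-satisfaction events, the membership $\hatf^*\in\F'_m$, and the Freedman bounds hold simultaneously with total failure probability $\delta$, which the $m^{-2}$-type weighting already built into \eqref{eq:linear-constraint-conreg} handles through a union bound over the logarithmically many epochs.
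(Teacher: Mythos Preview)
Your plan has a genuine gap that prevents it from yielding the stated bound. The decoupling inequality you invoke, $\Reg(p_m)\lesssim K/\gamma_m+\gamma_m\,\E_{p_m}[(\hatf_m-f^*)^2]$, is correct, but under misspecification the squared-error term carries an irreducible floor of order $B\le Kb$ that does not decay with the epoch. Because $\GeneralizedFalcon$ commits to $\gamma_m\asymp\sqrt{K\tau_{m-1}/d}$ (this is part of the algorithm, not a free parameter you may re-balance), the product $\gamma_m\cdot B$ grows like $\sqrt{\tau_m}$, and summing over the horizon gives a contribution of order $B\sqrt{K/d}\,T^{3/2}$, not $O(T)$. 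Your proposed ``separate balancing of the approximation term against $\epsilon$'' cannot repair this: there is no remaining degree of freedom once $\gamma_m$ is tied to the realizable-rate schedule. This is exactly why misspecification-robust variants of SquareCB either fix $\gamma$ as a function of a known misspecification level or wrap a master algorithm around multiple rates---neither of which is what is being analysed here.

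The paper's proof takes a different route that never multiplies $B$ by $\gamma_m$. For any policy $\pi$ it bounds $|R_{\hatf_{m+1}}(\pi)-R(\pi)|$ via Cauchy--Schwarz as $\sqrt{V(p_m,\pi)}\cdot\sqrt{\E_{p_m}[(\hatf_{m+1}-f^*)^2]}$, with $V(p_m,\pi)=\E_x[1/p_m(\pi(x)\mid x)]$; the misspecification thus enters as $\sqrt{B}$ times $\sqrt{V}$, not times $\gamma_m$. Two ingredients your sketch omits are essential. First, the \emph{objective} of the constrained regression---minimising squared loss on the active-phase sample $S_m$, which is drawn from $p_m$---is what delivers $\E_{p_m}[(\hatf_{m+1}-f^*)^2]\le B+O(d\ln(m/\delta)/\tau_m)$; you use only the passive-phase constraint, which gives $L^2(\Unif)$ control and plays no role in this step. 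Second, the passive-phase constraint is used for a different purpose: to establish kernel stability, namely $V(p_m,\pi_{\hatf^*})\lesssim K/\sqrt{\epsilon}$ and $V(p_m,\pi)\le V(p_{m+1},\pi)+O(K/\sqrt{\epsilon})$. These feed an induction over epochs showing that $\Reg(\pi)$ and $\Reg_{\hatf_m}(\pi)$ differ by at most $O(K/\gamma_m)+O(\sqrt{KB/\sqrt{\epsilon}})+O(\sqrt{V(p_m,\pi)B})$; averaging over $Q_m$ with $\sum_\pi Q_m(\pi)\sqrt{V(p_m,\pi)}\le\sqrt{K}$ then yields a per-step bound $O(K/\gamma_m)+O(\sqrt{KB/\sqrt{\epsilon}})$ whose misspecification piece is \emph{constant in $m$}, which is what makes it sum to $KT\sqrt{b/\sqrt{\epsilon}}$.
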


The guarantees in \eqref{eq:linear-theorem} consist of three terms. The first term is the regret due to the complexity of the class $\F$, and is the bound guaranteed by realizability based algorithms like FALCON under realizability. The second term can be interpreted as the ``cost of misspecification'', this term depends on the approximation error $b$ and the passive exploration parameter $\epsilon$. Finally, the third term is the regret incurred in the passive phase and depends only on the passive exploration parameter $\epsilon$. 

At first glance, the result in \eqref{eq:linear-theorem} may look rather weak due to the linear dependence in the horizon $T$. However, we contend that any algorithm that that works with a restricted class of policies or reward models, including agnostic algorithms like ILTCB \cite{agarwal2014taming}, will incur some linear regret if these restrictions are violated. In Theorem \ref{thm:linear-theorem} we simply make this issue explicit, as one of our goals is to accurately characterize the bias-variance trade-off in our problem. Our results show that, if the practitioner is willing to spend $\epsilon T$ regret in the passive phase, then in the active phase excess regret due to misspecification will be $\smash{\ordO (KT\sqrt{b/\sqrt{\epsilon}})}$. On the other hand, realizability based approaches do not have any guarantees under general misspecification. 

As a thought experiment, suppose we knew the approximation error $b$ or could make an educated guess about it. In that case we could choose $\epsilon$ as a function of $b$ so as to optimize \eqref{eq:linear-theorem} and obtain the next result.

\begin{restatable}[Linear case with known $b$]{corollary}{coroLinear}
\label{cor:linear-tuning} In the setting of Theorem \ref{thm:linear-theorem}, if the passive exploration parameter is set to $\epsilon = c K^{4/5}b^{2/5}$ for some constant $c > 0$, we have the following bound:
\begin{align}
    \label{eq:linear-tuning}
    R_T \leq \ordO \left(\sqrt{KTd\ln\Big(\frac{\ln(T)}{\delta}\Big)} + K^{4/5} b^{2/5}T \right).
\end{align}
\end{restatable}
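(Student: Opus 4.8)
The plan is to obtain Corollary \ref{cor:linear-tuning} directly from Theorem \ref{thm:linear-theorem} by substituting the prescribed value of $\epsilon$ into \eqref{eq:linear-theorem} and simplifying. The first term $\sqrt{KTd\ln(\ln(T)/\delta)}$ does not depend on $\epsilon$, so it is unchanged. Setting $\epsilon = cK^{4/5}b^{2/5}$, the passive-phase term becomes $\epsilon T = cK^{4/5}b^{2/5}T$, and the misspecification term becomes $KT\sqrt{b/\sqrt{\epsilon}} = KT\,b^{1/2}\,\epsilon^{-1/4} = c^{-1/4}\,KT\,b^{1/2}\,K^{-1/5}b^{-1/10} = c^{-1/4}K^{4/5}b^{2/5}T$. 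Both $\epsilon$-dependent terms are thus $\Theta(K^{4/5}b^{2/5}T)$, with the factors $c$ and $c^{-1/4}$ absorbed into $\ordO(\cdot)$, which yields exactly \eqref{eq:linear-tuning}. I would also remark that this choice of $\epsilon$ is the minimizer, up to constants, of the right-hand side of \eqref{eq:linear-theorem} over $\epsilon > 0$: equating $KT\,b^{1/2}\epsilon^{-1/4}$ with $\epsilon T$ gives $\epsilon^{5/4} = K\,b^{1/2}$, i.e. $\epsilon = K^{4/5}b^{2/5}$, so no other scaling of $\epsilon$ improves the exponents appearing in the bound.

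The only point needing care is that $\epsilon$ must be an admissible passive-exploration fraction, $\epsilon \in (0,1]$. When $cK^{4/5}b^{2/5}\le 1$, Theorem \ref{thm:linear-theorem} applies verbatim and the computation above concludes the proof. When $cK^{4/5}b^{2/5}> 1$, we instead use the trivial bound $R_T\le T$, valid since rewards lie in $[0,1]$: here $K^{4/5}b^{2/5} > 1/c$, so $cK^{4/5}b^{2/5}T > T \ge R_T$, giving $R_T = \ordO(K^{4/5}b^{2/5}T)$ a fortiori (and this term also dominates $\sqrt{KTd\ln(\ln(T)/\delta)}$ up to constants in that regime). Combining the two cases establishes \eqref{eq:linear-tuning}.

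Beyond these two short steps there is no real obstacle: the corollary is a bookkeeping consequence of Theorem \ref{thm:linear-theorem}, with all the statistical content — the role of the approximation error $b$, the constrained regression oracle, and the epoch schedule — already packaged inside that theorem. The only thing a careful writer must emphasize is that $c$ is a free absolute constant, so it may be taken small enough to keep $\epsilon\le 1$ whenever $b$ is moderate, while the trivial regret bound covers the complementary (large-$b$) case.
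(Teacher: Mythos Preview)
Your proposal is correct and matches the paper's intended argument: the corollary is stated without a separate proof in the paper, being an immediate consequence of substituting $\epsilon = cK^{4/5}b^{2/5}$ into \eqref{eq:linear-theorem}, which is exactly what you do. Your additional care in handling the regime $cK^{4/5}b^{2/5}>1$ via the trivial bound $R_T\le T$ is a welcome detail the paper omits (and is consistent with the hidden requirement $\epsilon<1/2$ appearing in \Cref{lem:ConReg}).
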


This result is interesting because it tells us that if we were able to tune the passive exploration parameter optimally, we get improved regret rates that only depend on the complexity of $\F$ and the approximation error $b$, thus achieving a bias-variance trade-off over the entire horizon $T$. This suggests that tuning $\epsilon$ by estimating $b$ may be a promising direction for future work to get algorithms with better regret guarantees.

\paragraph{Understanding the constrained regression problem:}
Having explained the overall algorithm, let's now understand the constrained regression problem in a bit more detail, so the reader will be able to follow the proof steps in the Appendix. 

At the end of epoch $m$, we have the two kinds of data, that is the data from the passive phase of the epoch and the data from the active phase of the epoch. The data from the passive phase is used to construct $\F'_m$, and contains the best in-class approximation of the true outcome model $\hatf^*$ with high probability (see \Cref{lem:ConReg}). The data from the active phase is used to select a ``good'' estimate within $\F'_m$ which in turn induces a ``good'' action selection kernel. A good action selection kernel has low regret, and ensures that the data generated by this kernel can be used to construct ``good'' estimates in the next epoch. In terms of exploration, there is a trade-off between these two properties as more exploration helps you generate ``good'' data but incurs higher regret. In terms of estimates, both these properties are related because good estimates come from good data. For simplicity let us focus on arguing that the action selections kernels we estimate generate ``good'' data and believe that the active exploration parameter $\gamma_m$ is set optimally. In particular, we say the action selection kernel generates ``good'' data if the reward of the policy induced by $\hatf^*$ can be estimated using the data generated by this kernel. Note that this is trivially ensured when actions are selected uniformly at random, as we did in the first epoch. In later epochs, as the kernel gets less explorative ($\gamma_m$ increases), to ensure this we need the estimator ($\hatf_{m+1}$) that induces this action selection kernel to be close to the best in-class model ($\hatf^*$). More mathematically, as shown in \Cref{lem:boundVopt}, we need the root mean squared difference between $\hatf_{m+1}$ and $\hatf^*$ to shrink at the same rate as $\gamma_m$ increases. This property is guaranteed by the fact that both $\hatf_{m+1}$ and $\hatf^*$ lie in $\F'_m$ with high probability, and by the fact that $\F'_m$ is sufficiently small as we have collected enough data in the passive phase to ensure this (see \Cref{lem:ConReg}). Additionally, this property helps us ensure that our action selection kernels $p_m$ are stable over time, in the sense that if the reward of a policy could be estimated from the data generated by $p_{m+1}$ (in expectation) then the reward of this policy could also be estimated by the data generated by $p_{m}$ (in expectation), see \Cref{lem:boundVall} for a more formal statement. In other words, the set of policies that we implicitly consider do not erratically change over time and only decrease.

\paragraph{General classes of outcome models:} Although for concreteness we have explained our results when $\F$ is a convex subset of a $d$-dimensional linear space, \Cref{thm:linear-theorem} readily extends to more general classes of functions. In particular we can extend \Cref{thm:linear-theorem} whenever $\F$ is a convex and satisfies \Cref{ass:main-assumption}. In terms of the algorithm, except for the choice of $\gamma_m$ and the RHS of the constraint \Cref{eq:linear-constraint-conreg}, $\GeneralizedFalcon$ for general $\F$ is the same as the description in this section. See \Cref{alg:generalized-falcon} in the Appendix for more details. Stating \Cref{ass:main-assumption} can get cumbersome quickly, here we state an informal version of this assumption, followed by \Cref{thm:main-theorem}, and applications of this Theorem to various convex classes $\F$. In what follows $\comp(\F)$ will denote an appropriate measure of complexity, like VC subgraph dimension or entropy. 

\paragraph{Main Assumption:} We now state an informal version of \Cref{ass:main-assumption}. Let $n$ denote the number of data points collected from some distribution. Suppose we have $\rho\in(0,1]$, $\rho'\in[0,\infty)$, and $C>0$. Further suppose for any convex subset $\F'$ of $\F$ and $\zeta \in (0, 1/2)$, with probability $1 - \zeta$, for any $\eta \geq C\ln^{\rho'}(n)\ln(1/\zeta)\comp(\F) / n^{\rho}$, the empirical and true risks of any estimators in $\F'$ are ``close'' in the following sense:
\begin{itemize}
    \item If the population risk of any estimator in $\F'$ is smaller than $\eta$, then its empirical risk is not larger than $3\eta/2$.
    \item If the empirical risk of any estimator in $\F'$ is smaller than $\eta$, then its population risk is not larger than $2\eta$.
\end{itemize}

\begin{restatable}[Main result]{theorem}{thmMain}
\label{thm:main-theorem} Suppose $\F$ is a convex set and suppose \Cref{ass:main-assumption} holds. Then with probability at least $1 - \delta$, $\GeneralizedFalcon$ with passive exploration parameter $\epsilon > 0$ attains the following regret guarantee:
\begin{align}
\label{eq:main-theorem}
\begin{split}
   R_T \leq \ordO\Bigg( &\sqrt{KT^{2-\rho}\ln^{\rho'}(T)\ln(\frac{\ln(T)}{\delta})\comp(\F)}\\ 
   & \;\; + KT \sqrt{\frac{b}{\sqrt{\epsilon^{\rho}}}} + \epsilon T   \Bigg).
\end{split}
\end{align}
\end{restatable}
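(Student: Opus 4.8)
The plan is to lift the proof of \Cref{thm:linear-theorem} from the $d$-dimensional linear setting to a general convex $\F$ by substituting \Cref{ass:main-assumption} for the finite-dimensional least-squares concentration bounds used there; the overall skeleton is the FALCON-style epoch decomposition and is unchanged, so the work is in tracking how the exponents $\rho,\rho'$ and the complexity measure $\comp(\F)$ propagate, and in re-verifying \Cref{lem:ConReg}, \Cref{lem:boundVopt} and \Cref{lem:boundVall} under the abstract assumption. First I would fix a good event $\event$: for each epoch $m$, apply \Cref{ass:main-assumption} with $\F'=\F$ and confidence $\zeta\asymp\delta/m^2$ to both the passive-phase and the active-phase samples, and add the usual Freedman-type bounds relating realized regret to its one-step-ahead conditional expectation. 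A union bound over the $\ordO(\log T)$ epochs gives $\Pr[\event]\ge 1-\delta$, and everything below is argued deterministically on $\event$.

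Next I would show, on $\event$, that the constrained estimates track $\hatf^*$. \Cref{lem:ConReg} gives two facts from the passive phase: $\hatf^*\in\F'_m$, and every $f\in\F'_m$ satisfies $\E_{x}\E_{a\sim\Unif(\A)}[(f(x,a)-\hatf^*(x,a))^2]\lesssim \comp(\F)\ln^{\rho'}(n_m)\ln(1/\zeta)/n_m^{\rho}$ with $n_m\asymp\epsilon 2^m$ the passive sample size; here \Cref{ass:main-assumption} transfers the empirical excess-risk slack $C_1\comp(\F)\ln(\cdot)$ to a population excess-risk bound, and convexity of $\F$ (which makes $\hatf^*$ the population minimizer under uniform sampling) turns that into the stated squared $L^2$ neighborhood. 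Since $\hatf_{m+1}\in\F'_m$ by construction this already controls $\E_{a\sim\Unif(\A)}[(\hatf_{m+1}-\hatf^*)^2]$; in addition, because $\hatf_{m+1}$ minimizes empirical squared error on the active data over a set containing $\hatf^*$, applying \Cref{ass:main-assumption} to the active-phase process and solving the resulting quadratic inequality against the bias yields $\E_{a\sim p_m}[(\hatf_{m+1}-\hatf^*)^2]\lesssim \comp(\F)\ln^{\rho'}(N_m)\ln(1/\zeta)/N_m^{\rho}+\E_{a\sim p_m}[(\hatf^*-f^*)^2]$, the last term being $\ordO(Kb)$ by a change of measure to $\Unif(\A)$. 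This is the quantitative form of the claim that, unlike the unconstrained ERM of \Cref{sec:problem_with_realizability}, our estimates cannot drift away from $\hatf^*$.

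The per-epoch regret then follows the inverse-gap-weighting argument of FALCON, but with $\hpi^*$ (the policy induced by $\hatf^*$) as the comparator rather than $\pi^*$. The IGW lemma bounds the instantaneous active-phase regret of $p_m$ against $\hpi^*$ by $\ordO(K/\gamma_m)$ plus $\gamma_m$ times a prediction error of $\hatf_m$ controlled as in the previous step, and \Cref{lem:boundVopt} certifies that the root-mean-squared gap between $\hatf_{m+1}$ and $\hatf^*$ shrinks at the rate at which $\gamma_m$ grows, so the kernel $p_{m+1}$ still covers $\hpi^*$ well enough for the next epoch; \Cref{lem:boundVall} then shows this estimability is monotone across epochs, so errors do not accumulate. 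Summing $\ordO(K/\gamma_m)+\gamma_m\cdot(\text{variance-type term})$ over epochs with the schedule $\tau_{m+1}=2\tau_m$ and the chosen $\gamma_m$ telescopes to the first term of \eqref{eq:main-theorem}; summing the $b$-dependent contributions, where the exponent on $\epsilon$ is produced because the neighborhood radius above is controlled only on the $\epsilon$-fraction of passive data, gives $KT\sqrt{b/\sqrt{\epsilon^{\rho}}}$; the $\epsilon$-fraction of uniformly random steps contributes at most $\epsilon T$; and the value gap between $\hpi^*$ and $\pi^*$, bounded by $\ordO(T\sqrt{Kb})$ via Cauchy--Schwarz over contexts and actions, is dominated by the $b$-term. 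Combining with the martingale slack of Step 1 yields \eqref{eq:main-theorem}.

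The main obstacle is Step 3: closing the circular dependence between the estimate $\hatf_{m+1}$ and the kernel $p_{m+1}$ it induces while keeping the misspecification overhead at $\sqrt{b/\sqrt{\epsilon^{\rho}}}$ and not something larger. This is precisely where the shrinking constraint sets earn their keep — because $\F'_m$ contracts at the passive-sample-size rate and contains both $\hatf_{m+1}$ and $\hatf^*$, \Cref{lem:boundVopt} and \Cref{lem:boundVall} go through with the abstract rates in place of the $d/n$ rate. A secondary point to be careful about is that \Cref{ass:main-assumption} is phrased for a fixed convex $\F'$, whereas $\F'_m$ is random; this is handled by invoking the assumption at $\F'=\F$, whose conclusion applies to every subset, data-dependent ones included. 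The remaining steps are the routine FALCON bookkeeping, re-derived with the exponents $\rho$ and $\rho'$ carried through.
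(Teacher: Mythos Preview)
Your skeleton is right and Steps 1--2 line up with the paper: the good event $\event$ is defined via \Cref{ass:main-assumption} plus a martingale bound, and \Cref{lem:ConReg} delivers exactly the two estimation guarantees you describe, with the passive-phase bound scaling like $(\epsilon(\tau_m-\tau_{m-1}))^{-\rho}$ and the active-phase bound like $B+(\tau_m-\tau_{m-1})^{-\rho}$.

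Step 3 is where your description diverges from the paper and, as written, would not yield the stated $b$-dependence. You propose the per-step bound ``$K/\gamma_m+\gamma_m\cdot(\text{prediction error})$'' and then say the $b$-contributions sum to $KT\sqrt{b/\sqrt{\epsilon^{\rho}}}$. But the naive IGW decomposition produces a bias term $\gamma_m\cdot B$, which \emph{grows} with $m$; summed over the horizon this gives something like $b\,T^{1+\rho/2}$, not $T\sqrt{b}$. The paper avoids this by never multiplying $B$ by $\gamma_m$. Instead, \Cref{lem:reg-est-accuracy} gives a per-policy bound $|R_{\hatf_{m+1}}(\pi)-R(\pi)|\le\sqrt{V(p_m,\pi)}\big(\sqrt{B}+\sqrt{K}/(2\gamma_{m+1})\big)$, and the two-sided relation between $\Reg(\pi)$ and $\Reg_{\hatf_m}(\pi)$ in \Cref{lem:policy-reg-bound} is established by \emph{induction over epochs}, using \Cref{lem:boundVopt} and \Cref{lem:boundVall} inside the inductive step to control $V(p_m,\pi_{\hatf^*})$ and to pass $V(p_m,\pi)$ to $V(p_{m+1},\pi)$. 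Only after this induction do you average over $Q_m$, at which point $\sum_\pi Q_m(\pi)\sqrt{V(p_m,\pi)}\le\sqrt{K}$ converts the $\sqrt{V\cdot B}$ term into $\sqrt{KB}$, and the $\pi_{\hatf^*}$ piece contributes the extra $1/\sqrt{\epsilon^{\rho}}$. Your proposal references \Cref{lem:boundVopt} and \Cref{lem:boundVall} and is aware of the circularity, but does not identify this induction, and the ``$\gamma_m\times$ error'' accounting you give cannot by itself produce a $\sqrt{b}$-type bias.

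One smaller correction: your ``secondary point'' handling of the random constraint set is not what the paper does, and the justification you give (that the conclusion for $\F'=\F$ ``applies to every subset'') is not correct as stated, since the $\eta$-minimal sets in \Cref{ass:main-assumption} are defined relative to the minimizer over $\F'$. The paper simply invokes \Cref{ass:main-assumption} with $\F'=\F'_m$ directly; this is legitimate because, conditional on the passive sample $S'_m$ (hence on $\F'_m$) and on $p_m$, the active sample $S_m$ is still i.i.d.\ from the kernel $p_m$, so the assumption applies to the fixed (after conditioning) convex set $\F'_m$.
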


In \Cref{app:kol-fast-rates}, we provide convenient Lemmas to prove \Cref{ass:main-assumption} for various convex classes $\F$. These Lemmas directly follow from results in \cite{koltchinskii2011oracle}. In fact, \Cref{thm:linear-theorem} is implied by \Cref{thm:main-theorem} and results stated in \Cref{app:kol-fast-rates}. We now go over similar results that follow from \Cref{thm:main-theorem} and \Cref{app:kol-fast-rates}. 


Example 1: Suppose $\F$ is convex and has VC-subgraph dimension $V$. Then with probability $1-\delta$, $\GeneralizedFalcon$ guarantees the following bound on the regret $R_T$: 
$$\ordO\Bigg(\sqrt{KTV\ln\bigg(\frac{T}{V}\bigg)\ln\bigg(\frac{\ln(T)}{\delta}\bigg)} + KT\sqrt{\frac{b}{\sqrt{\epsilon}}} + \epsilon T \Bigg).$$

Example 2: Suppose $\F$ is a convex hull of class with VC-subgraph dimension $V$. Then with probability $1-\delta$, $\GeneralizedFalcon$ guarantees the following bound on the regret $R_T$:
$$\ordO\Bigg(\sqrt{KT^{\frac{2+3V}{2+2V}}V^{\frac{2+V}{2+2V}}\ln\bigg(\frac{\ln(T)}{\delta}\bigg)} + KT\sqrt{\frac{b}{\sqrt{\epsilon}}} + \epsilon T \Bigg).$$

Example 3: Suppose for some $\rho\in(0,1)$, the empirical entropy is bounded by $\ordO(\epsilon^{-2\rho})$ for all empirical distributions. Then with probability $1-\delta$, $\GeneralizedFalcon$ guarantees the following bound on the regret $R_T$:
$$\ordO\Bigg(\sqrt{KT^{\frac{1+2\rho}{1+\rho}}\ln\bigg(\frac{\ln(T)}{\delta}\bigg)} + KT\sqrt{\frac{b}{\sqrt{\epsilon^{1/(1+\rho)}}}} + \epsilon T \Bigg). $$



\section{Discussion}
\label{sec:discussion}

This paper's contribution is twofold. First, to illustrate how algorithms that rely on realizability may incur unexpected regret when this assumption is violated. We saw in Section \ref{sec:problem_with_realizability} that one can construct examples where regret is large even in relatively benign settings. Second, to propose a flexible family of computationally tractable algorithm that are less sensitive to realizability. Our analysis  in Section~\ref{sec:main_results} characterizes the behavior of regret under misspecification and gives us insight into the bias-variance trade-off in contextual bandits. 

In terms of algorithm design, our proposed algorithm $\GeneralizedFalcon$ inherits the computational elegance of realizability based approaches like FALCON. In particular, a single estimator gives you an implicit distribution over policies via the action selection kernel and bypasses the need to explicitly construct a distribution over policies. Our key insight is that by using a constrained regression estimator, we can make this approach robust to misspecification at the expense of some additional regret in the passive phase. 

We believe this work represents an important step towards the development of contextual bandit algorithms that are robust to misspecification. Natural extensions include the following.

\paragraph{Adapting to misspecification} The performance of $\GeneralizedFalcon$ depends on the input parameter ($\epsilon$). One natural way to address this deficiency may be to initialize multiple base algorithms with different choices of $\epsilon$ and use a master algorithm \cite{agarwal2017corralling} to choose the best performing base algorithm. In fact, the recent work of \cite{foster2020adapting} take this approach to adapt to unknown misspecification. The idea of using a master algorithm to adapt to unknown misspecification also appears in \cite{pacchiano2020model}, they use the algorithm in \cite{zanette2020learning} as a base algorithm to adapt to an unknown uniform misspecification error for linear contextual bandits. The final drawback is that we use naive uniform sampling for the passive phase. One may be able to achieve a tighter regret bound by using a more sophisticated exploration scheme for the passive phase.

\paragraph{More general classes} $\GeneralizedFalcon$ requires the model class $\F$ to be convex. We use the convexity of $\F$ in several ways. When $\mathcal{F}$ is convex and has finite VC-dimension we get that $\rho=1$ in \Cref{ass:main-assumption}. Convexity of $\mathcal{F}$ also allows us to solve the constrained regression oracle using only an offline weighted regression oracle (Section D). More importantly, convexity of $\mathcal{F}$ helps ensure that there is a unique best in-class estimator ($\hat{f}^*$) up to evaluation on a non-zero measure set, which in turn helps ensure that the active policy ($\pi_{\hat{f}_m}$) converges to our target policy ($\pi_{\hatf^*}$); see Lemmas \ref{lem:ConReg}, \ref{lem:boundVopt}, and \ref{lem:boundVall}. As many online regression algorithms rely on convexity, one may expect this drawback to implicitly hold for algorithms that rely on online regression oracles such as \cite{foster2020beyond, foster2020adapting}.

To close, we note that these results hint at the possibility of exploiting the bias-variance trade-off in tractable contextual bandits to perform good model selection. This would be an interesting direction for future work.

\section{Acknowledgments}

We are grateful for the generous financial support provided by the Sloan Foundation, Schmidt Futures and the Office of Naval Research grant N00014-19-1-2468. SKK acknowledges generous support from the Dantzig-Lieberman Operations Research Fellowship.

\bibliography{ref}
\bibliographystyle{apalike}

\clearpage
\appendix
\onecolumn
%
%

\section{Detailed setup}
\label{app:detailed-setup}

In \Cref{app:proofs} we will prove the claims in the body of the paper. This requires us to establish some additional notation, which we do in \Cref{app:preliminaries}. Most of these symbols and definitions were used in the original FALCON paper \cite{simchi2020bypassing}. The results in \Cref{app:kol-fast-rates} use notation and definitions from \cite{koltchinskii2011oracle} and are stated within \Cref{app:kol-fast-rates}. \Cref{app:main-assumption} states the main assumption used in \Cref{thm:main-theorem}, and \Cref{app:algorithm} describes the general version of $\GeneralizedFalcon$.

\subsection{Preliminaries}
\label{app:preliminaries}

To start, let $\Gamma_{t}$ denote the set of observed data points up to and including time $t$. That is 
\begin{align}
    \label{eq:history}
    \Gamma_{t} := \{(x_s, a_s, r_s(a_s))\}_{s=1}^{t}
\end{align}

Recalling the text, an ``action selection kernel'' $p$ gives us the probability $p(a|x)$ of selecting an arm~$a$ given a context $x$, and a ``policy'' is a deterministic mapping from contexts to actions. Let $\Psi = \A^{\Xscript}$ denote the universal policy space containing all possible policies. Following Lemma 3 in \cite{simchi2020bypassing}, given any action selection kernel $p$ we can construct a unique product probability measure on $\Psi$, given by:
\begin{equation}
    \label{eq:q-product}
    Q_p(\pi) := \prod_{x\in\Xscript} p(\pi(x)|x),
\end{equation}
and it satisfies the following property
\begin{align}
    \label{eq:connecting-p-and-Q}
  p(a|x) = \sum_{\pi\in\Psi} \I\{\pi(x)=a\}Q_p(\pi).
\end{align}
Property \eqref{eq:connecting-p-and-Q} establishes a duality between action selection kernels, which are used in practice in the algorithm implementation, and the probability distribution \eqref{eq:q-product}, which is a theoretical object that can be used to simplify the proofs below. For short-hand, we let $Q_m \equiv Q_{p_m}$ denote the product probability measure on $\Psi$ induced by the action selection kernel $p_m$ defined in \eqref{eq:action_kernel}. 

Now, for any action selection kernel $p$ and any policy $\pi$, we let $V(p,\pi)$ denote the expected inverse probability. 
\begin{equation}
    \label{eq:decisional-divergence}
    V(p,\pi):=\E_{x\sim D_{\Xscript}}\bigg[\frac{1}{p(\pi(x)|x)}\bigg]    
\end{equation}
One can interpret \eqref{eq:decisional-divergence} as a measure of average divergence between $p(\cdot|x)$ and $\pi(x)$. \cite{simchi2020bypassing} refer to this as the decisional divergence between the randomized policy $Q_p$ and deterministic policy $\pi$.

Given an outcome model $f$ and policy $\pi$, we can define the expected instantaneous reward of the policy $\pi$ with respect to the model $f$ as
\begin{equation}
    \label{eq:instantaneous-reward}
    R_f(\pi) := \E_{x \sim D_\Xscript}[f(x, \pi(x))].
\end{equation}
When there is no possibility of confusion, we will write $R(\pi)$ to mean $R_{f^*}(\pi)$, the reward with respect to the true model $f^*$.. The policy $\pi_f$ induced by the model $f$ is defined by setting $\pi_f(x) := \arg\max_a f(x, a)$ for every $x$. Note that this policy has the highest instantaneous reward with respect to the model $f$, that is $\pi_f=\arg\max_{\pi\in\Psi} R_f(\pi)$.  We can also define the expected instantaneous regret with respect to the outcome model $f$ as
\begin{equation}
    \label{eq:instantaneous-regret}
    \Reg_f(\pi) := \E_{x \sim D_\Xscript}[f(x,\pi_f(x)) - f(x, \pi(x))].
\end{equation}
When there is no possibility of confusion, we will write $\Reg(\pi)$ to mean $\Reg_{f^*}(\pi)$, the regret with respect to the true model $f^*$.

Recall that we define $\hatf^*$ as the best in-class approximation to the true outcome model when actions are sampled uniformly at random. Also recall that we define $b$ as the approximation error or mean squared difference between $\hatf^*$ and $f^*$ when actions are sampled uniformly at random. We now define $B$ to be the largest mean squared difference between $\hatf^*$ and $f^*$ under any action selection kernel. That is, \footnote{\Cref{lem:boundB} bounds $B$ with $Kb$.}
\begin{equation}
    \label{eq:capb}
    B:=\max_p \E_{x\sim D_{\Xscript}}\E_{a\sim p(\cdot|x)} [(\hatf^*(x,a)-f^*(x,a))^2] = \E_{x\sim D_{\Xscript}}[\max_a(\hatf^*(x,a)-f^*(x,a))^2].     
\end{equation}

\subsection{Main assumption}
\label{app:main-assumption}

\begin{assumption}
\label{ass:main-assumption}

Suppose that our outcome model $\F$ satisfies the following property. There exists constants  $C>0$, $\rho\in(0,1]$, $\rho'\in[0,\infty)$ such that for any action selection kernel $p$,  any convex subset $\F' \subset \F$, any natural number $n$, any $\zeta \in (0, 1)$, and any $\eta > C\ln^{\rho'}(n)\ln(1/\zeta)\comp(\F) / n^{\rho}$, the following holds with probability at least $1-\zeta$:
\begin{equation}
    \label{eq:main-assumption}
    \F'(\eta, p) \subseteq \widehat{\F'}(3\eta/2, \Stilde)
        \qquad \text{and} \qquad
    \widehat{\F'}(\eta, \Stilde) \subseteq  \F'(2\eta, p),
\end{equation}
where the $\eta$-minimal set is defined as
\begin{equation}
    \label{eq:minimal-set-population}
    \mathcal{F'}(\eta, p) := \left\{ f \in \F' \, \bigg| \, 
             \E_{(x_i,r_i)\sim D}\E_{a_i\sim p(\cdot|x)}[ (f(x_i, a_i) - r_i(a_i))^2 ]
            \leq 
        \min_{\tilde{f} \in \F'}
            \E_{(x_i,r_i)\sim D}\E_{a_i\sim p(\cdot|x)}[ (\tilde{f}(x_i, a_i) - r_i(a_i))^2 ]
            + \eta
    \right\},
\end{equation}
and the \emph{empirical} $\eta$-\emph{minimal set} is defined as 
\begin{equation}
    \label{eq:minimal-set-empirical}
    \widehat{\F'}(\eta, \Stilde) := \left\{ f \in \F \, \bigg| \, 
        \frac{1}{n} \sum_{i=1}^{n} (f(x_i, a_i) - r_i(a_i))^2 
            \leq 
        \min_{\tilde{f} \in \F'} 
            \frac{1}{n} \sum_{i=1}^{n} (\tilde{f}(x_i, a_i) - r_i(a_i))^2    
            + \eta
    \right\}.
\end{equation}
and where the data $\Stilde \equiv (x_i, a_i, r_i(a_i))_{i=1}^{n}$ are drawn independently and identically from $x_i \sim \mathcal{D}_{\Xscript}$, $a_i | x_i \sim p(\cdot | x_i)$ and $r_i \sim \mathcal{D}_{r_i|x_i, a_i}$, and the expectations are taken with respect to these distributions.
\end{assumption}

\subsection{Algorithm}
\label{app:algorithm}


The general version of our algorithm for general classes of outcome models $\F$ requires three modifications. Note the constants $C$, $\rho$, and $\rho'$ mentioned below are rate terms from \Cref{ass:main-assumption}, $C_3 := 1/(4C_5)$  (see \Cref{lem:reg-est-accuracy}), and $C_5 := 2C\times 4^{\rho} \times (2+\ln(12))$ (see \Cref{lem:ConReg}).

First, the epoch schedule needs to satisfy $\tau_0 = 0$, $\tau_1 \geq 4$ and for subsequent epochs we set $\tau_{m+1}= 2\tau_m$.

Second, the parameter $\gamma_t$ is set to $\gamma_1 = 1$ and 
\begin{equation}
    \label{eq:gamma}
    \gamma_m = \sqrt{\frac{C_3K(\tau_{m-1}-\tau_{m-2})^{\rho}}{\ln^{\rho'}(\tau_{m-1}-\tau_{m-2})\ln((m-1)/\delta)\comp(\F)}}.
\end{equation}

Third and finally, the constraint set $\F_m'$ consists of the set of outcome models $f \in \F$ such that
\begin{align}
    \label{eq:generalized_constraint}
    \F'_m := \left\{ f \in \F \,\, \bigg| \,\,
        \frac{1}{|S_m'|}\sum_{S_m'}
            (f_{m+1}(x,a)-r(a))^2 \leq 
            \alpha_m + 
            \frac{C_1\ln^{\rho'}(|S_m'|)\ln(1/\delta')\comp(\F)}{|S_m'|^{\rho}} 
\right\},
\end{align}
where $\alpha_m := \frac{1}{|S_m'|}\min_{g\in\F} \sum_{S_m'} (g(x,a)-r(a))^2$, $\delta' = \delta/(12m^2)$, and $C_1 = 3C/2$ (see \Cref{lem:ConReg}).

\begin{algorithm}[H]
  \caption{$\GeneralizedFalcon$}
  \label{alg:generalized-falcon}
  \textbf{input:} epoch schedule $\tau_1\geq 4$, confidence parameter $\delta$, and forced exploration parameter $\epsilon$.
  \begin{algorithmic}[1] 
  \State Set $\tau_0 = 0$, and $\tau_{m+1} = 2\tau_m$ for all $m\geq 1$.
  \State Let $\hatf_1 \equiv 0$.
  \For{epoch $m=1,2,\dots$}
    \State Let $\gamma_m=\sqrt{\frac{C_3K(\tau_{m-1}-\tau_{m-2})^{\rho}}{\ln^{\rho'}(\tau_{m-1}-\tau_{m-2})\ln((m-1)/\delta)\comp(\F)}}$ (for epoch 1, $\gamma_1=1$).
    \For{round $t=\tau_{m-1}+1,\dots, \tau_{m} - \ceil*{ \epsilon(\tau_m - \tau_{m-1})}$ }
      \State Observe context $x_t$, let $\hata_t=\arg\max_{a\in\A}\hatf_m(x_t,a)$, and define:
        \begin{align*}
          p_t(a):=
          \begin{cases}
              \frac{1}{K+\gamma_m(\hatf_m(x_t,\hata_t) - \hatf_m(x_t,a))}, &\text{for all $a\neq\hata_t$}\\
              1 - \sum_{a'\neq\hata_t} p(a'|x),                             &\text{for $a=\hata_t$}
              \end{cases}
        \end{align*}
      \State Sample $a_t \sim p_t(\cdot)$ and observe $r_t(a_t)$.
    \EndFor
    \For{round $t=\tau_{m} - \ceil*{ \epsilon(\tau_m - \tau_{m-1})}+1,\dots,\tau_{m}$}
        \State Observe context $x_t$, sample $a_t$ uniformly at random from $\A$, and observe $r_t(a_t)$.
    \EndFor
    \State Let:
    \begin{align*}
    	S_m &=\{(x_t,a_t,r_t(a_t))\}_{t=\tau_{m-1}+1}^{\tau_{m} - \ceil*{ \epsilon(\tau_m - \tau_{m-1})}} \\
      S_m'&=\{(x_t,a_t,r_t(a_t))\}_{t=\tau_{m} - \ceil*{ \epsilon(\tau_m - \tau_{m-1})} + 1}^{\tau_m}.
    \end{align*}
    \State Compute $\hatf_{m+1}$ by solving
        
     \begin{equation}
      \label{eq:generalized_constrained_problem}
      \begin{aligned}
      \min_{f\in\F} \quad &\sum_{(x,a,r(a))\in S_m} (f(x,a)-r(a))^2 \\
      \textrm{s.t.} \quad & \;\;\; f \in \F'_m.
      \end{aligned}
     \end{equation}
    where $\F'_m$ is defined as in \eqref{eq:generalized_constraint}.
  \EndFor
  \end{algorithmic}
\end{algorithm}

\section{Proofs}
\label{app:proofs}

The goal of this section is to present our proof of Theorem \ref{thm:main-theorem}. Section \ref{app:overview-of-results} gives a brief overview of the argument. Section \ref{app:main-assumption} restates the main assumption. Sections \ref{app:bounds-on-best-predictor}-\ref{app:bounding-true-regret} prove auxiliary Lemmas, and finally Section \ref{app:proof-of-main-theorem} concludes with a proof of the theorem. A small, more technical, portion of the argument is deferred to Section \ref{app:kol-fast-rates}.

\subsection{Overview of the proof for \Cref{thm:main-theorem}}
\label{app:overview-of-results}

For convenience, here is an informal, abridged version of the argument used in the proofs. We hope the reader will find it useful to navigate the results that follow.

\begin{itemize}
    \item First of all, during the passive phase we always incur $\epsilon T$ regret. For the remainder, let's consider the regret incurred during periods occurring in the active phase of each epoch.
    
    \item The cumulative regret incurred across the active phases  will be close to the sum of its conditional expectations at each period,
    \begin{equation*}
        \sum_{t \in \mathcal{T}_{\text{active}}} r_t(\pi^*(x))-r_t(a_t) \approx 
        \sum_{t \in \mathcal{T}_{\text{active}}}
        \E_{x_t,r_t,a_t}[r_t(\pi^*(x))-r_t(a_t)|\Gamma_{m(t)-1}] \qquad \text{w.h.p.},
    \end{equation*}
    so we only need to bound these conditional expectations.
    
    \item By \Cref{lem:conditional-reward}, the conditional expectation of instantaneous regret at period $t$ in the active phase of epoch $m$ can be rewritten in terms of the probability measure $Q_m$ over policies,
    \begin{equation*}
        \E_{x_t,r_t,a_t}[r_t(\pi^*(x))-r_t(a_t)|\Gamma_{m(t)-1}] =  \sum_{\pi\in\Psi}Q_m(\pi)\Reg_{f^*}(\pi).
    \end{equation*}
    
    \item By design, our method will produce a sequence of actions such that the \emph{estimated} regret $\Reg_{\hatf_m}(\pi)$ is small for the policies that receive high probability under $Q_m$ (see \Cref{lem:QmRegEst}). In order to show that the \emph{expected} regret $\Reg_{f^*}(\pi)$ is also small, we need to show that the two are ``close'', at least for policies that receive high probability under $Q_m$.
    
    \item Naturally the difference between expected and estimated regret depends on how closely the sequence $\hatf_m$ approximates $f^*$. In \Cref{lem:ConReg}, we characterize this approximation as a function of two objects: the expected distance between $\hatf_m$ and the best in-class approximation $\hatf^*$, and the distance between $\hatf^*$ and the true model $f^*$. The former decreases at a rate characterized by $1/\gamma_m$ due to properties of our constrained regression problem. The latter is upper bounded by $B$. Therefore,
    \begin{equation*}
        \begin{aligned}
         \E_{x\sim D_{\Xscript}}\E_{a\sim \Unif(\A)}[(\hatf_{m+1}(x,a)-\hatf^*(x,a))^2] &\lesssim \frac{1}{\epsilon^{\rho} \gamma_m}  \\
    \E_{x\sim D_{\Xscript}}\E_{a\sim p_m(\cdot|x)}[(\hatf_{m+1}(x,a)-f^*(x,a))^2] &\lesssim B +  \frac{1}{\gamma_m}.
        \end{aligned}
    \end{equation*}
    
    \item In \Cref{lem:reg-est-accuracy}, we extend these results to bound on the approximation error for any policy $\pi$, 
       \begin{equation*}
        \left| \E_{x \sim \mathcal{D}_{\Xscript}}[\hatf_{m+1}(x, \pi(x)) - f^*(x, \pi(x))] \right| 
        \lesssim
        \sqrt{V(p_m,\pi)} \left( 
            \sqrt{B} + \frac{\sqrt{K}}{\gamma_{m}} 
        \right).
    \end{equation*}
    
    \item Lemmas \ref{lem:boundVopt} and \ref{lem:boundVall} characterize the behavior of the object $V(p_m, \pi)$. In \Cref{lem:policy-reg-bound} we use these results to show that estimated and expected regret satisfy the following relation, which formalized the notion of ``closeness'' between the two:
    \begin{equation*}
        \begin{aligned}
            \Reg_{f^*}(\pi) 
                \lesssim 
                \Reg_{\hatf_{m}}(\pi) + \frac{K}{\gamma_m} +     \sqrt{\frac{KB}{\sqrt{\epsilon^{\rho}}}} +     \sqrt{V(p_m,\pi)B} \\
            \Reg_{\hatf_{m}}(\pi) 
                \lesssim
                \Reg(\pi) + \frac{K}{\gamma_m} +     \sqrt{\frac{KB}{\sqrt{\epsilon^{\rho}}}} + \sqrt{V(p_m,\pi)B}.
        \end{aligned}
    \end{equation*}
    
    \item \Cref{lem:QmRegTrue} concludes that the average expected regret suffered during any point in the active phase is bounded by
    \begin{equation*}
        \sum_{\pi \in \Psi} Q_m(\pi)\Reg(\pi)
            \lesssim \frac{K}{\gamma_m} + 
            \sqrt{\frac{KB}{\sqrt{\epsilon^{\rho}}}}.
    \end{equation*}

    \item In subsection \ref{app:proof-of-main-theorem} we put all of these results together to prove \Cref{thm:main-theorem}.
\end{itemize}






\subsection{Bounds on best predictor}
\label{app:bounds-on-best-predictor}

In this subsection we provide basic bounds on terms involving the best predictor. We start by bounding the empirical mean square error between the best predictor ($\hatf^*$) and the true model ($f^*$) under any action selection kernel, see \Cref{lem:boundB}. We then use this to bound the regret of the policy induced by the best predictor ($\pi_{\hatf^*}$), see \Cref{lem:reg-of-best-pred}. Hence indicating that this policy is a reasonable policy to try to converge to.

\begin{restatable}[Bounding $B$]{lemma}{lemboundB}
\label{lem:boundB}
For any action selection kernel $p$, we then have that:
$$ \E_{x\sim D_{\Xscript}}\E_{a\sim p(\cdot|x)}[(\hatf^*(x,a)-f^*(x,a))^2] \leq B \leq Kb.  $$
\end{restatable}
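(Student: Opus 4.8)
The plan is to prove the two inequalities separately, both of which reduce to elementary manipulations once the right-hand object $B$ is rewritten pointwise in $x$. First I would establish the chain $\E_{x}\E_{a\sim p(\cdot|x)}[(\hatf^*(x,a)-f^*(x,a))^2]\le B$, which is immediate: by \eqref{eq:capb}, $B$ is the maximum of exactly this quantity over all action selection kernels $p$, so any particular $p$ attains at most $B$. I would also note, for use in the second inequality, that the equality $\max_p\E_{x\sim D_{\Xscript}}\E_{a\sim p(\cdot|x)}[g(x,a)]=\E_{x\sim D_{\Xscript}}[\max_a g(x,a)]$ claimed in \eqref{eq:capb} holds for the nonnegative function $g(x,a):=(\hatf^*(x,a)-f^*(x,a))^2$: for each fixed context $x$ the inner expectation is a convex combination of the values $g(x,a)$, hence at most $\max_a g(x,a)$, and this upper bound is achieved by the (context-dependent) kernel that places all mass on $\arg\max_a g(x,a)$; since action selection kernels are allowed to depend arbitrarily on $x$, the pointwise optimum is globally feasible.

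For the second inequality $B\le Kb$, I would start from the pointwise form $B=\E_{x\sim D_{\Xscript}}[\max_a(\hatf^*(x,a)-f^*(x,a))^2]$ and bound the maximum over the $K$ actions by the sum, using nonnegativity of each term:
\begin{align*}
B=\E_{x\sim D_{\Xscript}}\Big[\max_a(\hatf^*(x,a)-f^*(x,a))^2\Big]
&\le \E_{x\sim D_{\Xscript}}\Big[\sum_{a\in\A}(\hatf^*(x,a)-f^*(x,a))^2\Big]\\
&= K\,\E_{x\sim D_{\Xscript}}\E_{a\sim\Unif(\A)}[(\hatf^*(x,a)-f^*(x,a))^2]
= Kb,
\end{align*}
where the middle equality just rewrites the uniform average over $\A$ as $\tfrac1K\sum_a$, and the last equality is the definition \eqref{eq:b} of $b$. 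Combining the two displays gives the statement.

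There is essentially no hard step here: the only thing to be careful about is justifying the pointwise-optimization claim underlying the alternative expression for $B$ in \eqref{eq:capb} (i.e.\ that the max over kernels commutes with $\E_x$), but this is routine since kernels may depend freely on the context. Note also that the argmin property defining $\hatf^*$ in \eqref{eq:fhatstar} is not needed for this lemma — the bounds hold for any fixed function played against the corresponding $b$ — so I would not invoke it.
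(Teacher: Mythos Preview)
Your proposal is correct and matches the paper's own proof essentially step for step: the first inequality is the definition of $B$ as a supremum over kernels, and the second bounds the per-context maximum (equivalently, any convex combination) of the nonnegative terms $(\hatf^*(x,a)-f^*(x,a))^2$ by their sum over $a$, then rewrites that sum as $K$ times the uniform average to recover $Kb$. Your extra remark justifying the equality in \eqref{eq:capb} (that the max over kernels equals the expected pointwise max) is a nice bit of care the paper leaves implicit, but otherwise the arguments are identical.
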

\begin{proof}
We get the first inequality from the definition of $B$:
\begin{align*}
    \E_{x\sim D_{\Xscript}}\E_{a\sim p(\cdot|x)}[(\hatf^*(x,a)-f^*(x,a))^2] \leq  \max_{p'} \E_{x\sim D_{\Xscript}}\E_{a\sim p'(\cdot|x)} [(\hatf^*(x,a)-f^*(x,a))^2] = B.
\end{align*}
For any context $x\in\Xscript$, note that:
\begin{align*}
    \E_{a\sim p'(\cdot|x)}[(\hatf^*(x,a)-f^*(x,a))^2] \leq & \sum_{a\in\A} (\hatf^*(x,a)-f^*(x,a))^2.
\end{align*}
Now, taking expectations on both sides gives us the second inequality of \Cref{lem:boundB}:
\begin{align*}
    B \leq \sum_{a\in\A} \E_{x\sim D_{\Xscript}} [(\hatf^*(x,a)-f^*(x,a))^2] = Kb.
\end{align*}
\end{proof}

\begin{lemma}[Regret of the policy induced by the best predictor]
\label{lem:reg-of-best-pred}
  We have the following bound on the regret of $\pi_{\hatf^*}$:
  \begin{align*}
      \Reg(\pi_{\hatf^*}) := R(\pi_{f^*}) - R(\pi_{\hatf^*}) \leq 2\sqrt{B}.
  \end{align*}
\end{lemma}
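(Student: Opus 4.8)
The claim is that $\Reg(\pi_{\hatf^*}) := R(\pi_{f^*}) - R(\pi_{\hatf^*}) \leq 2\sqrt{B}$, where regret and reward are taken with respect to the true model $f^*$. The natural approach is to insert $\hatf^*$ as an intermediary: since $\pi_{\hatf^*}$ is by definition the maximizer of $R_{\hatf^*}$, we have $R_{\hatf^*}(\pi_{f^*}) \leq R_{\hatf^*}(\pi_{\hatf^*})$. So I would write
\begin{align*}
  R(\pi_{f^*}) - R(\pi_{\hatf^*})
  &= \Big(R_{f^*}(\pi_{f^*}) - R_{\hatf^*}(\pi_{f^*})\Big)
   + \Big(R_{\hatf^*}(\pi_{f^*}) - R_{\hatf^*}(\pi_{\hatf^*})\Big)
   + \Big(R_{\hatf^*}(\pi_{\hatf^*}) - R_{f^*}(\pi_{\hatf^*})\Big)\\
  &\leq \Big(R_{f^*}(\pi_{f^*}) - R_{\hatf^*}(\pi_{f^*})\Big)
   + \Big(R_{\hatf^*}(\pi_{\hatf^*}) - R_{f^*}(\pi_{\hatf^*})\Big),
\end{align*}
using that the middle bracket is $\leq 0$.

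**Bounding each term.** Each of the two surviving brackets is of the form $\E_{x\sim D_\Xscript}[\hatf^*(x,\pi(x)) - f^*(x,\pi(x))]$ (up to sign) for a fixed policy $\pi$. I would bound $|\E_{x\sim D_\Xscript}[\hatf^*(x,\pi(x)) - f^*(x,\pi(x))]|$ by $\sqrt{\E_{x\sim D_\Xscript}[(\hatf^*(x,\pi(x)) - f^*(x,\pi(x)))^2]}$ via Jensen (or Cauchy–Schwarz), and then observe that for any deterministic policy $\pi$, the squared error at the arm $\pi(x)$ is at most the max over arms, so
\[
  \E_{x\sim D_\Xscript}\big[(\hatf^*(x,\pi(x)) - f^*(x,\pi(x)))^2\big]
  \leq \E_{x\sim D_\Xscript}\big[\max_a (\hatf^*(x,a) - f^*(x,a))^2\big] = B,
\]
where the last equality is the characterization of $B$ in \eqref{eq:capb}. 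Hence each bracket is bounded in absolute value by $\sqrt{B}$, and summing the two gives $2\sqrt{B}$, completing the proof.

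**Main obstacle.** There is no real obstacle here — the argument is a standard "sandwich" estimate and the only thing to be careful about is matching the bound on the per-policy error to the definition of $B$ rather than $b$; using $B$ (the worst-case-over-kernels mean squared error, which by \eqref{eq:capb} equals $\E_x[\max_a(\cdot)^2]$) is exactly what makes the bound on a deterministic policy's error go through cleanly, since a deterministic policy picks out one arm per context and is thus dominated by the pointwise max. One could alternatively phrase the per-policy bound as $\sqrt{B}$ directly by noting that $\pi$ corresponds to a degenerate action selection kernel and invoking the first inequality of \Cref{lem:boundB}, but unwinding it through \eqref{eq:capb} is cleaner. If a looser constant were acceptable one could also just use $\sqrt{Kb}$ in place of $\sqrt{B}$ via \Cref{lem:boundB}, but the stated bound $2\sqrt{B}$ is the sharp one.
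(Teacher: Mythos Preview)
Your proof is correct and is essentially the same argument as the paper's: both establish $|R_{\hatf^*}(\pi) - R(\pi)| \leq \sqrt{B}$ for any policy $\pi$ via Jensen and the definition of $B$, then use the optimality of $\pi_{\hatf^*}$ for $R_{\hatf^*}$ to drop the middle term. The paper phrases it as a chain of inequalities while you phrase it as a three-term decomposition, but the content is identical.
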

\begin{proof}
  Note that, for any policy $\pi$, we have:
  \begin{align*}
      |R_{\hatf^*}(\pi) - R(\pi)|^2 = |\E_{x\sim D_{\Xscript}}[\hatf^*(x,\pi(x)) - f^*(x,\pi(x))]|^2 \leq \E_{x\sim D_{\Xscript}}[ (\hatf^*(x,\pi(x)) - f^*(x,\pi(x)))^2 ] \leq B.
  \end{align*}
  Where the last inequality follows from \Cref{lem:boundB}. Hence for any policy $\pi$, we have that:
  \begin{align*}
      R(\pi_{\hatf^*}) \geq R_{\hatf^*}(\pi_{\hatf^*}) - \sqrt{B} \geq R_{\hatf^*}(\pi) - \sqrt{B} \geq R(\pi) - 2\sqrt{B}.
  \end{align*}
  In particular, this implies that $\Reg(\pi_{\hatf^*}) := R(\pi_{f^*}) - R(\pi_{\hatf^*}) \leq 2\sqrt{B}$.
\end{proof}

\subsection{Properties of the action selection kernel}
\label{app:properties-action-selection-kernel}

In this subsection, we explore properties of the algorithm that directly follow from the definitions in \Cref{app:detailed-setup} and from the form of the action kernel used in the active phase of \GeneralizedFalcon. For this reason, all the properties stated here hold true for the Falcon algorithm as well. Except for \Cref{lem:QmRootV} and the lower bound in \Cref{lem:boundV}, all Lemmas in this subsection have been proved for Falcon and can be found in \cite{simchi2020bypassing}. We state and prove these Lemmas that we use for completeness and to show that they hold for \GeneralizedFalcon as well. We start with \Cref{lem:conditional-reward} which shows that the expected instantanious regret is equal to the regret of the randomized policy $Q_m$.

\begin{restatable}[Conditional expected reward]{lemma}{lemconditionalreward}
\label{lem:conditional-reward}
For any epoch $m\geq 1$ and time-step $t \geq 1$ in the active phase of epoch $m$, we have:
$$ \E_{x_t,r_t,a_t}[r_t(\pi^*(x))-r_t(a_t)|\Gamma_{t-1}] =  \sum_{\pi\in\Psi}Q_m(\pi)\Reg(\pi). $$
\end{restatable}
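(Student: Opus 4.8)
I would prove this by unwinding both sides of the claimed identity through the definitions of $Q_m$, $R_f(\pi)$, $\Reg_f(\pi)$, and the action selection kernel, and checking that they agree. The core observation is that the randomization in choosing $a_t$ given $x_t$ via the kernel $p_m(\cdot\mid x_t)$ is, in distribution, the same as first drawing a policy $\pi\sim Q_m$ and then playing $\pi(x_t)$; this is exactly the content of \eqref{eq:connecting-p-and-Q}, namely $p_m(a\mid x)=\sum_{\pi\in\Psi}\I\{\pi(x)=a\}Q_m(\pi)$.

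First I would fix an epoch $m\geq 1$ and a time-step $t$ in its active phase. Conditioning on $\Gamma_{t-1}$, the model $\hatf_m$ (and hence the kernel $p_m$) is determined, while $(x_t,r_t)$ is a fresh i.i.d.\ draw from $D$, independent of $\Gamma_{t-1}$, and $a_t\mid x_t\sim p_m(\cdot\mid x_t)$. So I would write
\begin{align*}
\E_{x_t,r_t,a_t}\!\left[r_t(\pi^*(x_t))-r_t(a_t)\mid\Gamma_{t-1}\right]
&= \E_{x_t}\!\left[\E_{r_t}[r_t(\pi^*(x_t))\mid x_t] - \sum_{a}p_m(a\mid x_t)\,\E_{r_t}[r_t(a)\mid x_t]\right]\\
&= \E_{x_t\sim D_{\Xscript}}\!\left[f^*(x_t,\pi^*(x_t)) - \sum_{a}p_m(a\mid x_t)f^*(x_t,a)\right],
\end{align*}
using $\E[r_t(a)\mid x_t]=f^*(x_t,a)$ and pulling the sum over $a$ outside the reward expectation. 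Since $\pi^*=\pi_{f^*}$ maximizes $R_{f^*}$, the first term is $R_{f^*}(\pi_{f^*})$, and I would recognize the whole expression as $\E_{x\sim D_{\Xscript}}[f^*(x,\pi_{f^*}(x))] - \E_{x\sim D_{\Xscript}}\sum_a p_m(a\mid x)f^*(x,a)$.

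Next I would substitute \eqref{eq:connecting-p-and-Q} into the second term: $\sum_a p_m(a\mid x)f^*(x,a)=\sum_a\sum_{\pi\in\Psi}\I\{\pi(x)=a\}Q_m(\pi)f^*(x,a)=\sum_{\pi\in\Psi}Q_m(\pi)f^*(x,\pi(x))$. Taking the expectation over $x$ and using that $Q_m$ is a probability measure on $\Psi$ (so $\sum_\pi Q_m(\pi)=1$) gives
\begin{align*}
\E_{x_t,r_t,a_t}\!\left[r_t(\pi^*(x_t))-r_t(a_t)\mid\Gamma_{t-1}\right]
&= \sum_{\pi\in\Psi}Q_m(\pi)\,\E_{x\sim D_{\Xscript}}\!\left[f^*(x,\pi_{f^*}(x)) - f^*(x,\pi(x))\right]\\
&= \sum_{\pi\in\Psi}Q_m(\pi)\,\Reg_{f^*}(\pi),
\end{align*}
which is the claim, recalling the shorthand $\Reg(\pi)=\Reg_{f^*}(\pi)$. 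The only mild subtlety — and the step I would be most careful about — is the interchange of the (possibly uncountable) sum over $\Psi$ with the expectation over $x$; this is justified because for each fixed $x$ the sum collapses to a single nonzero term (exactly one $\pi(x)$ equals any given $a$ under the product structure of $Q_m$), or one can appeal directly to \eqref{eq:connecting-p-and-Q} as an already-established identity without re-deriving it. Everything else is a routine application of the tower property, independence of the fresh draw from $\Gamma_{t-1}$, and the definitions, so I do not anticipate any real obstacle.
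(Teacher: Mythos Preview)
Your proof is correct and follows essentially the same route as the paper: condition on $\Gamma_{t-1}$ to reduce to an expectation over a fresh $(x,r)$ and $a\sim p_m(\cdot\mid x)$, replace rewards by $f^*$, substitute the identity \eqref{eq:connecting-p-and-Q} for $p_m(a\mid x)$, and collapse to $\sum_{\pi}Q_m(\pi)\Reg(\pi)$. Your parenthetical justification for the sum--expectation interchange is slightly garbled (it is the sum over $a$, not over $\pi$, that collapses for each fixed $\pi$ and $x$), but your fallback of simply invoking \eqref{eq:connecting-p-and-Q} as an established identity is exactly what the paper does, so there is no real gap.
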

\begin{proof}
Consider any epoch $m \geq 1$ and time-step $t
\geq 1$ in the active phase of epoch $m$, then from \Cref{eq:connecting-p-and-Q} we have:
\begin{align*}
\begin{split}
  &\E_{x_t,r_t,a_t}[r_t(\pi^*(x))-r_t(a_t)|\Gamma_{t-1}]\\
  & = \E_{x\sim D_{\Xscript}, a \sim p_m(\cdot|x)}[f^*(x,\pi^*))-f^*(x,a)]\\
  & = \E_{x\sim D_{\Xscript}}\Bigg[\sum_{a\in\A}p_m(a|x)(f^*(x,\pi^*))-f^*(x,a))\Bigg]\\
  & = \E_{x\sim D_{\Xscript}}\Bigg[\sum_{a\in\A}\sum_{\pi\in\Psi}\I(\pi(x)=a)Q_m(\pi)(f^*(x,\pi^*))-f^*(x,a))\Bigg]\\
  & = \sum_{\pi\in\Psi}Q_m(\pi)\E_{x\sim D_{\Xscript}}\Bigg[(f^*(x,\pi^*))-f^*(x,\pi(x)))\Bigg]\\
  & =  \sum_{\pi\in\Psi}Q_m(\pi)\Reg(\pi).
\end{split}
\end{align*}
\end{proof}

Lemma~\ref{lem:QmRegEst} states a key bound on the estimated regret of the randomized policy $Q_m$.
\begin{restatable}[Action selection kernel has low estimated regret]{lemma}{lemQmRegEst}
\label{lem:QmRegEst}
For any epoch $m\geq 1$, we have:
$$ \sum_{\pi\in\Psi} Q_m(\pi)\Reg_{\hatf_m}(\pi) \leq \frac{K}{\gamma_m}. $$
\end{restatable}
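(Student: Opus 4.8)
The plan is to unwind the definition of $Q_m$ as a product measure, reduce the sum over policies to an expectation over contexts, and then exploit the specific algebraic form of the action selection kernel $p_m$ in \eqref{eq:action_kernel}. First I would write, using the duality \eqref{eq:connecting-p-and-Q} between $p_m$ and $Q_m$ exactly as in the proof of \Cref{lem:conditional-reward},
\begin{align*}
  \sum_{\pi\in\Psi} Q_m(\pi)\Reg_{\hatf_m}(\pi)
  &= \sum_{\pi\in\Psi} Q_m(\pi)\,\E_{x\sim D_{\Xscript}}\big[\hatf_m(x,\hata(x)) - \hatf_m(x,\pi(x))\big] \\
  &= \E_{x\sim D_{\Xscript}}\Big[\sum_{a\in\A} p_m(a|x)\,\big(\hatf_m(x,\hata(x)) - \hatf_m(x,a)\big)\Big],
\end{align*}
where $\hata(x)=\arg\max_a \hatf_m(x,a)$ and I have used $\pi_{\hatf_m}(x)=\hata(x)$ together with the fact that $\Reg_{\hatf_m}$ is defined via the $\hatf_m$-optimal policy. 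So it suffices to bound, for each fixed context $x$, the quantity $\sum_{a\in\A} p_m(a|x)\,\big(\hatf_m(x,\hata(x)) - \hatf_m(x,a)\big)$ by $K/\gamma_m$.

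Now I would fix $x$ and work entirely within that context, writing $\Delta_a := \hatf_m(x,\hata(x)) - \hatf_m(x,a) \ge 0$. The term $a=\hata(x)$ contributes zero since $\Delta_{\hata(x)}=0$. For $a\neq\hata(x)$, the kernel gives $p_m(a|x) = \tfrac{1}{K+\gamma_m\Delta_a}$, so the contribution of each such arm is $\tfrac{\Delta_a}{K+\gamma_m\Delta_a} \le \tfrac{\Delta_a}{\gamma_m\Delta_a} = \tfrac{1}{\gamma_m}$. Summing over the at most $K-1$ arms $a\neq\hata(x)$ (or simply over all $K$ arms, bounding the extra zero term trivially) yields
\[
  \sum_{a\in\A} p_m(a|x)\,\Delta_a \;\le\; \frac{K-1}{\gamma_m} \;\le\; \frac{K}{\gamma_m}.
\]
Taking the expectation over $x\sim D_{\Xscript}$ preserves this bound, which completes the argument.

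This lemma is essentially a one-line consequence of the definition of the action kernel, so there is no real obstacle; the only point requiring a little care is the bookkeeping that ties $\Reg_{\hatf_m}(\pi)$, which is a difference of policy values under $\hatf_m$, to the pointwise gap $\hatf_m(x,\hata(x))-\hatf_m(x,\pi(x))$ — this uses that the policy induced by $\hatf_m$ maximizes $R_{\hatf_m}$, as recorded in \Cref{app:preliminaries}. Everything else is the elementary inequality $\tfrac{z}{K+\gamma_m z}\le \tfrac1{\gamma_m}$ for $z\ge 0$, and the interchange of the finite sum over $\Psi$ with the expectation over contexts, which is justified exactly as in \Cref{lem:conditional-reward}.
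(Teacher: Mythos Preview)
Your proof is correct and follows essentially the same approach as the paper's: reduce the policy sum to $\E_{x}\big[\sum_{a} p_m(a|x)\,\Delta_a\big]$ via the $Q_m$--$p_m$ duality, then bound each term using $\tfrac{\Delta_a}{K+\gamma_m\Delta_a}\le \tfrac{1}{\gamma_m}$. Your handling of the $a=\hata(x)$ case is slightly more careful (the paper writes the sum over all $a$ with the kernel formula, relying on $\Delta_{\hata(x)}=0$ to make that term vanish), and you even record the marginally tighter bound $(K-1)/\gamma_m$.
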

\begin{proof}
Note that:
\begin{align*}
    & \sum_{\pi\in\Psi} Q_m(\pi)\Reg_{\hatf_m}(\pi) = \sum_{\pi\in\Psi} Q_m(\pi)\E_{x\sim D_{\Xscript}}\Big[ \hatf_m(x,\pi_{\hatf_m}(x)) - \hatf_m(x,\pi(x)) \Big]\\
    & = \E_{x\sim D_{\Xscript}}\Big[ \sum_{\pi\in\Psi} Q_m(\pi)\Big(\hatf_m(x,\pi_{\hatf_m}(x)) - \hatf_m(x,\pi(x))\Big) \Big]\\
    & = \E_{x\sim D_{\Xscript}}\Big[ \sum_{a\in\A} \sum_{\pi\in\Psi} \I(\pi(x)=a) Q_m(\pi)\Big(\hatf_m(x,\pi_{\hatf_m}(x)) - \hatf_m(x,a)\Big) \Big]\\
    & = \E_{x\sim D_{\Xscript}}\Big[ \sum_{a\in\A} p_m(a|x) \Big(\hatf_m(x,\pi_{\hatf_m}(x)) - \hatf_m(x,a)\Big) \Big]\\
    & = \E_{x\sim D_{\Xscript}}\Bigg[ \sum_{a\in\A} \frac{\Big(\hatf_m(x,\pi_{\hatf_m}(x)) - \hatf_m(x,a)\Big)}{K+\gamma_m\Big(\hatf_m(x,\pi_{\hatf_m}(x)) - \hatf_m(x,a)\Big)} \Bigg] \leq \frac{K}{\gamma_m}.
\end{align*}
\end{proof}

Lemma \ref{lem:QmRootV} is a direct concequence of Jensen's inequality and helps us in the derivation of \Cref{lem:QmRegTrue}, which bounds the true regret of the randomized policy $Q_m$.
\begin{restatable}[An implication of inherent duality between $p_m$ and $Q_m$]{lemma}{lemQmRootV}
\label{lem:QmRootV}
For any epoch $m \geq 1$, we have:
$$ \sum_{\pi\in\Psi} Q_m(\pi)\sqrt{V(p_m,\pi)} \leq \sqrt{K}.$$
\end{restatable}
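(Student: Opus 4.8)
The statement follows from a one-line application of Jensen's inequality combined with the duality identity \eqref{eq:connecting-p-and-Q} between the action selection kernel $p_m$ and the induced product measure $Q_m$. Since $Q_m$ is a probability measure on $\Psi$ and $\sqrt{\cdot}$ is concave, Jensen gives
\begin{equation*}
    \sum_{\pi\in\Psi} Q_m(\pi)\sqrt{V(p_m,\pi)} \;\leq\; \sqrt{\;\sum_{\pi\in\Psi} Q_m(\pi)\,V(p_m,\pi)\;}.
\end{equation*}
So it suffices to show the quantity inside the square root equals $K$.

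For that, I would expand $V(p_m,\pi) = \E_{x\sim D_{\Xscript}}[1/p_m(\pi(x)|x)]$ using the definition \eqref{eq:decisional-divergence}, swap the (finite, nonnegative) sum over $\pi$ with the expectation over $x$, and then use the same manipulation as in the proof of \Cref{lem:QmRegEst}: for each fixed $x$, partition $\Psi$ according to the value $\pi(x)=a$ and apply \eqref{eq:connecting-p-and-Q}. Concretely,
\begin{equation*}
    \sum_{\pi\in\Psi} \frac{Q_m(\pi)}{p_m(\pi(x)|x)}
      = \sum_{a\in\A} \sum_{\pi\in\Psi} \frac{\I\{\pi(x)=a\}\,Q_m(\pi)}{p_m(a|x)}
      = \sum_{a\in\A} \frac{p_m(a|x)}{p_m(a|x)}
      = K,
\end{equation*}
using \eqref{eq:connecting-p-and-Q} in the middle step. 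Taking expectation over $x\sim D_{\Xscript}$ gives $\sum_{\pi} Q_m(\pi) V(p_m,\pi) = K$, and combining with the Jensen bound above yields $\sum_{\pi} Q_m(\pi)\sqrt{V(p_m,\pi)} \leq \sqrt{K}$.

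There is essentially no obstacle here: the only things to be slightly careful about are that the interchange of sum and expectation is justified (all terms are nonnegative, so Tonelli applies), and that $\sum_{a\in\A} p_m(a|x) = 1$ and $p_m(a|x) > 0$ for every $a$ and $x$ so the ratios are well-defined — both of which are immediate from the form of the kernel \eqref{eq:action_kernel}. The heart of the argument is recognizing that $\sum_\pi Q_m(\pi)/p_m(\pi(x)|x)$ telescopes to $K$ via the kernel/measure duality, which is exactly the structure already exploited in \Cref{lem:QmRegEst}.
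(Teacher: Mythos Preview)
Your proposal is correct and follows essentially the same approach as the paper's own proof: apply Jensen's inequality to move the square root outside the $Q_m$-average, expand $V(p_m,\pi)$, interchange the sum over $\pi$ with the expectation over $x$, and collapse the inner sum to $K$ via the duality identity \eqref{eq:connecting-p-and-Q}. If anything, your version is slightly more careful in justifying the interchange (Tonelli) and the positivity of $p_m(a|x)$, which the paper leaves implicit.
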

\begin{proof}
Note that:
\begin{align*}
    & \sum_{\pi\in\Psi} Q_m(\pi)\sqrt{V(p_m,\pi)} \leq \sqrt{\sum_{\pi\in\Psi} Q_m(\pi)V(p_m,\pi)} = \sqrt{\sum_{\pi\in\Psi} Q_m(\pi)\E_{x\sim D_{\Xscript}}\bigg[ \frac{1}{p_m(\pi(x)|x)} \bigg]}\\
    & = \sqrt{\E_{x\sim D_{\Xscript}}\bigg[ \sum_{\pi\in\Psi} Q_m(\pi) \sum_{a\in\A} \frac{\I(\pi(x)=a)}{p_m(a|x)} \bigg]} = \sqrt{\E_{x\sim D_{\Xscript}}\bigg[ \sum_{a\in\A} \frac{\sum_{\pi\in\Psi} \I(\pi(x)=a)Q_m(\pi)}{p_m(a|x)} \bigg]}\\
    &= \sqrt{\E_{x\sim D_{\Xscript}}\bigg[ \sum_{a\in\A} \frac{p_m(a|x)}{p_m(a|x)} \bigg]} = \sqrt{K}.
\end{align*}
Where the first inequality is an application of Jensen's inequality, and the other equalities are straight forward.
\end{proof}

For any policy $\pi$, \Cref{lem:boundV} provides key bounds on $V(p_m,\pi)$. These bounds help us understand the average divergence between the action distribution $p_m(\cdot|x)$ and action selected by the policy $\pi(x)$. 

\begin{restatable}[Bounds on expected inverse probability]{lemma}{lemboundV}
\label{lem:boundV}
For all policies $\pi \in \Psi$ and epochs $m \geq 1$, we have:
\begin{align*}
    &\gamma_m \E_{x\sim D_{\Xscript}}\Big[\big(\hatf_{m}(x,\pi_{\hatf_{m}}(x))- \hatf_{m}(x,\pi(x))\big) \Big]
    \leq V(p_m,\pi) \leq K + \gamma_m \E_{x\sim D_{\Xscript}}\Big[\big(\hatf_{m}(x,\pi_{\hatf_{m}}(x))- \hatf_{m}(x,\pi(x))\big) \Big]
\end{align*}
\end{restatable}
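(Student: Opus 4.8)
The plan is to prove both inequalities pointwise in the context $x$ and then integrate against $D_{\Xscript}$. Fix a context $x$, write $\hat a := \pi_{\hatf_m}(x) = \arg\max_{a} \hatf_m(x,a)$, and set $\Delta(x,\pi) := \hatf_m(x,\hat a) - \hatf_m(x,\pi(x)) \ge 0$. Everything reduces to checking, for each $x$, the sandwich
\[
\gamma_m \Delta(x,\pi) \;\le\; \frac{1}{p_m(\pi(x)|x)} \;\le\; K + \gamma_m \Delta(x,\pi),
\]
after which taking $\E_{x\sim D_{\Xscript}}[\cdot]$ and using linearity together with $\E_{x\sim D_{\Xscript}}[1/p_m(\pi(x)|x)] = V(p_m,\pi)$ finishes the proof. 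Split into two cases according to whether $\pi$ selects the greedy action at $x$.

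If $\pi(x) \ne \hat a$, then by the very definition of the action selection kernel \eqref{eq:action_kernel} we have the exact identity $1/p_m(\pi(x)|x) = K + \gamma_m \Delta(x,\pi)$, so the upper bound holds with equality and the lower bound holds since $K \ge 1 > 0$. If instead $\pi(x) = \hat a$, then $\Delta(x,\pi) = 0$ and the claim becomes $1 \le 1/p_m(\hat a|x) \le K$, i.e.\ $1/K \le p_m(\hat a|x) \le 1$. The upper bound on $p_m(\hat a|x)$ is immediate. For the lower bound, note that since $\hat a$ maximizes $\hatf_m(x,\cdot)$, each gap $\hatf_m(x,\hat a) - \hatf_m(x,a')$ with $a' \ne \hat a$ is nonnegative, so every non-greedy probability satisfies $p_m(a'|x) = \tfrac{1}{K + \gamma_m(\hatf_m(x,\hat a) - \hatf_m(x,a'))} \le \tfrac{1}{K}$; summing over the $K-1$ non-greedy arms gives $\sum_{a'\ne\hat a} p_m(a'|x) \le (K-1)/K$, hence $p_m(\hat a|x) = 1 - \sum_{a'\ne\hat a}p_m(a'|x) \ge 1/K$. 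In both cases the pointwise sandwich holds.

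There is no substantive obstacle here; the only step needing care is the bound $p_m(\hat a|x) \ge 1/K$, which is precisely where we use that $\hat a$ is the greedy action so the kernel assigns the largest mass to it among all arms. (As a byproduct this also reconfirms that \eqref{eq:action_kernel} defines a genuine probability distribution, so $V(p_m,\pi)$ is well defined.) Since all the quantities in the pointwise inequality are measurable functions of $x$ and the bounds are uniform, integrating over $x \sim D_{\Xscript}$ yields exactly
\[
\gamma_m \E_{x\sim D_{\Xscript}}\big[\hatf_m(x,\pi_{\hatf_m}(x)) - \hatf_m(x,\pi(x))\big] \le V(p_m,\pi) \le K + \gamma_m \E_{x\sim D_{\Xscript}}\big[\hatf_m(x,\pi_{\hatf_m}(x)) - \hatf_m(x,\pi(x))\big],
\]
which is the desired statement.
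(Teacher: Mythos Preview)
Your proof is correct and follows essentially the same route as the paper's: a pointwise case split on whether $\pi(x)$ is the greedy arm, using the exact kernel formula for non-greedy actions and the bound $p_m(\hat a|x)\ge 1/K$ for the greedy one, then taking expectations. You spell out the $p_m(\hat a|x)\ge 1/K$ step a bit more explicitly than the paper does, but the argument is the same.
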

\begin{proof}
	Consider any policy $\pi\in\Psi$ and epoch $m\geq 1$. For any context $x\in\Xscript$ and action $a\in \A \setminus \{\pi_{\hatf_{m}}(x)\}$, from our choice for $p_m$, we get:
	$$
	\frac{1}{p_m(a|x)}= K+\gamma_m(\hatf_m(x,\pi_{\hatf_{m}}(x)) - \hatf_m(x,a)).
	$$
	For the action $a= \pi_{\hatf_{m}}(x)$, we have:
	\begin{align*}
	0 = \gamma_m\Big[\big(\hatf_{m}(x,\pi_{\hatf_{m}}(x))- \hatf_{m}(x,a)\big) \Big] \leq \frac{1}{p_m(a|x)} = \frac{1}{1- \sum_{a'\neq a}\frac{1}{K+\gamma_m \big(\hatf_{m}(x,\pi_{\hatf_{m}}(x))- \hatf_{m}(x,a')\big) }} \leq K
	\end{align*}
	In particular, putting the above inequality together, we get:
	\begin{align*}
	\gamma_m\Big[\big(\hatf_{m}(x,\pi_{\hatf_{m}}(x))- \hatf_{m}(x,\pi(x))\big) \Big] \leq \frac{1}{p_m(\pi(x)|x)} \leq K + \gamma_m \Big[\big(\hatf_{m}(x,\pi_{\hatf_{m}}(x))- \hatf_{m}(x,\pi(x))\big) \Big].
	\end{align*}
	The Lemma now follows by taking expectation over $x\sim D_{\Xscript}$.
\end{proof}

\subsection{Constrained regression oracle guarantees}

\begin{restatable}[Guarantees on the constrained regression oracle]{lemma}{lemConReg}
\label{lem:ConReg}
Suppose \Cref{ass:main-assumption} holds and suppose $\epsilon<0.5$. Then there exists positive constants $C_4$ and $C_5$ such that with probability at least $1-\delta/2$, the following holds for all epoch $m \geq 1$:
\begin{align*}
\E_{x\sim D_{\Xscript}}\E_{a\sim \Unif(\A)}[(\hatf_{m+1}(x,a)-\hatf^*(x,a))^2] \leq \frac{C_4\ln^{\rho'}(\tau_m-\tau_{m-1})\ln(m/\delta)\comp(\F)}{(\epsilon(\tau_m-\tau_{m-1}))^{\rho}}.  \\
\E_{x\sim D_{\Xscript}}\E_{a\sim p_m(\cdot|x)}[(\hatf_{m+1}(x,a)-f^*(x,a))^2] \leq B +  \frac{C_5\ln^{\rho'}(\tau_m-\tau_{m-1})\ln(m/\delta)\comp(\F)}{(\tau_m-\tau_{m-1})^{\rho}}.
\end{align*}
\end{restatable}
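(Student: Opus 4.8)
\emph{Setup and notation.} The plan is to prove both displayed bounds at once, invoking \Cref{ass:main-assumption} twice per epoch and taking a union bound over $m\geq 1$. Write $n_m'=|S_m'|\geq\epsilon(\tau_m-\tau_{m-1})$ and $n_m=|S_m|\geq(1-\epsilon)(\tau_m-\tau_{m-1})\geq\tfrac12(\tau_m-\tau_{m-1})$, the last step using $\epsilon<0.5$. For an action selection kernel $p$ and $f\in\F$, let $\mathcal{R}_p(f):=\E_{x\sim D_{\Xscript}}\E_{a\sim p(\cdot|x)}\E_{r}[(f(x,a)-r(a))^2]$; since $f^*$ is the conditional mean, the bias--variance identity $\mathcal{R}_p(f)=\mathcal{R}_p(f^*)+\E_{x,a\sim p}[(f(x,a)-f^*(x,a))^2]$ holds for every $f$. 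In particular $\hatf^*$ is the $\mathcal{R}_{\Unif(\A)}$-minimizer over $\F$, i.e.\ the $L^2(D_{\Xscript}\times\Unif(\A))$-projection of $f^*$ onto the convex set $\F$.

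\emph{Step 1 (closeness to $\hatf^*$ under the uniform kernel).} Fix $m$ and let $\eta_0:=C\ln^{\rho'}(n_m')\ln(12m^2/\delta)\comp(\F)/(n_m')^{\rho}$ be the threshold of \Cref{ass:main-assumption} for the choice $\F'=\F$, $p=\Unif(\A)$, $n=n_m'$, $\zeta=\delta/(12m^2)$, $\Stilde=S_m'$; note that by the choice $C_1=3C/2$ the slack in \eqref{eq:generalized_constraint} equals $\tfrac32\eta_0$, so the constraint set is exactly $\F'_m=\widehat{\F}(\tfrac32\eta_0,S_m')$. On the good event of the assumption (probability $\geq 1-\delta/(12m^2)$), both inclusions in \eqref{eq:main-assumption} hold. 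Since $\hatf^*$ minimizes $\mathcal{R}_{\Unif(\A)}$ over $\F$ it lies in the population minimal set at every scale, so the first inclusion (taken at the threshold scale $\eta_0$, the boundary of the strict inequality being absorbed into a harmless inflation of $C_1$) gives $\hatf^*\in\widehat{\F}(\tfrac32\eta_0,S_m')=\F'_m$. Conversely $\hatf_{m+1}\in\F'_m=\widehat{\F}(\tfrac32\eta_0,S_m')$ by construction, so the second inclusion at scale $\tfrac32\eta_0$ gives $\mathcal{R}_{\Unif(\A)}(\hatf_{m+1})\leq\mathcal{R}_{\Unif(\A)}(\hatf^*)+3\eta_0$. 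Using bias--variance together with the projection inequality $\langle f^*-\hatf^*,\,\hatf_{m+1}-\hatf^*\rangle_{L^2}\leq 0$ (valid because $\F$ is convex), which yields $\|\hatf_{m+1}-f^*\|^2\geq\|\hatf_{m+1}-\hatf^*\|^2+\|\hatf^*-f^*\|^2$, one obtains
\[
  \E_{x\sim D_{\Xscript}}\E_{a\sim\Unif(\A)}\big[(\hatf_{m+1}(x,a)-\hatf^*(x,a))^2\big]\;\leq\;\mathcal{R}_{\Unif(\A)}(\hatf_{m+1})-\mathcal{R}_{\Unif(\A)}(\hatf^*)\;\leq\;3\eta_0.
\]
Since $n_m'\geq\epsilon(\tau_m-\tau_{m-1})$, $\ln(12m^2/\delta)\leq C''\ln(m/\delta)$, and $\ln^{\rho'}(n_m')\leq\ln^{\rho'}(\tau_m-\tau_{m-1})$, this is the first claimed bound for a suitable $C_4$.

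\emph{Step 2 (closeness to $f^*$ under $p_m$).} Condition on $\Gamma_{\tau_{m-1}}$ and on $S_m'$: this fixes the kernel $p_m$ (a function of $\hatf_m$) and the convex set $\F'_m$ (a function of $S_m'$), while $S_m$ remains distributed as $n_m$ i.i.d.\ draws from $D_{\Xscript}\times p_m\times\mathcal{D}_{r|x,a}$ and independent of $S_m'$, because the active phase precedes the passive phase in each epoch. Apply \Cref{ass:main-assumption} with $\F'=\F'_m$, $p=p_m$, $n=n_m$, $\zeta=\delta/(12m^2)$, $\Stilde=S_m$, and let $\eta_2$ be its threshold; on its good event, since $\hatf_{m+1}$ minimizes the empirical risk on $S_m$ over $\F'_m$ it lies in $\widehat{\F'_m}(\eta_2,S_m)\subseteq\F'_m(2\eta_2,p_m)$, i.e.\ $\mathcal{R}_{p_m}(\hatf_{m+1})\leq\min_{\tilde f\in\F'_m}\mathcal{R}_{p_m}(\tilde f)+2\eta_2$. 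On the Step 1 event $\hatf^*\in\F'_m$, so $\min_{\tilde f\in\F'_m}\mathcal{R}_{p_m}(\tilde f)\leq\mathcal{R}_{p_m}(\hatf^*)=\mathcal{R}_{p_m}(f^*)+\E_{x,a\sim p_m}[(\hatf^*-f^*)^2]\leq\mathcal{R}_{p_m}(f^*)+B$ by the definition of $B$ in \eqref{eq:capb} (cf.\ \Cref{lem:boundB}). Subtracting $\mathcal{R}_{p_m}(f^*)$, then integrating out the conditioning and using $n_m\geq\tfrac12(\tau_m-\tau_{m-1})$ and the same logarithmic simplifications as in Step 1, gives $\E_{x,a\sim p_m}[(\hatf_{m+1}-f^*)^2]\leq B+2\eta_2\leq B+C_5\ln^{\rho'}(\tau_m-\tau_{m-1})\ln(m/\delta)\comp(\F)/(\tau_m-\tau_{m-1})^{\rho}$ for a suitable $C_5$ (tracking constants gives $C_5=2C\cdot4^{\rho}(2+\ln 12)$ as stated before the algorithm).

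\emph{Union bound and main obstacle.} Each epoch uses two applications of \Cref{ass:main-assumption}, each failing with probability at most $\delta/(12m^2)$; summing over $m\geq 1$ leaves total failure probability at most $2\sum_{m\geq1}\delta/(12m^2)=\pi^2\delta/36<\delta/2$, and on the complementary event both displayed bounds hold for every epoch, which is the lemma. The step I expect to require the most care is Step 2's invocation of \Cref{ass:main-assumption} for the \emph{data-dependent} set $\F'_m$ and kernel $p_m$: this is legitimate only because, conditionally on $\Gamma_{\tau_{m-1}}$, the active-phase data $S_m$ is i.i.d.\ from $D_{\Xscript}\times p_m\times\mathcal{D}_{r|x,a}$ and independent of the passive-phase data $S_m'$ defining $\F'_m$, so the assumption can be applied for the then-fixed pair $(\F'_m,p_m)$ and the resulting bound integrated back out. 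The second place requiring care is the passage from ``$\hatf_{m+1}$ close to $f^*$'' to ``$\hatf_{m+1}$ close to $\hatf^*$'' in Step 1, which is exactly where convexity of $\F$ is used, via the projection inequality.
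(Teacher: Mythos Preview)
Your proof is correct and follows essentially the same route as the paper's: one application of \Cref{ass:main-assumption} on $(\F,\Unif(\A),S_m')$ to control $\hatf_{m+1}-\hatf^*$ via convexity (you invoke the projection inequality directly where the paper cites Koltchinskii's Lemma~5.1, which is the same statement), and one application on $(\F'_m,p_m,S_m)$ to control $\hatf_{m+1}-f^*$ after noting $\hatf^*\in\F'_m$, followed by a union bound over epochs. One small arithmetic slip: your bound $n_m\geq(1-\epsilon)(\tau_m-\tau_{m-1})$ ignores the ceiling in $|S_m'|=\lceil\epsilon(\tau_m-\tau_{m-1})\rceil$; the paper handles this by using $|S_m|\geq\tfrac14(\tau_m-\tau_{m-1})$ (valid since $\tau_m-\tau_{m-1}\geq4$ and $\epsilon<0.5$), which only changes the constant $C_5$.
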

\begin{proof}
	Let $\F'$ denote the set of estimators in the constraint set  at the end of epoch $m$. Let $\delta' = \delta/(12m^2)$. Since $\hatf_{m+1}\in\F'$, we have:
	\begin{align*}
	 \frac{1}{|S_m'|}\sum_{(x,a,r(a))\in S_m'} (\hatf_{m+1}(x,a)-r(a))^2 - \frac{1}{|S_m'|}\min_{g\in\F} \sum_{(x,a,r(a))\in S_m'} (g(x,a)-r(a))^2 \\
   \leq  \frac{C_1\ln^{\rho'}(|S_m'|)\ln(1/\delta')\comp(\F)}{|S_m'|^{\rho}}.
	\end{align*}
	The above inequality bounds the empirical excess risk for $\hatf_{m+1}$ with respect to the empirical data $S_m'$ and the set of estimators in $\F$. Now note that $S_m'$ is generated by sampling actions uniformly at random, and note that $\F$ is a convex set. Hence from \Cref{ass:main-assumption}, we get that for some universal constant $L_1=2\max\{C,C_1\}$ \footnote{Where $C$ is the constant from \Cref{ass:main-assumption}.}, with probability at least $1-\delta'$, we have:
	\begin{align}
	\label{ineq:uniform-population-bound-on-estimate-of-epoch-m}
	\begin{split}
	\E_{(x,r)\sim D}\E_{a\sim\Unif(\A)}[(\hatf_{m+1}(x,a)-r(a))^2] - \E_{(x,r)\sim D}\E_{a\sim\Unif(\A)}[(\hatf^*(x,a)-r(a))^2]\\
  \leq  \frac{L_1\ln^{\rho'}(|S_m'|)\ln(1/\delta')\comp(\F)}{|S_m'|^{\rho}}.
	\end{split}
	\end{align}
	Since $\F$ is a convex class of functions, Lemma 5.1 in \cite{koltchinskii2011oracle} gives us that:
	\begin{align}
	\label{ineq:lower-bound-on-excess-risk-for-estimate-of-epoch-m}
	\begin{split}
		&\E_{x\sim D_{\Xscript}}\E_{a\sim\Unif(\A)}[(\hatf_{m+1}(x,a)-\hatf^*(x,a))^2]\\
		&\leq 2\E_{(x,r)\sim D}\E_{a\sim\Unif(\A)}[(\hatf_{m+1}(x,a)-r(a))^2  - (\hatf^*(x,a)-r(a))^2].
	\end{split}
	\end{align}
	Therefore, putting everything together (see \cref{ineq:uniform-population-bound-on-estimate-of-epoch-m} and \cref{ineq:lower-bound-on-excess-risk-for-estimate-of-epoch-m}), with probability at least $1-\delta'$ we have:
	\begin{align*}
		\E_{x\sim D_{\Xscript}}\E_{a\sim\Unif(\A)}[(\hatf_{m+1}(x,a)-\hatf^*(x,a))^2] \leq \frac{2L_1\ln^{\rho'}(|S_m'|)\ln(1/\delta')\comp(\F)}{|S_m'|^{\rho}}.
	\end{align*}
	Hence the first inequality in \Cref{lem:ConReg}  follows from noting that $|S_m'|=\lceil\epsilon(\tau_m-\tau_{m-1})\rceil$, and choosing $C_4=2L_1(2+\ln(12))$.

	Note that $\F$ is convex, $\hatf^*$ has no population excess risk with respect to the distribution generated from picking actions uniformly at random among estimators in $\F$, and note that $S_m'$ is generated by sampling actions uniformly at random. Hence from \Cref{ass:main-assumption}, with probability at least $1-\delta'$, we get that:
	\begin{align*}
	 \frac{1}{|S_m'|}\sum_{(x,a,r(a))\in S_m'} (\hatf^*(x,a)-r(a))^2 - \frac{1}{|S_m'|}\min_{g\in\F} \sum_{(x,a,r(a))\in S_m'} (g(x,a)-r(a))^2 \\
   \leq  \frac{(3C/2)\ln^{\rho'}(|S_m'|)\ln(1/\delta')\comp(\F)}{|S_m'|^{\rho}}.
	\end{align*}
	Therefore by choosing $C_1 \geq 3C/2$, with probability at least $1-\delta'$, we get that $\hatf^*\in\F'$. Now recall that:
	\begin{align*}
		\hatf_{m+1} \in \arg\min_{f\in\F'} \frac{1}{|S_m|} \sum_{(x,a,r(a))\in S_m} (f(x,a)-r(a))^2
	\end{align*}
	That is, $\hatf_{m+1}$ has no empirical excess risk with respect to the empirical data $S_m'$ among estimators in $\F'$. Also note that $\F'$ is convex subset of $\F$, and $S_m$ is generated by sampling actions according to the action selection kernel $p_m$. Hence from \Cref{ass:main-assumption}, with probability at least $1-\delta'$, we get that:
	\begin{align}
	\label{ineq:pm-population-bound-on-estimate-of-epoch-m}
	\begin{split}
		\E_{(x,r)\sim D}\E_{a\sim p_m(\cdot|x)}[(\hatf_{m+1}(x,a)-r(a))^2] - \min_{f\in\F'} \E_{(x,r)\sim D}\E_{a\sim p_m(\cdot|x)}[(f(x,a)-r(a))^2]\\
    \leq \frac{2C\ln^{\rho'}(|S_m|)\ln(1/\delta')\comp(\F)}{|S_m|^{\rho}}.
    \end{split}
	\end{align}
	Hence by taking union bound so that \cref{ineq:pm-population-bound-on-estimate-of-epoch-m} holds and $\hatf^*\in\F'$, with probability at least $1-2\delta'$, we have:
	\begin{align*}
	\E_{(x,r)\sim D}\E_{a\sim p_m(\cdot|x)}[(\hatf_{m+1}(x,a)-r(a))^2] -  \E_{(x,r)\sim D}\E_{a\sim p_m(\cdot|x)}[(\hatf^*(x,a)-r(a))^2]\\
  \leq \frac{2C\ln^{\rho'}(|S_m|)\ln(1/\delta')\comp(\F)}{|S_m|^{\rho}}.
	\end{align*}
	Recall that $B$ is the worst case excess risk for $\hatf^*$ under any kernel. Therefore, with probability at least $1-2\delta'$, we have:
	\begin{align*}
		&\E_{x\sim D_{\Xscript}}\E_{a\sim p_m(\cdot|x)}[(\hatf_{m+1}(x,a)-f^*(x,a))^2]\\
		& = \E_{(x,r)\sim D}\E_{a\sim p_m(\cdot|x)}[(\hatf_{m+1}(x,a)-r(a))^2 - (f^*(x,a)-r(a))^2]\\
		& \leq \E_{(x,r)\sim D}\E_{a\sim p_m(\cdot|x)}[(\hatf^*(x,a)-r(a))^2- (f^*(x,a)-r(a))^2] + \frac{2C\ln^{\rho'}(|S_m|)\ln(1/\delta')\comp(\F)}{|S_m|^{\rho}} \\
		& = \E_{x\sim D_{\Xscript}}\E_{a\sim p_m(\cdot|x)}[(\hatf^*(x,a)-f^*(x,a))^2] + \frac{2C\ln^{\rho'}(|S_m|)\ln(1/\delta')\comp(\F)}{|S_m|^{\rho}}\\
		& \leq B + \frac{2C\ln^{\rho'}(|S_m|)\ln(1/\delta')\comp(\F)}{|S_m|^{\rho}}.
	\end{align*}
  For any epoch $m\geq 1$, note that $\tau_m-\tau_{m-1}\geq \tau_1 \geq 4$. Therefore since $\epsilon< 0.5$, we get that:
	$$|S_m|= \tau_m-\tau_{m-1}-\lceil \epsilon(\tau_m-\tau_{m-1}) \rceil \geq \frac{1}{4}(\tau_m-\tau_{m-1}). $$
	Hence the second inequality in \Cref{lem:ConReg}  follows from choosing an appropriate value for $C_5=2C\times4^{\rho}\times (2+\ln(12))$. Taking union bound, we finally note that both inequalities in \cref{lem:ConReg} hold for all epochs with probability at least:
	\begin{align*}
		1-\sum_{m=1}^{\infty} 3\frac{\delta}{12m^2} \geq 1 - \frac{\delta (\pi^2/6)}{4} \geq 1-\delta/2.
	\end{align*}
\end{proof}

\paragraph{Additional notation}

For compactness of notation, define the following event:
\begin{equation}
    \label{eq:w-event}
    \begin{aligned}
      \event := \Bigg\{ &\forall m \geq 1, \; \E_{x\sim D_{\Xscript}}\E_{a\sim \Unif(\A)}[(\hatf_{m+1}(x,a)-\hatf^*(x,a))^2] \leq \frac{C_4\ln^{\rho'}(\tau_m-\tau_{m-1})\ln(m/\delta)\comp(\F)}{(\epsilon(\tau_m-\tau_{m-1}))^{\rho}}, \\
      & \E_{x\sim D_{\Xscript}}\E_{a\sim p_m(\cdot|x)}[(\hatf_{m+1}(x,a)-f^*(x,a))^2] \leq B +  \frac{C_5\ln^{\rho'}(\tau_m-\tau_{m-1})\ln(m/\delta)\comp(\F)}{(\tau_m-\tau_{m-1})^{\rho}} \Bigg\},
    \end{aligned}
\end{equation}
for two constants $C_4$ and $C_5$ that were defined in \Cref{lem:ConReg}.

\subsection{Bounding prediction error of implicit rewards}
\label{app:bounding-prediction-error}

For any policy, \Cref{lem:reg-est-accuracy} bounds the prediction error of implicit reward estimate of the policy at every epoch. This Lemma and its proof are similar to Lemma 7 in \cite{simchi2020bypassing}.

\begin{restatable}[Accuracy of implicit policy estimate]{lemma}{lemImpPolEval}
\label{lem:reg-est-accuracy}
Suppose $C_3\leq 1/(4C_5)$ and suppose the event $\event$ from \eqref{eq:w-event} holds. Then, for all policies $\pi$ and epoch $m\geq 1$, we have:
\begin{align*}
    |R_{\hatf_{m+1}}(\pi)-R(\pi)| \leq \sqrt{V(p_m,\pi)}\sqrt{B} + \frac{\sqrt{V(p_m,\pi)}\sqrt{K}}{2\gamma_{m+1}}
\end{align*}
\end{restatable}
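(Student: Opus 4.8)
The plan is to bound the prediction error $|R_{\hatf_{m+1}}(\pi) - R(\pi)|$ for an arbitrary policy $\pi$ by relating the expectation over contexts $x \sim D_\Xscript$ of the pointwise gap $\hatf_{m+1}(x,\pi(x)) - f^*(x,\pi(x))$ to the mean squared error under the kernel $p_m$, which \Cref{lem:ConReg} already controls. The key device is an importance-weighting (change of measure) step: for any context $x$, the deterministic action $\pi(x)$ is drawn under $p_m(\cdot \mid x)$ with probability $p_m(\pi(x)\mid x)$, so I can write
\begin{align*}
  \E_{x\sim D_\Xscript}\big[\hatf_{m+1}(x,\pi(x)) - f^*(x,\pi(x))\big]
  = \E_{x\sim D_\Xscript}\E_{a\sim p_m(\cdot\mid x)}\Big[\tfrac{\I\{a=\pi(x)\}}{p_m(a\mid x)}\big(\hatf_{m+1}(x,a) - f^*(x,a)\big)\Big].
\end{align*}
Applying Cauchy--Schwarz over the joint distribution of $(x,a)$ splits this into $\sqrt{\E[\I\{a=\pi(x)\}/p_m(a\mid x)^2]}$ times $\sqrt{\E[(\hatf_{m+1}(x,a)-f^*(x,a))^2]}$; the first factor collapses to $\sqrt{\E_{x}[1/p_m(\pi(x)\mid x)]} = \sqrt{V(p_m,\pi)}$ by the definition \eqref{eq:decisional-divergence}, and the second factor is exactly the square root of the quantity bounded in the second inequality of \Cref{lem:ConReg} (on the event $\event$).

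So the next step is to plug in the bound from $\event$: the MSE under $p_m$ is at most $B + C_5 \ln^{\rho'}(\tau_m-\tau_{m-1})\ln(m/\delta)\comp(\F)/(\tau_m-\tau_{m-1})^\rho$. Using $\sqrt{u+v}\le\sqrt u+\sqrt v$, this gives $|R_{\hatf_{m+1}}(\pi)-R(\pi)| \le \sqrt{V(p_m,\pi)}\big(\sqrt{B} + \sqrt{C_5 \ln^{\rho'}(\cdot)\ln(m/\delta)\comp(\F)}/(\tau_m-\tau_{m-1})^{\rho/2}\big)$. It then remains to identify the second summand with $\sqrt{V(p_m,\pi)}\sqrt{K}/(2\gamma_{m+1})$. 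Here I would use the definition \eqref{eq:gamma} of $\gamma_{m+1}$, namely $\gamma_{m+1} = \sqrt{C_3 K(\tau_m-\tau_{m-1})^\rho / (\ln^{\rho'}(\tau_m-\tau_{m-1})\ln(m/\delta)\comp(\F))}$, so that $\sqrt{K}/(2\gamma_{m+1}) = \tfrac12\sqrt{\ln^{\rho'}(\tau_m-\tau_{m-1})\ln(m/\delta)\comp(\F)/(C_3(\tau_m-\tau_{m-1})^\rho)}$. The hypothesis $C_3 \le 1/(4C_5)$ gives $1/C_3 \ge 4C_5$, hence $\tfrac12\sqrt{1/C_3} \ge \sqrt{C_5}$, which is exactly what is needed for $\sqrt{C_5 \ln^{\rho'}(\cdot)\ln(m/\delta)\comp(\F)}/(\tau_m-\tau_{m-1})^{\rho/2} \le \sqrt{K}/(2\gamma_{m+1})$. (A small point of care: the indices in \eqref{eq:gamma} are stated as $\gamma_m$ in terms of $\tau_{m-1}-\tau_{m-2}$, so $\gamma_{m+1}$ involves $\tau_m - \tau_{m-1}$, matching the epoch-$m$ data used to fit $\hatf_{m+1}$; I would double-check this shift so the denominators line up.)

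The main obstacle is really just getting the importance-weighting/Cauchy--Schwarz step exactly right — in particular making sure the squared reciprocal $1/p_m(a\mid x)^2$ reduces to $V(p_m,\pi)$ rather than to $\E_x[1/p_m(\pi(x)\mid x)^2]$; this works precisely because after taking $a = \pi(x)$ the indicator kills all other terms and one power of $1/p_m$ cancels against the $p_m$ weight in the expectation $\E_{a\sim p_m}$. Everything else is bookkeeping with the constants $C_3, C_5$ and the definition of $\gamma_{m+1}$. I should also note that the statement with $R_{\hatf_{m+1}}$ and $R = R_{f^*}$ requires no triangle-inequality detour through $\hatf^*$ — the bound on the MSE between $\hatf_{m+1}$ and $f^*$ directly (not via $\hatf^*$) is already the content of the second inequality of \Cref{lem:ConReg}, so I can use it as a black box.
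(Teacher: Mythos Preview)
Your proposal is correct and follows essentially the same approach as the paper: both use a Cauchy--Schwarz/importance-weighting argument to factor the reward gap into $\sqrt{V(p_m,\pi)}$ times $\sqrt{\E_{x}\E_{a\sim p_m}[(\hatf_{m+1}(x,a)-f^*(x,a))^2]}$, then invoke the second bound in $\event$, the sub-additivity of $\sqrt{\cdot}$, and the definition of $\gamma_{m+1}$ together with $C_3\le 1/(4C_5)$. The only cosmetic difference is that the paper bounds $p_m(\pi(x)\mid x)\big(\hatf_{m+1}(x,\pi(x))-f^*(x,\pi(x))\big)^2 \le \E_{a\sim p_m}[(\hatf_{m+1}(x,a)-f^*(x,a))^2]$ pointwise in $x$ before applying Cauchy--Schwarz over $x$, whereas you apply Cauchy--Schwarz over the joint $(x,a)$ law directly; both routes yield the identical bound.
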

\begin{proof}
For any policy $\pi$ and epoch $m\geq 1$, note that:
\begin{align*}
    &|R_{\hatf_{m+1}}(\pi)-R(\pi)|^2\\
    \leq & \Big(\E_{x\sim D_{\Xscript}}\Big[\Big|\hatf_{m+1}(x,\pi(x))-f^*(x,\pi(x))\Big|\Big] \Big)^2\\
    = & \Bigg(\E_{x\sim D_{\Xscript}}\Bigg[\sqrt{\frac{1}{p_m(\pi(x)|x)}p_m(\pi(x)|x)\Big(\hatf_{m+1}(x,\pi(x))-f^*(x,\pi(x))\Big)^2}\Bigg] \Bigg)^2\\
    \leq & \Bigg(\E_{x\sim D_{\Xscript}}\Bigg[\sqrt{\frac{1}{p_m(\pi(x)|x)}\E_{a\sim p_m(\cdot|x)}\Big[\hatf_{m+1}(x,a)-f^*(x,a)\Big]^2}\Bigg] \Bigg)^2\\
    \leq & \E_{x\sim D_{\Xscript}}\Bigg[\frac{1}{p_m(\pi(x)|x)} \Bigg] \E_{x\sim D_{\Xscript}} \E_{a\sim p_m(\cdot|x)}\Big[\Big(\hatf_{m+1}(x,a)-f^*(x,a)\Big)^2\Big]\\
    \leq & V(p_m, \pi)\bigg(B + \frac{C_5\ln^{\rho'}(\tau_m-\tau_{m-1})\ln(1/\delta')\comp(\F)}{(\tau_m-\tau_{m-1})^{\rho}} \bigg).
\end{align*}
The first inequality follows from Jensen's inequality, the second inequality is straight forward, the third inequality follows from Cauchy-Schwarz inequality, and the last inequality follows from assuming that $\event$ from \eqref{eq:w-event} holds. Now from the sub-additive property of square-root, we get:
\begin{align*}
    |R_{\hatf_{m+1}}(\pi)-R(\pi)| & \leq  \sqrt{V(p_m,\pi)}\bigg(\sqrt{B}+\sqrt{\frac{C_5\ln^{\rho'}(\tau_m-\tau_{m-1})\ln(m/\delta)\comp(\F)}{(\tau_m-\tau_{m-1})^{\rho}}}\bigg)\\
    & \leq \sqrt{V(p_m,\pi)}\sqrt{B} + \frac{\sqrt{V(p_m,\pi)}\sqrt{K}}{2\gamma_{m+1}}.
\end{align*}
Where the last inequality follows from the choice of $\gamma_{m+1}$ and from assuming that $C_3 \leq 1/(4C_5)$.
\end{proof}

\subsection{Bounding decisional divergence}
\label{app:bounding-v}

At any epoch $m$, \Cref{lem:boundVopt} bounds the decisional divergence between the active policy at that epoch ($Q_m$) and the policy induced by the best estimator ($\pi_{\hatf^*}$). This implies that even as the active policy is less explorative, $Q_m$ is not very far from $\pi_{\hatf^*}$ and hence eventually converges to it.

\begin{restatable}[Action selection kernels are always close to target policy]{lemma}{lemboundVopt}
\label{lem:boundVopt}
Suppose the event $\event$ from \eqref{eq:w-event} holds. Then there exists a positive constant $C_6$ such that, for any epoch $m\geq 1$, we have:
$$ V(p_m,\pi_{\hatf^*}) \leq \frac{C_6K}{\sqrt{\epsilon^{\rho}}} $$
\end{restatable}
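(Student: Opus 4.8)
The plan is to combine the deterministic upper bound on the decisional divergence from \Cref{lem:boundV} with the fact that, on the event $\event$, the estimate $\hatf_m$ is close to the target $\hatf^*$ at precisely the rate $1/\gamma_m$. Applying the upper bound in \Cref{lem:boundV} with $\pi = \pi_{\hatf^*}$ gives
\begin{equation*}
  V(p_m,\pi_{\hatf^*}) \leq K + \gamma_m\,\E_{x\sim D_{\Xscript}}\big[\hatf_m(x,\pi_{\hatf_m}(x)) - \hatf_m(x,\pi_{\hatf^*}(x))\big] = K + \gamma_m\,\Reg_{\hatf_m}(\pi_{\hatf^*}),
\end{equation*}
so it suffices to show $\gamma_m\,\Reg_{\hatf_m}(\pi_{\hatf^*}) = \ordO\!\big(K/\sqrt{\epsilon^\rho}\big)$. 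The epoch $m=1$ is immediate, since $\hatf_1 \equiv 0$ makes $\Reg_{\hatf_1} \equiv 0$ (and $p_1$ uniform), so $V(p_1,\pi_{\hatf^*}) \leq K$; from here on take $m \geq 2$.

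The first step is to control $\Reg_{\hatf_m}(\pi_{\hatf^*})$ by the uniform-action mean squared distance between $\hatf_m$ and $\hatf^*$. Writing
\begin{equation*}
  \Reg_{\hatf_m}(\pi_{\hatf^*}) = \big(R_{\hatf_m}(\pi_{\hatf_m}) - R_{\hatf^*}(\pi_{\hatf_m})\big) + \big(R_{\hatf^*}(\pi_{\hatf_m}) - R_{\hatf^*}(\pi_{\hatf^*})\big) + \big(R_{\hatf^*}(\pi_{\hatf^*}) - R_{\hatf_m}(\pi_{\hatf^*})\big),
\end{equation*}
the middle term is nonpositive because $\pi_{\hatf^*}$ maximizes $R_{\hatf^*}$. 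For each of the other two terms I bound $|R_{\hatf_m}(\pi) - R_{\hatf^*}(\pi)| = |\E_{x}[\hatf_m(x,\pi(x)) - \hatf^*(x,\pi(x))]|$ by Jensen's inequality with $\sqrt{\E_x[(\hatf_m(x,\pi(x)) - \hatf^*(x,\pi(x)))^2]}$, and then use the elementary inequality $(\hatf_m(x,\pi(x))-\hatf^*(x,\pi(x)))^2 \leq \sum_{a}(\hatf_m(x,a)-\hatf^*(x,a))^2 = K\,\E_{a\sim\Unif(\A)}[(\hatf_m(x,a)-\hatf^*(x,a))^2]$. This yields
\begin{equation*}
  \Reg_{\hatf_m}(\pi_{\hatf^*}) \leq 2\sqrt{K}\,\sqrt{\E_{x\sim D_{\Xscript}}\E_{a\sim\Unif(\A)}[(\hatf_m(x,a)-\hatf^*(x,a))^2]}.
\end{equation*}

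The second step is to observe that the definition of $\gamma_m$ in \eqref{eq:gamma} and the first inequality defining $\event$, applied with its index shifted from $m$ to $m-1$, cancel exactly: multiplying the two bounds, every factor involving $\tau_{m-1}-\tau_{m-2}$, $\ln^{\rho'}(\cdot)$, $\ln((m-1)/\delta)$, and $\comp(\F)$ drops out, leaving $\gamma_m^2\,\E_{x}\E_{a\sim\Unif(\A)}[(\hatf_m(x,a)-\hatf^*(x,a))^2] \leq C_3 C_4 K/\epsilon^\rho$. Combining with the previous display gives $\gamma_m\,\Reg_{\hatf_m}(\pi_{\hatf^*}) \leq 2K\sqrt{C_3 C_4/\epsilon^\rho}$, hence $V(p_m,\pi_{\hatf^*}) \leq K + 2K\sqrt{C_3 C_4/\epsilon^\rho}$. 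Since $\epsilon < 1/2$ and $\rho\in(0,1]$ force $\epsilon^\rho < 1$, each term is at most a constant multiple of $K/\sqrt{\epsilon^\rho}$, so the claim holds with $C_6 = 1 + 2\sqrt{C_3 C_4}$.

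I expect the only real subtlety to be the index bookkeeping: making sure the index in $\event$ (which is phrased via $\hatf_{m+1}$ and the epoch lengths $\tau_m-\tau_{m-1}$) is correctly aligned with the index in the definition of $\gamma_m$ (which involves $\tau_{m-1}-\tau_{m-2}$), and separately handling the $m=1$ case, where $\event$ provides no control on $\hatf_1$. Everything else reduces to Jensen/Cauchy--Schwarz and the inequality relating a single-action evaluation to the uniform average of squares over actions.
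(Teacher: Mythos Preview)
Your proof is correct and follows essentially the same approach as the paper, arriving at the identical constant $C_6 = 1 + 2\sqrt{C_3 C_4}$. The only cosmetic difference is that the paper performs the decomposition pointwise in $x$ (bounding $\hatf_m(x,\pi_{\hatf_m}(x)) - \hatf_m(x,\pi_{\hatf^*}(x)) \leq 2\max_a|\hatf_m(x,a)-\hatf^*(x,a)|$ before taking expectations and applying Jensen), whereas you work directly with the expected quantity $\Reg_{\hatf_m}(\pi_{\hatf^*})$ via a telescoping identity; both routes use the same inequalities and yield the same bound.
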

\begin{proof}
Since the action selection kernel $p_1(\cdot|x)$ draws actions uniformly at random for all $x\in\Xscript$, we have that $V(p_1,\pi_{\hatf^*})=K$. Hence, by choosing $C_6\geq1$, we get that $V(p_1,\pi_{\hatf^*})\leq C_6K/\sqrt{\epsilon^{\rho}}$. Now consider any epoch $m\geq 2$. Note that from the definition of $\pi_{\hatf^*}$, for any context $x\in\Xscript$ we get: $$\hatf^*(x,\pi_{\hatf^*}(x)) = \max_{a\in\A}\hatf^*(x,a) \geq \hatf^*(x,\pi_{\hatf_m}(x)).$$
Hence from the above inequality, for any context $x\in\Xscript$ we get:
\begin{align*}
    &\hatf_{m}(x,\pi_{\hatf_{m}}(x))- \hatf_{m}(x,\pi_{\hatf^*}(x))\\
    & = \big(\hatf_{m}(x,\pi_{\hatf_{m}}(x))- \hatf^*(x,\pi_{\hatf_{m}}(x))\big) + \big(\hatf^*(x,\pi_{\hatf_{m}}(x))- \hatf_{m}(x,\pi_{\hatf^*}(x))\big)\\
    &\leq \big(\hatf_{m}(x,\pi_{\hatf_{m}}(x))- \hatf^*(x,\pi_{\hatf_{m}}(x))\big) + \big(\hatf^*(x,\pi_{\hatf^*}(x))- \hatf_{m}(x,\pi_{\hatf^*}(x))\big)\\
    &\leq \big|\hatf_{m}(x,\pi_{\hatf_{m}}(x))- \hatf^*(x,\pi_{\hatf_{m}}(x))\big| + \big|\hatf^*(x,\pi_{\hatf^*}(x))- \hatf_{m}(x,\pi_{\hatf^*}(x))\big|\\
    & \leq 2\max_{a\in\A} \big|\hatf^*(x,a)- \hatf_{m}(x,a)\big|.
\end{align*}
Now from \Cref{lem:boundV}, the above inequality, and Jensen's inequality, we get:
\begin{align*}
    &V(p_m,\pi_{\hatf^*}) \leq \E_{x\sim\Xscript}\Big[K + \gamma_m\big(\hatf_{m}(x,\pi_{\hatf_{m}}(x))- \hatf_{m}(x,\pi_{\hatf^*}(x))\big) \Big]\\
    & \leq K + 2\gamma_m\E_{x\sim\Xscript}\Big[ \max_{a\in\A} \big|\hatf^*(x,a)- \hatf_{m}(x,a)\big| \Big]  \\
    & \leq K + 2\gamma_m \sqrt{\E_{x\sim\Xscript}\Big[ \max_{a\in\A} \big|\hatf^*(x,a)- \hatf_{m}(x,a)\big|^2 \Big]}  \\
    &\leq K + 2\gamma_m\sqrt{ \sum_{a\in\A} \E_{x\sim\Xscript}\Big[\big(\hatf_{m}(x,a)- \hatf^*(x,a)\big)^2\Big]}.
\end{align*}
Now let $C_6 = 1 + 2\sqrt{C_3C_4}$. From the above inequality, we further get:
\begin{align*}
    &V(p_m,\pi_{\hatf^*}) \leq K + 2\gamma_m\sqrt{ \sum_{a\in\A} \E_{x\sim\Xscript}\Big[\big(\hatf_{m}(x,a)- \hatf^*(x,a)\big)^2\Big]}\\
    &\leq K + 2\gamma_m\sqrt{K\frac{C_4\ln^{\rho'}(\tau_{m-1}-\tau_{m-2})\ln((m-1)/\delta)\comp(\F)}{(\epsilon(\tau_{m-1}-\tau_{m-2}))^{\rho}}} \\
    & = K + 2K\sqrt{\frac{C_3C_4}{\epsilon^{\rho}}} \leq \frac{C_6K}{\sqrt{\epsilon^{\rho}}}.
\end{align*}
Where the second inequality follows from the assumption that $\event$ holds. And the last inequality follows from our choice of $C_6$.
\end{proof}

Lemma \ref{lem:boundVall} shows that for any policy $\pi$ and epoch $m$, if the decisional divergence between $Q_m$ and $\pi$ was large, then the decisional divergence between $Q_{m+1}$ and $\pi$ must also be large. Hence the Lemma shows that the active phase of $\GeneralizedFalcon$ stops exploring in a stable manner.

\begin{restatable}[Do not pick up policies that you drop]{lemma}{lemboundVall}
\label{lem:boundVall}
Suppose the event $\event$ defined in \eqref{eq:w-event} holds, and $\delta \leq 0.5$. Then there exists a positive constant $C_7$ such that, for all policies $\pi$ and epochs $m$, we have:
$$ V(p_m,\pi) \leq \frac{C_7K}{\sqrt{\epsilon^{\rho}}} + V(p_{m+1},\pi). $$
\end{restatable}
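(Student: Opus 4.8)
The plan is to sandwich $V(p_m,\pi)$ and $V(p_{m+1},\pi)$ by the two bounds of \Cref{lem:boundV} and then show the \emph{estimated} regret of $\pi$ hardly changes across one epoch, because $\event$ pins both $\hatf_m$ and $\hatf_{m+1}$ near $\hatf^*$. Write $\Reg_{\hatf_j}(\pi)=\E_{x\sim D_{\Xscript}}[\hatf_j(x,\pi_{\hatf_j}(x))-\hatf_j(x,\pi(x))]\ge 0$. \Cref{lem:boundV} gives $V(p_m,\pi)\le K+\gamma_m\Reg_{\hatf_m}(\pi)$ and $\gamma_{m+1}\Reg_{\hatf_{m+1}}(\pi)\le V(p_{m+1},\pi)$, so it is enough to prove $\gamma_m\Reg_{\hatf_m}(\pi)\le\gamma_{m+1}\Reg_{\hatf_{m+1}}(\pi)+\ordO(K/\sqrt{\epsilon^\rho})$ (the case $m=1$ being immediate since $V(p_1,\pi)=K\le K/\sqrt{\epsilon^\rho}$ and $V(p_2,\pi)\ge 0$).

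First I would transfer estimated regret through the target model $\hatf^*$. Since $\pi_{\hatf^*}(x)$ maximizes $\hatf^*(x,\cdot)$ and $\pi_{\hatf_j}(x)$ maximizes $\hatf_j(x,\cdot)$, adding and subtracting $\hatf^*$ at the two relevant actions and applying the triangle inequality (the same manipulation used in the proof of \Cref{lem:boundVopt}) yields, for every $j$,
\[
\big|\Reg_{\hatf_j}(\pi)-\Reg_{\hatf^*}(\pi)\big|\ \le\ 2\,\E_{x\sim D_{\Xscript}}\Big[\max_{a\in\A}\big|\hatf_j(x,a)-\hatf^*(x,a)\big|\Big].
\]
Chaining the $j=m$ and $j=m+1$ instances gives $\Reg_{\hatf_m}(\pi)\le\Reg_{\hatf_{m+1}}(\pi)+2\,\E_x[\max_a|\hatf_m-\hatf^*|]+2\,\E_x[\max_a|\hatf_{m+1}-\hatf^*|]$.

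Next I would multiply by $\gamma_m$ and kill the two error terms. By Jensen and $\max_a(\cdot)^2\le\sum_a(\cdot)^2=K\,\E_{a\sim\Unif(\A)}(\cdot)^2$, then the first bound in $\event$ and the definition of $\gamma_m$, one gets $\gamma_m\,\E_x[\max_a|\hatf_m-\hatf^*|]\le\sqrt{C_3C_4}\,K/\sqrt{\epsilon^\rho}$, exactly as in \Cref{lem:boundVopt}. For the $\hatf_{m+1}$ term one uses the same bound from $\event$ at epoch $m$; the mismatch of indices costs only a constant factor controlled by the doubling schedule, namely $(\tau_m-\tau_{m-1})/(\tau_{m-1}-\tau_{m-2})\le 2$, $\ln(\tau_m-\tau_{m-1})/\ln(\tau_{m-1}-\tau_{m-2})\le 3/2$ (as $\tau_{m-1}-\tau_{m-2}\ge\tau_1\ge 4$) and $\ln(m/\delta)/\ln((m-1)/\delta)\le 2$ (as $\delta\le 1/2$), giving $\gamma_m\,\E_x[\max_a|\hatf_{m+1}-\hatf^*|]\le\sqrt{2(3/2)^{\rho'}C_3C_4}\,K/\sqrt{\epsilon^\rho}$. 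Hence $\gamma_m\Reg_{\hatf_m}(\pi)\le\gamma_m\Reg_{\hatf_{m+1}}(\pi)+C'K/\sqrt{\epsilon^\rho}$ with $C':=2\sqrt{C_3C_4}+2\sqrt{2(3/2)^{\rho'}C_3C_4}$.

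The last step replaces $\gamma_m\Reg_{\hatf_{m+1}}(\pi)$ by $\gamma_{m+1}\Reg_{\hatf_{m+1}}(\pi)\le V(p_{m+1},\pi)$, and this is where the only real friction lies. If $\gamma_m\le\gamma_{m+1}$ it is free (both regrets are nonnegative), and one concludes $V(p_m,\pi)\le V(p_{m+1},\pi)+(1+C')K/\sqrt{\epsilon^\rho}$ using $K\le K/\sqrt{\epsilon^\rho}$. The awkward case is $\gamma_m>\gamma_{m+1}$: here I would note from the explicit formula for $\gamma_m$ and the doubling schedule that, for $m\ge 3$, $\gamma_{m+1}^2/\gamma_m^2=2^\rho\,(\ln(\tau_{m-1}-\tau_{m-2})/\ln(\tau_m-\tau_{m-1}))^{\rho'}\,\ln((m-1)/\delta)/\ln(m/\delta)$ is nondecreasing in $m$ and tends to $2^\rho>1$, so there is a finite $m_0=m_0(\rho,\rho',\tau_1,\delta)$ with $\gamma_{m+1}\ge\gamma_m$ for all $m\ge m_0$; thus $\gamma_m>\gamma_{m+1}$ only for the finitely many epochs $m<m_0$, on which $\gamma_m\le\bar\gamma:=\max_{j<m_0}\gamma_j$, a finite quantity that scales like $\sqrt K$ (not like $T$, $K$ or $\epsilon$). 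For those epochs \Cref{lem:boundV} together with the (trivial) boundedness of instantaneous estimated regret gives $V(p_m,\pi)\le K+\gamma_m\cdot\ordO(1)\le K+\bar\gamma=\ordO(K)\le\ordO(K/\sqrt{\epsilon^\rho})$, while $V(p_{m+1},\pi)\ge 0$, so the claim holds with a constant depending only on $\rho,\rho',C_3,C_4,\tau_1,\delta,\comp(\F)$. Taking $C_7$ to be the larger of the two constants finishes the argument. The main obstacle, then, is not the ``steady-state'' regime but verifying that $\gamma_m$ can decrease across only a bounded number of initial epochs and disposing of those epochs by a crude bound.
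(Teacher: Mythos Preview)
Your approach is essentially the paper's: sandwich via \Cref{lem:boundV}, reduce to bounding $\gamma_m$ times the drift in estimated regret, then control that drift by $\E_x[\max_a|\hatf_m-\hatf_{m+1}|]$ (the paper does this directly and triangles through $\hatf^*$ inside the square root after Jensen; you triangle through $\hatf^*$ at the regret level first, which is equivalent). The index-mismatch bookkeeping you do is the same as the paper's.

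The one place you diverge is your ``awkward case'' $\gamma_m>\gamma_{m+1}$. The paper simply asserts $\gamma_{m+1}\ge\gamma_m$ as a fact and uses it to drop $\gamma_m\Reg_{\hatf_{m+1}}(\pi)\le\gamma_{m+1}\Reg_{\hatf_{m+1}}(\pi)$, obtaining the clean constant $C_7=1+4\sqrt{2^{1+\rho'}C_3C_4}$. Your workaround (finitely many bad epochs, crude bound there) is logically sound, but the resulting $C_7$ then depends on $\delta$, $\tau_1$, and $\comp(\F)$ through $m_0$ and $\bar\gamma$, which is weaker than the paper's constant that depends only on $\rho',C_3,C_4$. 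If you are willing to take $\gamma_{m+1}\ge\gamma_m$ as given (as the paper does), your argument collapses to exactly theirs.
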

\begin{proof}
Consider any policy $\pi$. Since the action selection kernel $p_1(\cdot|x)$ draws actions uniformly at random for all $x$, we have that $V(p_1,\pi)=K$. Hence, by choosing $C_7\geq1$, we get that $V(p_1,,\pi) \leq \frac{C_7K}{\sqrt{\epsilon^{\rho}}} + V(p_2,\pi)$. Now consider any epoch $m\geq 2$. For any context $x\in\Xscript$, we get:
\begin{align*}
  \hatf_{m+1}(x,\pi_{\hatf_{m+1}}(x)) = \max_{a\in\A}\hatf_{m+1}(x,a) \geq
  \begin{cases}
  \hatf_{m+1}(x,\pi_{\hatf_{m}}(x))\\
  \hatf_{m+1}(x,\pi(x)).
  \end{cases}
\end{align*}
From \Cref{lem:boundV}, the fact that $\gamma_{m+1}\geq \gamma_m$, and the above inequality, we get:
\begin{align}
\label{ineq:V-diff-bound}
\begin{split}
    & V(p_m,\pi) - K - V(p_{m+1},\pi)\\
    & \leq \gamma_{m}\E_{x\sim D_{\Xscript}}[\hatf_{m}(x,\pi_{\hatf_{m}}(x)) - \hatf_{m}(x,\pi(x))] - \gamma_{m+1}\E_{x\sim D_{\Xscript}}[\hatf_{m+1}(x,\pi_{\hatf_{m+1}}(x)) -  \hatf_{m+1}(x,\pi(x))]\\
    & \leq \gamma_{m}\E_{x\sim D_{\Xscript}}[(\hatf_{m}(x,\pi_{\hatf_{m}}(x)) - \hatf_{m+1}(x,\pi_{\hatf_{m+1}}(x))) + (\hatf_{m+1}(x,\pi(x)) - \hatf_{m}(x,\pi(x)))]\\
    & \leq \gamma_{m}\E_{x\sim D_{\Xscript}}[(\hatf_{m}(x,\pi_{\hatf_{m}}(x)) - \hatf_{m+1}(x,\pi_{\hatf_{m}}(x))) + (\hatf_{m+1}(x,\pi(x)) - \hatf_{m}(x,\pi(x)))]\\
    & \leq 2\gamma_m\E_{x\sim D_{\Xscript}} \Big[\max_{a} |\hatf_{m+1}(x,a) - \hatf_{m}(x,a)| \Big].
\end{split}
\end{align}

Also note that from Jensen's inequality, we get:
\begin{align}
\label{ineq:epoch-est-diff-bound}
\begin{split}
    & \E_{x\sim D_{\Xscript}} \Big[\max_{a} |\hatf_{m+1}(x,a) - \hatf_{m}(x,a)| \Big]\\
    & \leq \sqrt{\E_{x\sim D_{\Xscript}} \Big[\max_{a} |\hatf_{m+1}(x,a) - \hatf_{m}(x,a)|^2 \Big]}\\
    & \leq \sqrt{\sum_a\E_{x\sim D_{\Xscript}} \Big[(\hatf_{m+1}(x,a) - \hatf_{m}(x,a))^2 \Big]}\\
    & = \sqrt{K \; \E_{x\sim D_{\Xscript}}\E_{a\sim\Unif(\A)} \Big[(\hatf_{m+1}(x,a) - \hatf_{m}(x,a))^2 \Big]}\\
    & \leq \sqrt{2K \; \E_{x\sim D_{\Xscript}}\E_{a\sim\Unif(\A)} \Big[(\hatf_{m+1}(x,a) - \hatf^*(x,a))^2 + (\hatf^*(x,a) - \hatf_{m}(x,a))^2  \Big]}\\
    & \leq \sqrt{\frac{4KC_4\ln^{\rho'}(\tau_m-\tau_{m-1})\ln(m/\delta)\comp(\F)}{(\epsilon(\tau_{m-1}-\tau_{m-2}))^{\rho}}}.
\end{split}
\end{align}
The second last inequality follows from the identity that for any two real numbers $u,v$, $(u+v)^2\leq 2(u^2+v^2)$. The last inequality follows from the assumption that $\event$ holds, the fact that $m\geq m-1$, and the fact that epoch lengths are non-decreasing (i.e. $\tau_m - \tau_{m-1} \geq \tau_{m-1} - \tau_{m-1}$).  Now, by combining \Cref{ineq:V-diff-bound} and \Cref{ineq:epoch-est-diff-bound}, we get:
\begin{align*}
    & V(p_m,\pi) \leq V(p_{m+1},\pi) + K + 4\gamma_m\sqrt{\frac{KC_4\ln^{\rho'}(\tau_m-\tau_{m-1})\ln(m/\delta)\comp(\F)}{(\epsilon(\tau_{m-1}-\tau_{m-2}))^{\rho}}}\\
    & = V(p_{m+1},\pi) + K + 4K\sqrt{\frac{C_3C_4}{\epsilon^{\rho}} \frac{\ln^{\rho'}(\tau_m-\tau_{m-1})}{\ln^{\rho'}(\tau_{m-1}-\tau_{m-2})} \frac{\ln(m/\delta)}{\ln((m-1)/\delta)}  } \\
    & \leq  V(p_{m+1},\pi) + \frac{C_7K}{\sqrt{\epsilon^{\rho}}}.
\end{align*}
Where the last inequality follows from choosing $C_7 = 1 + 4\sqrt{2^{1+\rho'}C_3C_4}$, and from the fact that for $m\geq 2$ and $\delta\leq 0.5$ we have: $\frac{\ln(m/\delta)}{\ln((m-1)/\delta)} \leq 2$, and $\frac{\ln^{\rho'}(\tau_m-\tau_{m-1})}{\ln^{\rho'}(\tau_{m-1}-\tau_{m-2})}\leq \frac{\ln^{\rho'}(\tau_{m-1})}{\ln^{\rho'}(\tau_{m-1}/2)}\leq 2^{\rho'}$.
\end{proof}

\subsection{Bounding prediction error of implicit regret}
\label{app:bounding-prediction-error-implicit-regret}

For any policy, \Cref{lem:policy-reg-bound} bounds the prediction error of implicit regret estimate of the policy at every epoch. This Lemma and its proof are similar to Lemma 8 in \cite{simchi2020bypassing}.

\begin{restatable}[Bounds on implicit estimates of policy regret]{lemma}{lemboundReg}
\label{lem:policy-reg-bound}
Suppose the event $\event$ defined in \eqref{eq:w-event} holds, and $\delta \leq 0.5$. Then there exists positive constants $C_0, C_8, C_9$ such that, for all policies $\pi$ and epochs $m$, we have:
\begin{align*}
    \Reg(\pi) \leq 2\Reg_{\hatf_{m}}(\pi) + \frac{C_0K}{\gamma_m} + C_8\sqrt{\frac{KB}{\sqrt{\epsilon^{\rho}}}} + C_9\sqrt{V(p_m,\pi)B}\\
    \Reg_{\hatf_{m}}(\pi) \leq 2\Reg(\pi) + \frac{C_0K}{\gamma_m} + C_8\sqrt{\frac{KB}{\sqrt{\epsilon^{\rho}}}} + C_9\sqrt{V(p_m,\pi)B}
\end{align*}
\end{restatable}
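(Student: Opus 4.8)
The plan is to adapt the proof of Lemma~8 of \cite{simchi2020bypassing} to carry the bias term $B$. The two inequalities are proved the same way (swapping the roles of $f^*$ and $\hatf_m$), so I describe the first. The epoch $m=1$ case is trivial: $\hatf_1\equiv 0$ forces $\Reg_{\hatf_1}(\pi)=0$, $V(p_1,\pi)=K$, and $\Reg(\pi)\le 1$, so any $C_0,C_8,C_9\ge 1$ work. So fix $m\ge 2$ and a policy $\pi$, and work on the event $\event$. I would start from
\[
\Reg(\pi)=\Reg(\pi_{\hatf^*})+\big(R(\pi_{\hatf^*})-R(\pi)\big),
\]
bound $\Reg(\pi_{\hatf^*})\le 2\sqrt{B}$ by \Cref{lem:reg-of-best-pred}, and insert $\hatf_m$ as an intermediary in the second summand using $R_{\hatf_m}(\pi_{\hatf^*})\le R_{\hatf_m}(\pi_{\hatf_m})$:
\[
R(\pi_{\hatf^*})-R(\pi)\le \big|R_{\hatf_m}(\pi_{\hatf^*})-R(\pi_{\hatf^*})\big|+\Reg_{\hatf_m}(\pi)+\big|R_{\hatf_m}(\pi)-R(\pi)\big|.
\]
This middle copy of $\Reg_{\hatf_m}(\pi)$, together with a fractional copy produced below, is what yields the coefficient $2$.

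Next I would bound the two prediction-error terms with \Cref{lem:reg-est-accuracy} applied at epoch $m-1$, which gives, for any policy $\pi'$,
\[
\big|R_{\hatf_m}(\pi')-R(\pi')\big|\le \sqrt{V(p_{m-1},\pi')}\,\sqrt{B}+\frac{\sqrt{V(p_{m-1},\pi')}\,\sqrt{K}}{2\gamma_m}.
\]
For $\pi'=\pi_{\hatf^*}$ I would control $V(p_{m-1},\pi_{\hatf^*})\le C_6K/\sqrt{\epsilon^\rho}$ by \Cref{lem:boundVopt}. For $\pi'=\pi$ I would first pass from $V(p_{m-1},\pi)$ to $V(p_m,\pi)+C_7K/\sqrt{\epsilon^\rho}$ using \Cref{lem:boundVall}, and then use \Cref{lem:boundV} in the form $V(p_m,\pi)\le K+\gamma_m\Reg_{\hatf_m}(\pi)$. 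Applying $\sqrt{u+v}\le\sqrt u+\sqrt v$ throughout separates the $\sqrt{B}$ contributions — which assemble into bounded multiples of $\sqrt{KB/\sqrt{\epsilon^\rho}}$ and of $\sqrt{V(p_m,\pi)B}$ — from the $1/\gamma_m$ contributions. The latter produce terms of the form $K/\gamma_m$ together with a leftover $\sqrt{K\gamma_m\Reg_{\hatf_m}(\pi)}/\gamma_m=\sqrt{K\Reg_{\hatf_m}(\pi)/\gamma_m}$, to which I would apply AM--GM to get a fractional multiple of $\Reg_{\hatf_m}(\pi)$ plus more $K/\gamma_m$; the fractional copy plus the bridge copy stays below $2\Reg_{\hatf_m}(\pi)$, and everything else collapses into $C_0K/\gamma_m+C_8\sqrt{KB/\sqrt{\epsilon^\rho}}+C_9\sqrt{V(p_m,\pi)B}$ with $C_0,C_8,C_9$ expressed through $C_3,\dots,C_7$.

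The reverse inequality runs identically with the roles of $(\hatf^*,\hatf_m)$ swapped: I would decompose $\Reg_{\hatf_m}(\pi)=\Reg_{\hatf_m}(\pi_{\hatf^*})+(R_{\hatf_m}(\pi_{\hatf^*})-R_{\hatf_m}(\pi))$-style, but more cleanly bridge through $f^*$ using $R(\pi_{\hatf_m})\le R(\pi_{f^*})$ and $R_{\hatf_m}(\pi)\le R_{\hatf_m}(\pi_{\hatf_m})$, replace \Cref{lem:reg-of-best-pred} by \Cref{lem:boundB}, and use the bound $V(p_{m-1},\pi_{\hatf_m})=O(K/\sqrt{\epsilon^\rho})$, which follows from \Cref{lem:boundV} together with \Cref{lem:ConReg} by the same argument that proves \Cref{lem:boundVopt} for $\pi_{\hatf^*}$.

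The step I expect to be most delicate is the bookkeeping around the decisional-divergence factor attached to $\pi_{\hatf^*}$ (and to $\pi_{\hatf_m}$ in the reverse direction). Because of the passive phase, \Cref{lem:boundVopt} only gives $V(p_{m-1},\pi_{\hatf^*})=O(K/\sqrt{\epsilon^\rho})$ rather than $O(K)$, so the term $\sqrt{V(p_{m-1},\pi_{\hatf^*})}\,\sqrt{K}/\gamma_m$ is only of order $(K/\gamma_m)\cdot\epsilon^{-\rho/4}$; checking that after all the square-root splittings and AM--GM steps every term lands in one of the three allowed buckets — with the coefficient on $\Reg_{\hatf_m}(\pi)$ held at $\le 2$ and nothing orphaned — is where the precise values of $C_0,\dots$ must be pinned down, and is the part that absorbs most of the work.
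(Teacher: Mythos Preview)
There is a genuine gap. Your direct (non-inductive) argument produces a term of order $(K/\gamma_m)\,\epsilon^{-\rho/4}$ that cannot be placed in any of the three allowed buckets with $\epsilon$-independent constants. Concretely: from \Cref{lem:reg-est-accuracy} at epoch $m-1$ you get $\sqrt{V(p_{m-1},\pi_{\hatf^*})}\,\sqrt{K}/(2\gamma_m)$, and bounding $V(p_{m-1},\pi_{\hatf^*})\le C_6K/\sqrt{\epsilon^\rho}$ via \Cref{lem:boundVopt} yields $\tfrac{\sqrt{C_6}}{2}(K/\gamma_m)\,\epsilon^{-\rho/4}$; the same thing happens for the $\pi$-term after you invoke \Cref{lem:boundVall}, which leaves $\sqrt{C_7K/\sqrt{\epsilon^\rho}}\cdot\sqrt{K}/(2\gamma_m)$. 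Now specialize to $B=0$ (realizability) and $\pi=\pi_{\hatf_m}$: the right-hand side of the target inequality is just $C_0K/\gamma_m$, while your bound contains a term $\sim(K/\gamma_m)\,\epsilon^{-\rho/4}$, and $\epsilon\in(0,1/2)$ is a free input parameter. No amount of AM--GM or square-root splitting fixes this, because every route that avoids the inductive hypothesis forces you to cash in the only available bound on $V(p_{m-1},\pi_{\hatf^*})$---namely \Cref{lem:boundVopt}---on a term that carries $\sqrt{K}/\gamma_m$ rather than $\sqrt{B}$, and that is where the orphaned $\epsilon^{-\rho/4}$ comes from. The paragraph where you flag this as ``delicate bookkeeping'' is in fact the failure point.

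The paper's proof avoids this by running a \emph{joint induction} on $m$ over both inequalities simultaneously. After AM--GM it arrives at $V(p_m,\pi_{\hatf^*})/(5\gamma_{m+1})$, uses \Cref{lem:boundV} to rewrite this as $K/(5\gamma_{m+1})+(\gamma_m/\gamma_{m+1})\Reg_{\hatf_m}(\pi_{\hatf^*})/5$, and then applies the \emph{inductive hypothesis} to $\Reg_{\hatf_m}(\pi_{\hatf^*})$. The IH already packages the $\epsilon$-dependence inside the $C_8\sqrt{KB/\sqrt{\epsilon^\rho}}$ term (which carries a factor $\sqrt{B}$), so after dividing by $5$ it re-enters the same bucket with a coefficient $<C_8$; the only place \Cref{lem:boundVopt} is used is on the residual $\sqrt{V(p_m,\pi_{\hatf^*})B}$ piece, where the $\epsilon^{-\rho/4}$ is again safely coupled to $\sqrt{B}$. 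The same inductive mechanism handles the $\pi$-term and the reverse inequality (the latter also uses the just-established first inequality at $m+1$ applied to $\pi_{\hatf_{m+1}}$). In short, the induction is not decorative: it is exactly what lets the $\epsilon$-dependence stay glued to $\sqrt{B}$ and never leak into the $K/\gamma_m$ term.
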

\begin{proof}
We will prove this by induction. Let $C_0$ be a positive constant such that $C_0 \geq 1 \geq \gamma_1/K$. The base case then follows from the fact that for all policies $\pi$, we have:
\begin{align*}
    \Reg(\pi) \leq 1 \leq C_0K/\gamma_1\\
    \Reg_{\hatf_{1}}(\pi) \leq 1 \leq C_0K/\gamma_1.
\end{align*}
For the inductive step, fix some $m\geq 1$. Assume for all policies $\pi$, we have:
\begin{align}
\label{eq:IH}
    \Reg(\pi) \leq 2\Reg_{\hatf_{m}}(\pi) + \frac{C_0K}{\gamma_m} + C_8\sqrt{\frac{KB}{\sqrt{\epsilon^{\rho}}}} + C_9\sqrt{V(p_m,\pi)B} \nonumber \\
    \Reg_{\hatf_{m}}(\pi) \leq 2\Reg(\pi) + \frac{C_0K}{\gamma_m} + C_8\sqrt{\frac{KB}{\sqrt{\epsilon^{\rho}}}} + C_9\sqrt{V(p_m,\pi)B}
\end{align}
Note that:
\begin{align*}
    &\Reg(\pi) - \Reg_{\hatf_{m+1}}(\pi) \\
    =& \Big(R(\pi_{f^*}) - R(\pi)\Big) - \Big(R_{\hatf_{m+1}}(\pi_{\hatf_{m+1}}) - R_{\hatf_{m+1}}(\pi)\Big) \\
    \leq & \Big(R(\pi_{\hatf^*}) - R(\pi)\Big) - \Big(R_{\hatf_{m+1}}(\pi_{\hatf_{m+1}}) - R_{\hatf_{m+1}}(\pi)\Big) + 2\sqrt{B} \\
    \leq & \Big(R(\pi_{\hatf^*}) - R(\pi)\Big) - \Big(R_{\hatf_{m+1}}(\pi_{\hatf^*}) - R_{\hatf_{m+1}}(\pi)\Big) + 2\sqrt{B} \\
    \leq & |R(\pi_{\hatf^*}) - R_{\hatf_{m+1}}(\pi_{\hatf^*})| + |R(\pi) - R_{\hatf_{m+1}}(\pi))| + 2\sqrt{B} .
\end{align*}
Where the first inequality follows from \Cref{lem:reg-of-best-pred}, and the second inequality follows from the definition of $\pi_{\hatf_{m+1}}$ which gives us that $  R_{\hatf_{m+1}}(\pi_{\hatf^*}) \leq R_{\hatf_{m+1}}(\pi_{\hatf_{m+1}})$. Now, further simplifying the above inequality we get:
\begin{align}
\label{eq:inductive-bound-for-pi}
\begin{split}
    &\Reg(\pi) - \Reg_{\hatf_{m+1}}(\pi) \\
    \leq & |R(\pi_{\hatf^*}) - R_{\hatf_{m+1}}(\pi_{\hatf^*})| + |R(\pi) - R_{\hatf_{m+1}}(\pi))| + 2\sqrt{B} \\
    \leq &  \sqrt{V(p_m,\pi_{\hatf^*})}\sqrt{B} + \frac{\sqrt{V(p_m,\pi_{\hatf^*})}\sqrt{K}}{2\gamma_{m+1}} + \sqrt{V(p_m,\pi)}\sqrt{B} + \frac{\sqrt{V(p_m,\pi)}\sqrt{K}}{2\gamma_{m+1}} + 2\sqrt{B} \\
    \leq & \frac{5K}{8\gamma_{m+1}} + \frac{V(p_m,\pi_{\hatf^*})}{5\gamma_{m+1}} + \frac{V(p_m,\pi)}{5\gamma_{m+1}} + \sqrt{B}\bigg(\sqrt{V(p_m,\pi_{\hatf^*})} + \sqrt{V(p_m,\pi)} + 2 \bigg) \\
    \leq & \frac{5K}{8\gamma_{m+1}} + \frac{V(p_m,\pi_{\hatf^*})}{5\gamma_{m+1}} + \frac{V(p_m,\pi)}{5\gamma_{m+1}} + \Big(\sqrt{C_6}+\sqrt{C_7}\Big)\sqrt{\frac{BK}{\sqrt{\epsilon^{\rho}}}} + \sqrt{B}\sqrt{V(p_{m+1},\pi)} + 2\sqrt{B}.
\end{split}
\end{align}
Where the second inequality follow from \Cref{lem:reg-est-accuracy}, the third inequality is an application of Cauchy-Schwarz inequality, and the last inequality follows from Lemmas \ref{lem:boundVopt} and \ref{lem:boundVall}. Now note that:
\begin{align}
\label{eq:inductive-hyp-for-opt-pi}
    &\frac{V(p_m,\pi_{\hatf^*})}{5\gamma_{m+1}} \leq \frac{K+\gamma_m\Reg_{\hatf_m}(\pi_{\hatf^*})}{5\gamma_{m+1}} \nonumber\\
    &\leq \frac{K+\gamma_m\Big( 2\Reg(\pi_{\hatf^*}) + \frac{C_0K}{\gamma_m} + C_8\sqrt{\frac{KB}{\sqrt{\epsilon^{\rho}}}} + C_9\sqrt{V(p_m,\pi_{\hatf^*})B}  \Big)}{5\gamma_{m+1}} \nonumber \\
    & \leq \frac{K(1+C_0)}{5\gamma_{m+1}} + \frac{2\Reg(\pi_{\hatf^*})}{5} + \frac{C_8}{5} \sqrt{\frac{KB}{\sqrt{\epsilon^{\rho}}}} + \frac{C_9}{5}\sqrt{V(p_m,\pi_{\hatf^*})B} \nonumber \\
    & \leq \frac{K(1+C_0)}{5\gamma_{m+1}} + \frac{4\sqrt{B}}{5} + \frac{C_8 + C_9\sqrt{C_6}}{5} \sqrt{\frac{KB}{\sqrt{\epsilon^{\rho}}}}.
\end{align}
Where the first inequality follows from \Cref{lem:boundV}, the second inequality follows from \Cref{eq:IH}, and the last inequality follows from Lemmas \ref{lem:reg-of-best-pred} and \ref{lem:boundVopt}. Similarly note that:
\begin{align}
\label{eq:inductive-hyp-for-pi}
    &\frac{V(p_m,\pi)}{5\gamma_{m+1}} \leq \frac{K+\gamma_m\Reg_{\hatf_m}(\pi)}{5\gamma_{m+1}} \nonumber\\
    &\leq \frac{K+\gamma_m\Big( 2\Reg(\pi) + \frac{C_0K}{\gamma_m} + C_8\sqrt{\frac{KB}{\sqrt{\epsilon^{\rho}}}} + C_9\sqrt{V(p_m,\pi)B}  \Big)}{5\gamma_{m+1}} \nonumber \\
    & \leq \frac{K(1+C_0)}{5\gamma_{m+1}} + \frac{2\Reg(\pi)}{5} + \frac{C_8}{5} \sqrt{\frac{KB}{\sqrt{\epsilon^{\rho}}}} + \frac{C_9}{5}\sqrt{V(p_m,\pi)B} \nonumber \\
    & \leq \frac{K(1+C_0)}{5\gamma_{m+1}} + \frac{2\Reg(\pi)}{5} + \frac{C_8 + C_9\sqrt{C_7}}{5} \sqrt{\frac{KB}{\sqrt{\epsilon^{\rho}}}} + \frac{C_9}{5}\sqrt{V(p_{m+1},\pi)B}.
\end{align}
Where the first inequality follows from \Cref{lem:boundV}, the second inequality follows from \Cref{eq:IH}, and the last inequality follows from \Cref{lem:boundVall}.
Now from combining \Cref{eq:inductive-bound-for-pi}, \Cref{eq:inductive-hyp-for-opt-pi}, and \Cref{eq:inductive-hyp-for-pi}, we get:
\begin{align*}
    \Reg(\pi) - \Reg_{\hatf_{m+1}}(\pi) \leq &\frac{5K}{8\gamma_{m+1}} + \frac{2K(1+C_0)}{5\gamma_{m+1}} + \frac{2\Reg(\pi)}{5} + \frac{14}{5}\sqrt{B}\\
    &+ \frac{2C_8 + (C_9+5)(\sqrt{C_6}+\sqrt{C_7})}{5}\sqrt{\frac{KB}{\sqrt{\epsilon^{\rho}}}} + \frac{C_9+5}{5}\sqrt{V(p_{m+1},\pi)B}
\end{align*}
Which implies:
\begin{align*}
    \Reg(\pi)\leq &  \frac{5}{3}\Reg_{\hatf_{m+1}}(\pi) + \frac{K(2C_0+5.125)}{3\gamma_{m+1}} + \frac{14}{3}\sqrt{B} \\
    &+ \frac{2C_8 + (C_9+5)(\sqrt{C_6}+\sqrt{C_7})}{3}\sqrt{\frac{KB}{\sqrt{\epsilon^{\rho}}}} + \frac{C_9+5}{3}\sqrt{V(p_{m+1},\pi)B}
\end{align*}
Now choosing constants so that $C_0\geq 5.125$, $C_9\geq 2.5$, and $C_8\geq (C_9+5)(\sqrt{C_6}+\sqrt{C_7})$. The above inequality then gives us:
\begin{align}
\label{eq:IH-implication}
    \Reg(\pi)\leq &  2\Reg_{\hatf_{m+1}}(\pi) + \frac{C_0K}{\gamma_{m+1}} + C_8\sqrt{\frac{KB}{\sqrt{\epsilon^{\rho}}}} + C_9\sqrt{V(p_{m+1},\pi)B}.
\end{align}
Hence from our induction hypothesis (\Cref{eq:IH}), we get \Cref{eq:IH-implication}, which provides the required upper bound on $\Reg(\pi)$ in terms of $\Reg_{\hatf_{m+1}}(\pi)$. To complete the inductive argument, we need to show the corresponding upper bound on $\Reg_{\hatf_{m+1}}(\pi)$. Similar to \Cref{eq:inductive-bound-for-pi}, we get:
\begin{align}
\label{eq:inductive-bound-for-pi-est}
    &\Reg_{\hatf_{m+1}}(\pi) - \Reg(\pi) \nonumber\\
    =& \Big(R_{\hatf_{m+1}}(\pi_{\hatf_{m+1}}) - R_{\hatf_{m+1}}(\pi)\Big) - \Big(R(\pi_{f^*}) - R(\pi)\Big)  \nonumber\\
    \leq & \Big(R_{\hatf_{m+1}}(\pi_{\hatf_{m+1}}) - R_{\hatf_{m+1}}(\pi)\Big) - \Big(R(\pi_{\hatf_{m+1}}) - R(\pi)\Big)  \nonumber\\
    \leq & |R(\pi_{\hatf_{m+1}}) - R_{\hatf_{m+1}}(\pi_{\hatf_{m+1}})| + |R(\pi) - R_{\hatf_{m+1}}(\pi))| \nonumber\\
    \leq &  \sqrt{V(p_m,\pi_{\hatf_{m+1}})}\sqrt{B} + \frac{\sqrt{V(p_m,\pi_{\hatf_{m+1}})}\sqrt{K}}{2\gamma_{m+1}} + \sqrt{V(p_m,\pi)}\sqrt{B} + \frac{\sqrt{V(p_m,\pi)}\sqrt{K}}{2\gamma_{m+1}} \nonumber \\
    \leq & \frac{5K}{8\gamma_{m+1}} + \frac{V(p_m,\pi_{\hatf_{m+1}})}{5\gamma_{m+1}} + \frac{V(p_m,\pi)}{5\gamma_{m+1}} + \sqrt{B}\bigg(\sqrt{V(p_m,\pi_{\hatf_{m+1}})} + \sqrt{V(p_m,\pi)}\bigg) \nonumber \\
    \leq & \frac{5K}{8\gamma_{m+1}} + \frac{V(p_m,\pi_{\hatf_{m+1}})}{5\gamma_{m+1}} + \frac{V(p_m,\pi)}{5\gamma_{m+1}} + 2\sqrt{\frac{C_7BK}{\sqrt{\epsilon^{\rho}}}} + \sqrt{B}\bigg(\sqrt{V(p_{m+1},\pi_{\hatf_{m+1}})} + \sqrt{V(p_{m+1},\pi)}\bigg).
\end{align}
Where the first inequality follows from the definition of $\pi_{f^*}$, the second inequality is straight forward, the third inequality follows from \Cref{lem:reg-est-accuracy}, the forth inequality is an application of Cauchy-Schwarz inequality, and the last inequality follows from \Cref{lem:boundVall}. Similar to \Cref{eq:inductive-hyp-for-opt-pi}, we get:
\begin{align}
\label{eq:inductive-hyp-for-mplusone-est}
    &\frac{V(p_m,\pi_{\hatf_{m+1}})}{5\gamma_{m+1}} \leq \frac{K+\gamma_m\Reg_{\hatf_m}(\pi_{\hatf_{m+1}})}{5\gamma_{m+1}} \nonumber\\
    &\leq \frac{K+\gamma_m\Big( 2\Reg(\pi_{\hatf_{m+1}}) + \frac{C_0K}{\gamma_m} + C_8\sqrt{\frac{KB}{\sqrt{\epsilon^{\rho}}}} + C_9\sqrt{V(p_m,\pi_{\hatf_{m+1}})B}  \Big)}{5\gamma_{m+1}} \nonumber \\
    & \leq \frac{K(1+C_0)}{5\gamma_{m+1}} + \frac{2\Reg(\pi_{\hatf_{m+1}})}{5} + \frac{C_8}{5} \sqrt{\frac{KB}{\sqrt{\epsilon^{\rho}}}} + \frac{C_9}{5}\sqrt{V(p_m,\pi_{\hatf_{m+1}})B} \nonumber \\
    & \leq \frac{K(1+C_0)}{5\gamma_{m+1}} + \frac{2\Reg(\pi_{\hatf_{m+1}})}{5} + \frac{C_8 + C_9\sqrt{C_7}}{5} \sqrt{\frac{KB}{\sqrt{\epsilon^{\rho}}}} + \frac{C_9}{5}\sqrt{V(p_{m+1},\pi_{\hatf_{m+1}})B} \nonumber\\
    & \leq \frac{K(1+3C_0)}{5\gamma_{m+1}} + \frac{3C_8 + C_9\sqrt{C_7}}{5} \sqrt{\frac{KB}{\sqrt{\epsilon^{\rho}}}} + \frac{3C_9}{5}\sqrt{V(p_{m+1},\pi_{\hatf_{m+1}})B}.
\end{align}
Where the first inequality follows from \Cref{lem:boundV}, the second inequality follows from \Cref{eq:IH}, the forth inequality follows from \Cref{lem:boundVall}, and the last inequality follows from \Cref{eq:IH-implication}.
Also note that:
\begin{align}
\label{eq:bound-on-expected-ipw-on-generator}
    V(p_{m+1},\pi_{\hatf_{m+1}}) \leq K+\gamma_{m+1}\Reg_{\hatf_{m+1}}(\pi_{\hatf_{m+1}}) = K.
\end{align}
Combining \Cref{eq:inductive-hyp-for-pi}, \Cref{eq:inductive-bound-for-pi-est}, \Cref{eq:inductive-hyp-for-mplusone-est}, and \Cref{eq:bound-on-expected-ipw-on-generator}, we get:
\begin{align*}
    \Reg_{\hatf_{m+1}}(\pi) \leq & \frac{7\Reg(\pi)}{5} + \frac{5K}{8\gamma_{m+1}} + \frac{2K(1+2C_0)}{5\gamma_{m+1}} + \frac{4C_8 + 2\sqrt{C_7}(C_9+5)}{5}\sqrt{\frac{KB}{\sqrt{\epsilon^{\rho}}}}\\
    & + \frac{C_9+5}{5}\sqrt{V(p_{m+1},\pi)B} + \frac{3C_9+5}{5}\sqrt{KB}.
\end{align*}
Now choosing constants so that $C_0\geq 2$, $C_9\geq 2.5$, and $C_8\geq 2\sqrt{C_7}(C_9+5)+(3C_9+5)$. The above inequality then gives us:
\begin{align}
    \Reg_{\hatf_{m+1}}(\pi) \leq & 2\Reg(\pi) + \frac{C_0K}{\gamma_{m+1}} + C_8\sqrt{\frac{KB}{\sqrt{\epsilon^{\rho}}}} + C_9\sqrt{V(p_{m+1},\pi)B}.
\end{align}
This completes the inductive step.
\end{proof}

\subsection{Bounding true regret}
\label{app:bounding-true-regret}

For any epoch $m$, \Cref{lem:QmRegTrue} bounds regret of the randomized policy $Q_m$.

\begin{restatable}[Action selection kernel has low true regret]{lemma}{lemQmRegTrue}
\label{lem:QmRegTrue}
Suppose the event $\event$ defined in \eqref{eq:w-event} holds, and $\delta \leq 0.5$. And let $C_{10} := C_8+C_9$. Then for all epochs $m$, we have:
$$ \sum_{\pi \in \Psi} Q_m(\pi)\Reg(\pi) \leq \frac{(2+C_0)K}{\gamma_m} + C_{10}\sqrt{\frac{KB}{\sqrt{\epsilon^{\rho}}}}.$$
\end{restatable}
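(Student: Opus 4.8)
The plan is to combine the two per-policy bounds already established --- \Cref{lem:QmRegEst}, which controls the \emph{estimated} regret of the randomized policy $Q_m$, and \Cref{lem:policy-reg-bound}, which relates true regret to estimated regret --- and then average out the residual $V(p_m,\pi)$ term using the duality bound in \Cref{lem:QmRootV}.

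Concretely, I would first apply the first inequality of \Cref{lem:policy-reg-bound} pointwise to every $\pi\in\Psi$ and take the $Q_m$-average of both sides, obtaining
\begin{align*}
  \sum_{\pi\in\Psi} Q_m(\pi)\Reg(\pi)
    \leq 2\sum_{\pi\in\Psi} Q_m(\pi)\Reg_{\hatf_m}(\pi)
    + \frac{C_0K}{\gamma_m}
    + C_8\sqrt{\frac{KB}{\sqrt{\epsilon^{\rho}}}}
    + C_9\sqrt{B}\sum_{\pi\in\Psi} Q_m(\pi)\sqrt{V(p_m,\pi)},
\end{align*}
where I have pulled the constant $\sqrt{B}$ out of the last sum. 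Next I would bound the two remaining sums: \Cref{lem:QmRegEst} gives $\sum_{\pi} Q_m(\pi)\Reg_{\hatf_m}(\pi)\leq K/\gamma_m$, and \Cref{lem:QmRootV} gives $\sum_{\pi} Q_m(\pi)\sqrt{V(p_m,\pi)}\leq\sqrt{K}$, so the last term is at most $C_9\sqrt{KB}$. Since $\epsilon<0.5<1$ implies $\sqrt{\epsilon^{\rho}}<1$, we have $\sqrt{KB}\leq\sqrt{KB/\sqrt{\epsilon^{\rho}}}$, and collecting terms yields
\begin{align*}
  \sum_{\pi\in\Psi} Q_m(\pi)\Reg(\pi)
    \leq \frac{(2+C_0)K}{\gamma_m} + (C_8+C_9)\sqrt{\frac{KB}{\sqrt{\epsilon^{\rho}}}}
    = \frac{(2+C_0)K}{\gamma_m} + C_{10}\sqrt{\frac{KB}{\sqrt{\epsilon^{\rho}}}},
\end{align*}
which is exactly the claimed bound with $C_{10}=C_8+C_9$.

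There is no real obstacle here: all the work has been front-loaded into \Cref{lem:policy-reg-bound} (whose inductive proof is the genuinely delicate part) and into \Cref{lem:QmRegEst}. The only points requiring a little care are (i) remembering that the event $\event$ and $\delta\leq 0.5$ are the hypotheses under which \Cref{lem:policy-reg-bound} is valid, so they must be carried over into the statement, and (ii) the trivial monotonicity step $\sqrt{KB}\leq\sqrt{KB/\sqrt{\epsilon^{\rho}}}$ that lets us fold the $C_9\sqrt{KB}$ contribution into the misspecification term rather than leaving a separate summand. Everything else is a one-line linear combination of inequalities already in hand.
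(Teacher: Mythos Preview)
Your proposal is correct and matches the paper's proof essentially line for line: apply \Cref{lem:policy-reg-bound} pointwise, average over $Q_m$, then invoke \Cref{lem:QmRegEst} and \Cref{lem:QmRootV} to handle the two sums, and finally absorb $C_9\sqrt{KB}$ into the $\sqrt{KB/\sqrt{\epsilon^{\rho}}}$ term. You have in fact been slightly more careful than the paper by making the monotonicity step $\sqrt{KB}\leq\sqrt{KB/\sqrt{\epsilon^{\rho}}}$ (which uses $\epsilon\leq 1$) explicit.
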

\begin{proof}
Note that:
\begin{align*}
    &\sum_{\pi \in \Psi} Q_m(\pi)\Reg(\pi)\\
    &\leq \sum_{\pi \in \Psi} Q_m(\pi)\bigg( 2\Reg_{\hatf_{m}}(\pi) + \frac{C_0K}{\gamma_m} + C_8\sqrt{\frac{KB}{\sqrt{\epsilon^{\rho}}}} + C_9\sqrt{V(p_m,\pi)B} \bigg)\\
    & \leq \frac{2K}{\gamma_m} + \frac{C_0K}{\gamma_m} + C_8\sqrt{\frac{KB}{\sqrt{\epsilon^{\rho}}}} + C_9\sqrt{KB} \leq \frac{(2+C_0)K}{\gamma_m} + C_{10}\sqrt{\frac{KB}{\sqrt{\epsilon^{\rho}}}}.
\end{align*}
Where the first inequality follows from \Cref{lem:policy-reg-bound}, and the second inequality follows from Lemmas \ref{lem:QmRegEst} and \Cref{lem:QmRootV}.
\end{proof}

\subsection{Proof of Theorem \ref{thm:main-theorem}}
\label{app:proof-of-main-theorem}

We can now bound the cumulative regret of \GeneralizedFalcon. Fix some (possibly unknown) horizon $T$. Let $\activesteps\subseteq [T]$ be the set of time-steps that are in the active phase of some epoch. Similarly let $\passivesteps\subseteq [T]$ be the set of time-steps that are in the active phase of some epoch. Let $m(t)$ denote the epoch in which the time-step $t$ occurs. For each round $t\in\{1,2,\dots,T\}$, define: $$M_t:=r_t(\pi^*(x_t))-r_t(a_t)-\sum_{\pi\in\Psi} Q_{m(t)}(\pi)\Reg(\pi).$$
Recall that from \Cref{lem:conditional-reward}, for all $t\in\activesteps$ we have:
$$ \E_{x_t,r_t,a_t}[r_t(\pi^*(x))-r_t(a_t)|\Gamma_{t-1}] =  \sum_{\pi\in\Psi}Q_{m(t)}(\pi)\Reg(\pi). $$
Hence from Azuma's inequality, with probability at least $1-\delta/2$, we have:
\begin{align}
\label{ineq:azuma-bound}
  \sum_{t\in\activesteps} M_t \leq 2\sqrt{2|\activesteps|\log(2/\delta)} \leq 2\sqrt{2T\log(2/\delta)}.
\end{align}
Hence when \Cref{ineq:azuma-bound} holds, we get:
\begin{align}
\label{ineq:cum-reg-bound}
\begin{split}
  &\sum_{t=1}^T \Big( r_t(\pi^*(x_t))-r_t(a_t) \Big)\\
  & = \sum_{t\in\passivesteps} \Big( r_t(\pi^*(x_t))-r_t(a_t) \Big) + \sum_{t\in\activesteps} \Big( r_t(\pi^*(x_t))-r_t(a_t) \Big) \\
  & \leq |\passivesteps| + \sum_{t\in\activesteps}\sum_{\pi\in\Psi} Q_{m(t)}(\pi)\Reg(\pi) + \sqrt{8T\log(2/\delta)}
\end{split}
\end{align}
Since in any epoch $m\geq 1$, there are at most $1 + \epsilon(\tau_m-\tau_{m-1})$ passive time-steps. Therefore:
\begin{align}
\label{ineq:bound-passive-steps}
  |\passivesteps| \leq \epsilon T + m(T) \leq 1 + \log_2(T) + \epsilon T.
\end{align}
Further when $\event$ holds, from \Cref{lem:QmRegTrue}, we have:
\begin{align}
\label{ineq:Qtrue-active}
\begin{split}
  &\sum_{t\in\activesteps}\sum_{\pi\in\Psi} Q_{m(t)}(\pi)\Reg(\pi) \\
  & \leq \tau_1 + \sum_{ \{t\in\activesteps \suchthat t \; \geq \tau_1+1\} }\sum_{\pi\in\Psi} Q_{m(t)}(\pi)\Reg(\pi)\\
  & \leq \tau_1 + C_{10}\sqrt{\frac{KB}{\sqrt{\epsilon^{\rho}}}}T + \sum_{t=\tau_1+1}^T \frac{(2+C_0)K}{\gamma_{m(t)}}\\
  & \leq \tau_1 + C_{10}\sqrt{\frac{KB}{\sqrt{\epsilon^{\rho}}}}T + \sum_{m=2}^{m(T)} \frac{(2+C_0)K}{\gamma_{m}}(\tau_m-\tau_{m-1})
\end{split}
\end{align}
Since $\tau_1\geq 4$, $\tau_{m(t)-1}\leq t$ for all $t\geq 1$, and $\tau_m=\tau_1 2^{m-1}$ for all $m\geq1$. We get that $\tau_{m(T)-1}\leq T$, $\tau_{m(T)}\leq 2T$, and $m-1 \leq \log_2(T)$ for all $m\leq m(T)$. Therefore, we get:
\begin{align}
\label{ineq:sum-of-inv-gamma}
\begin{split}
  &\sum_{m=2}^{m(T)} \frac{(2+C_0)K}{\gamma_{m}}(\tau_m-\tau_{m-1})\\
  & = \sum_{m=2}^{m(T)} (2+C_0)K \sqrt{\frac{\ln^{\rho'}(\tau_{m-1}-\tau_{m-2})\ln((m-1)/\delta)\comp(\F)}{C_3K(\tau_{m-1}-\tau_{m-2})^{\rho}}}(\tau_m-\tau_{m-1})\\
  & \leq \frac{(2+C_0)}{\sqrt{C_3}} \sqrt{K\ln^{\rho'}(T)\ln(\log_2(T)/\delta)\comp(\F)} \sum_{m=2}^{m(T)} \frac{\tau_m-\tau_{m-1}}{\sqrt{(\tau_{m-1}-\tau_{m-2})^{\rho}}}.
\end{split}
\end{align}
Since for all $m\geq 1$, $\tau_{m+1}=2\tau_m$, we have that:
\begin{align}
\label{ineq:integral-bound}
\begin{split}
  &\sum_{m=2}^{m(T)} \frac{\tau_m-\tau_{m-1}}{\sqrt{(\tau_{m-1}-\tau_{m-2})^{\rho}}} = 2^{\rho/2} \sum_{m=2}^{m(T)} \frac{\tau_m-\tau_{m-1}}{\tau_{m-1}^{\rho/2}} \leq 2^{\rho/2}\sum_{m=2}^{m(T)}\int_{\tau_{m-1}}^{\tau_m} \frac{dy}{y^{\rho/2}}\\
  & = 2^{\rho/2}\int_{\tau_{1}}^{\tau_{m(T)}} \frac{dy}{y^{\rho/2}} \leq \frac{2^{\rho/2}}{1-\rho/2} \tau_{m(T)}^{1-\rho/2} \leq \frac{2}{1-\rho/2} T^{1-\rho/2}.
\end{split}
\end{align}
Where the last inequality follows from the fact that $\tau_{m(T)}\leq 2T$. Hence when \Cref{ineq:azuma-bound} and $\event$ hold, from \Cref{ineq:cum-reg-bound}, \Cref{ineq:bound-passive-steps}, \Cref{ineq:Qtrue-active}, \Cref{ineq:sum-of-inv-gamma}, and \Cref{ineq:integral-bound}, we get:
\begin{align*}
\begin{split}
  &\sum_{t=1}^T \Big( r_t(\pi^*(x_t))-r_t(a_t) \Big)\\
  & \leq |\passivesteps| + \sum_{t\in\activesteps}\sum_{\pi\in\Psi} Q_{m(t)}(\pi)\Reg(\pi) + \sqrt{8T\log(2/\delta)}\\
  & \leq 1 + \log_2(T) + \epsilon T + \sqrt{8T\log(2/\delta)} + \tau_1 + C_{10}\sqrt{\frac{KB}{\sqrt{\epsilon^{\rho}}}} T  \\
  & \;\;\;\;\;\;\; + \frac{(2+C_0)}{\sqrt{C_3}}\frac{4}{2-\rho}  \sqrt{K T^{2-\rho}\ln^{\rho'}(T)\ln(\log_2(T)/\delta)\comp(\F)} \\
  & = \ordO\Bigg( \Bigg(\epsilon + \sqrt{\frac{KB}{\sqrt{\epsilon^{\rho}}}} \Bigg)T + \sqrt{K T^{2-\rho}\ln^{\rho'}(T)\ln(\log_2(T)/\delta)\comp(\F)} \Bigg).
\end{split}
\end{align*}
Note that from \Cref{lem:ConReg}, we know that $\event$ holds with probability $1-\delta/2$. Also from Azuma's inequality, we showed that \Cref{ineq:azuma-bound} holds with probability $1-\delta/2$. Hence from union bound, we get that the above inequality holds with probability $1-\delta$. This concludes the proof of \Cref{eq:main-theorem}.

\section{Learning rates}
\label{app:kol-fast-rates}

In this section, we restate results from \cite{koltchinskii2011oracle} on bounds for excess risk in a form that is convenient for us to use. We consider the standard machine learning setting. That is, we let $(Z,Y)$ be a random tuple in $\Zscript \times [0,1]$ with distribution $P$. Assume $Z$ is observable and $Y$ is to be predicted based on an observation of $Z$. Let $l:\R\times \R$ be the squared error loss, that is $l(a,b)=(a-b)^2$. Given a function $g:\Zscript\rightarrow\R$, let $(l\cdot g)(z,y):=l(y,g(z))$ be interpreted as the loss suffered when $g(z)$ is used to predict $y$. Let $\G$ be a convex class of functions from $\Zscript$ to $\R$. The problem of optimal prediction can be viewed as finding a solution to the following risk minimization problem:
$$ \min_{g\in\G} P(l\cdot g).$$
Where $P(l\cdot g)$ is a short hand for $\E_P[(l\cdot g)(Z,Y)]$. Let $\hatg^*\in\G$ be a solution to the above risk minimization problem. Let $g^*(z):=\E_P[Y|Z=z]$. Since the distribution $P$ is unknown, the above risk minimization problem is replaced by the empirical risk minimization problem:
$$ \min_{g\in\G} P_n(l\cdot g).$$
Where $P_n$ is an empirical distribution generated from $n$ i.i.d. samples of $(Z,Y)$ from the distribution $P$. Here $P_n(l\cdot g)$ is a short hand for $\E_{P_n}[(l\cdot g)(Z,Y)]$. In general, we will use $P(\cdot)$ and $P_n(\cdot)$ as a short hand for $\E_P[\cdot]$ and $\E_{P_n}[\cdot]$ respectively. Now, let $\hatg_n\in\G$ be a solution to the above empirical risk minimization problem. Also let $\G^l$ denote the loss class, that is $\G^l:=\{l\cdot g \suchthat g\in\G \}$. For any $g  \in \G$, we define the excess risk ($\erisk(l\cdot g)$) and the empirical excess risk ($\haterisk(l\cdot g)$), given by:
\begin{align*}
	&\erisk(l\cdot g):= P(l\cdot g) - \min_{l \cdot g'\in\G^l} P(l\cdot g') = \E_P[(l\cdot g)(Z,Y)] - \min_{g'\in\G}\E_P[(l\cdot g')(Z,Y)], \\
	&\haterisk(l\cdot g):= P_n(l\cdot g) - \min_{l \cdot g'\in\G^l} P_n(l\cdot g') = \E_{P_n}[(l\cdot g)(Z,Y)] - \min_{g'\in\G}\E_{P_n}[(l\cdot g')(Z,Y)].
\end{align*}
For $\delta\in\R_+$, we define the $\delta$-minimal set ($\G^l(\delta)$) and the empirical $\delta$-minimal set ($\hatG^l(\delta)$), given by:
\begin{align*}
	&\G^l(\delta) := \bigg\{ h\in\G^l \suchthat \erisk(h) \leq \delta \bigg\}, \;\;\; \hatG^l(\delta) := \bigg\{ h\in\G^l \suchthat \haterisk(h) \leq \delta \bigg\}.
\end{align*}
We now define a version of local Rademacher averages ($\psi_n$). We start by defining the Rademacher process ($R_n(\cdot)$). For any function $h:\Zscript\rightarrow\R$, $R_n(h)$ is given by:
$$R_n(h):=\frac{1}{n}\sum_{i=1}^n \epsilon_i h(Z_i).$$
Where $\{Z_i\}_{i=1}^n$ are i.i.d. random samples from the marginal distribution of $P$ on $\Zscript$. And where $\{\epsilon_i\}_{i=1}^n$ are i.i.d. Rademacher random variables (that is, $\epsilon_i$ takes the values $+1$ and $-1$ with probability $1/2$ each) independent of $Z_i$. We also define a (pseudo)-metric ($\rho_P$) on the set of functions that are square integrable with respect to $P$, such that: $\rho_P(f,g):=\sqrt{P((f-g)^2)}$. We now define the local Rademacher average ($\psi_n$) as:
\begin{align*}
  \psi_n(\delta) := 16\E_{P,\epsilon}\sup\{|R_n(g-\hatg^*)| \suchthat g\in\G, \rho_P^2(g,\hatg^*) \leq 2\delta  \}.
\end{align*}
Finally we define the $\flat$-transform and the $\sharp$-transform. For any $\kappa:\R_+\rightarrow\R_+$, define:
$$ \kappa^{\flat}(\delta):= \sup_{\delta'\geq\delta}\frac{\kappa(\delta')}{\delta'}, \;\;\; \kappa^{\sharp}(\epsilon):=\inf\{ \delta>0 \suchthat \kappa^{\flat}(\delta) \leq \epsilon \}.  $$
It is easy to see that $\sharp$-transforms are decreasing functions, and we will use this property in the proof of \Cref{lem:kol}. For more details and properties of these transformations, see section A.3 in \cite{koltchinskii2011oracle}. We now get to the main Lemma of this section (\Cref{lem:kol}), which is implicitly evident from results in \cite{koltchinskii2011oracle}. Lemma \ref{lem:kol} shows that, with high-probability, the $\delta$-minimal set ($\G^l(\delta)$) and the empirical $\delta$-minimal set ($\hatG^l(\delta)$) approximate each other.
\begin{lemma}
	\label{lem:kol}
  Let $\G$ be a convex class of functions from $\Zscript$ to $[0,1]$.
	Suppose $\zeta\in(0,1/2)$. With probability at least $1-\zeta$, for all $\delta\geq \max\{\psi_n^{\sharp}(\frac{1}{16}), \frac{16384\ln(2/\zeta)}{n}\}$ we have: $$\G^l(\delta) \subset \hatG^l(3\delta/2), \;\;\; \hatG^l(\delta)\subset \G^l(2\delta).$$
\end{lemma}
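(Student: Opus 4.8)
The plan is to obtain Lemma~\ref{lem:kol} as a specialization, to the squared loss on a convex class, of the localized excess-risk concentration bounds of \cite{koltchinskii2011oracle}. The only structural input those bounds require is a quadratic (``Bernstein'') relationship between the variance of the excess loss and its mean; for a convex class under squared loss this is automatic. Once the two-sided comparison between empirical and population excess risk is in hand, the claimed set inclusions are just rewrites.

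\textbf{Step 1: the quadratic margin condition.} First I would record the two facts that make localization close. (a) Since $\G$ is convex and $l$ is the squared loss, Lemma~5.1 of \cite{koltchinskii2011oracle} (already used in the proof of \Cref{lem:ConReg}) gives $\rho_P^2(g,\hatg^*)=P((g-\hatg^*)^2)\le 2\,\erisk(l\cdot g)$ for every $g\in\G$, where $\hatg^*$ is the population risk minimizer (which exists and is $P$-a.s.\ unique by convexity). (b) Since $g,\hatg^*,Y\in[0,1]$, the excess loss factors as $(l\cdot g-l\cdot\hatg^*)(z,y)=(g(z)-\hatg^*(z))\big(g(z)+\hatg^*(z)-2y\big)$ with the second factor bounded by $2$; hence $|l\cdot g-l\cdot\hatg^*|\le 2|g-\hatg^*|$ pointwise, so $P\big((l\cdot g-l\cdot\hatg^*)^2\big)\le 4\rho_P^2(g,\hatg^*)\le 8\,\erisk(l\cdot g)$. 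Thus the variance of the excess loss is linearly controlled by its mean (the hypothesis of the localized bounds), and the $2$-Lipschitz pointwise bound lets the contraction inequality transfer the Rademacher complexity of the loss class $\G^l$ to that of $\G$ itself --- which is why $\psi_n$ is defined via $R_n(g-\hatg^*)$ rather than $R_n(l\cdot g-l\cdot\hatg^*)$, the constant $16$ in its definition absorbing symmetrization $\times$ contraction $\times$ the factor $2$ in $\rho_P^2\le 2\,\erisk$.

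\textbf{Step 2: localized concentration plus peeling.} Next I would run the standard argument: for a fixed level $\delta$, symmetrize and apply Talagrand's concentration inequality to the empirical process over the slice $\{g\in\G:\rho_P^2(g,\hatg^*)\le 2\delta\}$, which by Step~1(a) contains $\{\erisk(l\cdot g)\le\delta\}$; its expected supremum is controlled by $\psi_n(\delta)/16$ and its fluctuation by a term of order $\sqrt{\delta\ln(1/\zeta)/n}+\ln(1/\zeta)/n$. Whenever $\delta\ge\psi_n^{\sharp}(\tfrac1{16})$, the $\flat/\sharp$ construction forces $\psi_n(\delta')\le\delta'/16$ for all $\delta'\ge\delta$, making the bound self-improving; whenever $\delta\ge 16384\ln(2/\zeta)/n$ the fluctuation term is likewise a small multiple of $\delta$. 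A geometric peeling over the levels $2^{j}\delta^{*}$, with $\delta^{*}:=\max\{\psi_n^{\sharp}(\tfrac1{16}),\,16384\ln(2/\zeta)/n\}$ and a mildly weighted union bound so that the logarithm of the number of slices is absorbed, then yields: with probability at least $1-\zeta$, simultaneously for all $g\in\G$, $\haterisk(l\cdot g)\le\tfrac32\max\{\erisk(l\cdot g),\delta^{*}\}$ and $\erisk(l\cdot g)\le 2\max\{\haterisk(l\cdot g),\delta^{*}\}$. These are precisely the two-sided excess-risk inequalities of Chapter~5 of \cite{koltchinskii2011oracle}, with the constants of the statement. From here the inclusions follow: on that event, for any $\delta\ge\delta^{*}$, if $h=l\cdot g\in\G^l(\delta)$ then $\haterisk(h)\le\tfrac32\max\{\erisk(h),\delta^{*}\}\le\tfrac32\delta$, so $h\in\hatG^l(3\delta/2)$; and if $h\in\hatG^l(\delta)$ then $\erisk(h)\le 2\max\{\haterisk(h),\delta^{*}\}\le 2\delta$, so $h\in\G^l(2\delta)$.

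\textbf{Main obstacle.} The conceptual content --- the quadratic margin condition from convexity and the localization-with-peeling scheme --- is standard; the real work is bookkeeping. The main obstacle is verifying that the specific constants in the statement (the $16$ inside $\psi_n^{\sharp}$, the $16384\ln(2/\zeta)/n$ floor, and the asymmetric factors $3/2$ and $2$) are exactly what emerges after tracking constants through Talagrand's inequality, the symmetrization and contraction steps, the factor $2$ from Lemma~5.1, and the weighted union bound across peeling levels. One must also be careful that $\psi_n$ is built from the \emph{population} minimizer $\hatg^*$ rather than the Bayes function $g^*$, and that $\psi_n^{\sharp}$ is finite for the classes considered --- both inherited directly from the setup of \cite{koltchinskii2011oracle}.
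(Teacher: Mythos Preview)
Your proposal is correct in outline and in its identification of the structural inputs (the quadratic link $\rho_P^2(g,\hatg^*)\le 2\,\erisk(l\cdot g)$ from convexity, the $2$-Lipschitz relation between loss differences and function differences, and Talagrand plus peeling). It is, however, organized differently from the paper.

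The paper does not reconstruct the localization argument. It cites Lemma~4.2 of \cite{koltchinskii2011oracle} as a black box: that lemma already delivers the inclusions $\G^l(\delta)\subset\hatG^l(3\delta/2)$ and $\hatG^l(\delta)\subset\G^l(2\delta)$ for all $\delta\ge\delta_n^{\diamond}$, with failure probability $\sum_{\delta_j\ge\delta_n^{\diamond}}e^{-t_j}$, provided $\delta_n^{\diamond}\ge U_n^{\sharp}(1/2)$ for a certain Talagrand-type envelope $U_n$. Essentially all of the paper's work is then an explicit $\flat/\sharp$-transform calculation: it bounds $U_n^{\flat}$ by $\phi_n^{\flat}$ and $(D^2)^{\flat}$, uses convexity (page~78 of \cite{koltchinskii2011oracle}) to get $D(\delta)\le 4\sqrt{2}\sqrt{\delta}$ and $\phi_n\le\psi_n$, inverts to obtain $U_n^{\sharp}(1/2)\le\psi_n^{\sharp}(1/(8q))$, and then fixes $q=2$, $t=\ln(2/\zeta)$, $\sigma=4096tq^2/n$. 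This is precisely where the constants in the statement come from: $1/(8q)=1/16$ and $\sigma=16384\ln(2/\zeta)/n$. The failure probability is controlled by summing the geometric series $\sum_{j}e^{-tq^j}\le\frac{q}{q-1}e^{-t}=\zeta$.

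What each approach buys: yours is more self-contained and makes the role of the Bernstein condition explicit, but --- as you correctly flag --- the constant tracking through a hand-rolled Talagrand-plus-peeling argument is delicate. The paper's route outsources the peeling and the factors $3/2$, $2$ entirely to Lemma~4.2, so that the only bookkeeping left is the $\flat/\sharp$ algebra, which is mechanical and produces the stated constants without guesswork.
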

\begin{proof}
	Lemma~\ref{lem:kol} is a corollary of a few Lemmas and inequalities in \cite{koltchinskii2011oracle}. In the next few steps, we will define a function $U_n$ and bound $U_n^{\sharp}(1/2)$. Lemma~\ref{lem:kol} will follow from Lemma 4.2 in \cite{koltchinskii2011oracle} and the bounds on $U_n^{\sharp}(1/2)$. Let $D(\delta)$ denote the $\rho_P$-diameter of the $\delta$-minimal set ($\G^l(\delta)$). That is:
	$$ D(\delta):= \sup_{h,h'\in\G^l(\delta)}\rho_P(h,h'). $$
	Also let $\phi_n$ be a measure of empirical approximation:
	$$ \phi_n(\delta):= \E\bigg[ \sup_{h,h'\in\G^l(\delta)} \Big| (P_n-P)(h-h') \Big| \bigg]. $$
	Let $t,\sigma>0$, and $q>1$. We will fix the values of $t,\sigma$ and $q$ later in the proof. Let $\delta_j:=q^{-j}$ and $t_j:=t\frac{\delta_j}{\sigma}$, for all $j\geq0$. We will now define a function $U_n:(0,1]\rightarrow\R_+$. For all $j\geq 0$ and $\delta\in(\delta_{j+1},\delta_j]$, define:
	\begin{align*}
		 U_n(\delta) &:=\phi_n(\delta_j) + \sqrt{2\frac{t_j}{n}(D^2(\delta_j) + 2\phi_n(\delta_j))} + \frac{t_j}{2n}\\
		 & = \phi_n(\delta_j) + \sqrt{2\frac{t}{n}\frac{\delta_j}{\sigma}(D^2(\delta_j) + 2\phi_n(\delta_j))} + \frac{t}{2n}\frac{\delta_j}{\sigma}
	\end{align*}
	The reader may have astutely noticed that functions like $U_n$ appear as upper bounds in Talagrand type concentration inequalities, in fact that is where this comes from. We now bound $U_n^{\flat}(\eta)$ for all $\eta>0$:
	\begin{align}
  \label{ineq:U-flat-bound}
  \begin{split}
		&U_n^{\flat}(\eta) \leq  \sup_{\delta_j\geq \eta} \frac{q}{\delta_j} \bigg\{  \phi_n(\delta_j) + \sqrt{2\frac{t}{n}\frac{\delta_j}{\sigma}(D^2(\delta_j) + 2\phi_n(\delta_j))} + \frac{t}{2n}\frac{\delta_j}{\sigma} \bigg\}\\
		&\leq q\sup_{\delta_j\geq \eta} \frac{\phi_n(\delta_j)}{\delta_j} +  \sup_{\delta_j\geq \eta} q \bigg\{\sqrt{\frac{2t}{\sigma n}\frac{D^2(\delta_j)}{\delta_j}} + \sqrt{\frac{4t}{\sigma n}\frac{\phi_n(\delta_j)}{\delta_j}} \bigg\} + \frac{qt}{2\sigma n}  \\
		&\leq q\phi_n^{\flat}(\eta) +  q \sqrt{\frac{2t}{\sigma n}(D^2)^{\flat}(\eta)} + q\sqrt{\frac{4t}{\sigma n}\phi_n^{\flat}(\eta) }  + \frac{qt}{2\sigma n}
  \end{split}
	\end{align}
  From \Cref{ineq:U-flat-bound}, we get a bound on $U_n^{\sharp}(\epsilon)$ for all $\epsilon>0$:
	\begin{align}
  \label{ineq:U-sharp-bound}
  \begin{split}
		&U_n^{\sharp}(\epsilon) := \inf\{ \eta>0 \suchthat U_n^{\flat}(\eta) \leq \epsilon \}\\
		& \leq  \inf\bigg\{ \eta>0 \suchthat q\phi_n^{\flat}(\eta) +  q \sqrt{\frac{2t}{\sigma n}(D^2)^{\flat}(\eta)} + q\sqrt{\frac{4t}{\sigma n}\phi_n^{\flat}(\eta) }  + \frac{qt}{2\sigma n}  \leq \epsilon \bigg\}\\
		& \leq  \inf\bigg\{ \eta>0 \suchthat \phi_n^{\flat}(\eta) +  \sqrt{\frac{2t}{\sigma n}(D^2)^{\flat}(\eta)} + \sqrt{\frac{4t}{\sigma n}\phi_n^{\flat}(\eta) }   \leq \frac{1}{q}\bigg(\epsilon - \frac{qt}{2\sigma n}\bigg) \bigg\}\\
		& \leq \max\Bigg\{ \inf\bigg\{ \eta>0 \suchthat \phi_n^{\flat}(\eta) \leq \frac{1}{3q}\bigg(\epsilon - \frac{qt}{2\sigma n}\bigg) \bigg\},  \inf\bigg\{ \eta>0 \suchthat  \sqrt{\frac{2t}{\sigma n}(D^2)^{\flat}(\eta)}   \leq \frac{1}{3q}\bigg(\epsilon - \frac{qt}{2\sigma n}\bigg) \bigg\},\\
		&  \;\;\;\;\;\;\;\;\;\;\;\;\;\;\;\;\; \inf\bigg\{ \eta>0 \suchthat \sqrt{\frac{4t}{\sigma n}\phi_n^{\flat}(\eta) }   \leq \frac{1}{3q}\bigg(\epsilon - \frac{qt}{2\sigma n}\bigg) \bigg\} \Bigg\}\\
		& \leq \max\bigg\{\phi_n^{\sharp}\bigg(\frac{\epsilon}{3q} - \frac{t}{6\sigma n}\bigg), (D^2)^{\sharp}\bigg(\frac{\sigma n}{2t}\bigg(\frac{\epsilon}{3q} - \frac{t}{6\sigma n}\bigg)^2\bigg), \phi_n^{\sharp}\bigg(\frac{\sigma n}{4t}\bigg(\frac{\epsilon}{3q} - \frac{t}{6\sigma n}\bigg)^2\bigg) \bigg\}
  \end{split}
  \end{align}
  To further bound $U_n^{\sharp}(\cdot)$, we need to bound the terms in \Cref{ineq:U-sharp-bound}. From page 78 in \cite{koltchinskii2011oracle}, we get that the convexity of $\G$ implies a bound on $D(\cdot)$ which further gives us a bound on $(D^2)^{\flat}(\cdot)$:
	\begin{align*}
		&D(\delta) \leq 4\sqrt{2}\sqrt{\delta}, \;\;\; \text{for all $\delta\geq 0$}.\\
    \implies & (D^2)^{\flat}(\eta) = \sup_{\delta'\geq\eta}\frac{D^2(\delta')}{\delta'} \leq 32, \;\;\; \text{for all $\eta\geq 0$}.
	\end{align*}
  Hence we have:
	\begin{align}
  \label{eq:D2-sharp-bound}
		(D^2)^{\sharp}(\epsilon) = 0, \;\;\; \text{for all $\epsilon\geq 32$}
	\end{align}
  To upper-bound $U_n^{\sharp}(1/2)$, we now bound the $(D^2)^{\sharp}(\cdot)$ term in \Cref{ineq:U-sharp-bound}. To do this we choose $\sigma = 4096 tq^2/n$. Hence from the choice of $\sigma$ and from \Cref{eq:D2-sharp-bound}, we get:
	\begin{align}
  \label{eq:D2-sharp-bound-for-U-sharp}
  \begin{split}
		\sigma \geq \frac{4096 tq^2}{n} = 1024 \frac{t q^2}{n(1/2)^2} \implies \frac{\sigma n}{2t}\frac{(1/2)^2}{16q^2} \geq 32 \implies \frac{\sigma n}{2t}\bigg(\frac{1/2}{3q} - \frac{t}{6\sigma n}\bigg)^2 \geq 32\\
		\implies (D^2)^{\sharp}\bigg(\frac{\sigma n}{2t}\bigg(\frac{1/2}{3q} - \frac{t}{6\sigma n}\bigg)^2\bigg) = 0
  \end{split}
	\end{align}
  We now bound the $\phi_n^{\sharp}(\cdot)$ terms in \Cref{ineq:U-sharp-bound}, in terms of $\psi_n^{\sharp}(\cdot)$. Again from page 78 in \cite{koltchinskii2011oracle}, we get that the convexity of $\G$ implies a bound on $\phi_n(\cdot)$ which further gives us a bound on $\phi_n^{\sharp}(\cdot)$:
  \begin{align}
  \label{ineq:phi-sharp-bound}
  \begin{split}
 		&\phi_n(\delta) \leq \psi_n(\delta)  \;\;\; \text{for all $\delta\geq 0$}.\\
    \implies & \phi_n^{\sharp}(\epsilon) \leq \psi_n^{\sharp}(\epsilon)  \;\;\; \text{for all $\epsilon\geq 0$}.
  \end{split}
 	\end{align}
  To upper-bound $U_n^{\sharp}(1/2)$, we now bound the $\phi_n^{\sharp}(\cdot)$ terms in \Cref{ineq:U-sharp-bound}. From the choice of $\sigma$ and from \Cref{ineq:phi-sharp-bound}, we get:
  \begin{align}
  \label{ineq:phi-sharp-bound-for-U-sharp-1}
  \begin{split}
		\sigma  \geq \frac{4qt}{n} = \frac{2qt}{n(1/2)} \implies \frac{1/2}{12q} \geq \frac{t}{6\sigma n} \implies \frac{1/2}{3q} - \frac{t}{6\sigma n} \geq \frac{1/2}{4q}\\
    \implies \phi_n^{\sharp}\bigg(\frac{1/2}{3q} - \frac{t}{6\sigma n}\bigg) \leq \phi_n^{\sharp}\bigg(\frac{1/2}{4q}\bigg) \leq \psi_n^{\sharp}\bigg(\frac{1/2}{4q}\bigg).
  \end{split}
	\end{align}
  Again from the choice of $\sigma$ and from \Cref{ineq:phi-sharp-bound}, we get:
  \begin{align}
  \label{ineq:phi-sharp-bound-for-U-sharp-2}
  \begin{split}
		\sigma \geq \frac{32tq}{n} = \frac{16tq}{n(1/2)} \implies \frac{\sigma n}{4t}\frac{(1/2)^2}{16q^2} \geq \frac{1/2}{4q} \implies \frac{\sigma n}{4t}\bigg(\frac{1/2}{3q} - \frac{t}{6\sigma n}\bigg)^2 \geq \frac{1/2}{4q}\\
		\implies \phi_n^{\sharp}\bigg( \frac{\sigma n}{4t}\bigg(\frac{1/2}{3q} - \frac{t}{6\sigma n}\bigg)^2 \bigg) \leq \phi_n^{\sharp}\bigg(\frac{1/2}{4q}\bigg) \leq \psi_n^{\sharp}\bigg(\frac{1/2}{4q}\bigg).
  \end{split}
	\end{align}
  Combining \Cref{ineq:U-sharp-bound}, \Cref{eq:D2-sharp-bound-for-U-sharp}, \Cref{ineq:phi-sharp-bound-for-U-sharp-1}, and \Cref{ineq:phi-sharp-bound-for-U-sharp-2}, we get:
  \begin{align}
  \label{ineq:U-sharp-bound-main}
  \begin{split}
		U_n^{\sharp}(1/2) \leq \psi_n^{\sharp}\bigg(\frac{1}{8q}\bigg).
  \end{split}
	\end{align}
  Lemma 4.2 in \cite{koltchinskii2011oracle} states that with probability at least $1- \sum_{\delta_j\geq \delta_n^{\diamond}}e^{-t_j}$, for all $\delta\geq \delta_n^{\diamond}$ we have: $\G^l(\delta) \subset \hatG^l(3\delta/2)$ and $\hatG^l(\delta)\subset \G^l(2\delta)$. Where $\delta_n^{\diamond}$ is any number such that $\delta_n^{\diamond}\geq U_n^{\sharp}(1/2)$. Hence from \Cref{ineq:U-sharp-bound-main}, we can choose:
	$$ \delta_n^{\diamond} = \max\bigg\{ \psi_n^{\sharp}\bigg(\frac{1}{8q}\bigg), \frac{4096tq^2}{n} \bigg\} \geq \max\{ U_n^{\sharp}(1/2), \sigma \}.$$
  Now by choosing $q=2$ and $t=\ln(2/\zeta)$, using the fact that $\zeta\in(0,1/2)$, we get that $t\geq 1$. Hence, we have that:
	\begin{align*}
		&\sum_{\delta_j \geq \delta_n^{\diamond} } e^{-t_j} \leq \sum_{\delta_j \geq \sigma } e^{-t_j} = \sum_{\delta_j\geq \sigma} \exp\Big\{ -t \frac{\delta_j}{\sigma} \Big\} \leq \sum_{j\geq 0} e^{-t q^j} =\\
		& e^{-t} + \frac{q}{q-1}\sum_{j=1}^{\infty} q^{-j}(q^j - q^{j-1})e^{-tq^j} \leq e^{-t} + \frac{1}{q-1}\int_{1}^{\infty} e^{-tx}dx = \\
		& e^{-t} + \frac{1}{q-1}\frac{1}{t}e^{-t} \leq e^{-t} + \frac{1}{q-1}e^{-t} = \frac{q}{q-1}e^{-t} = \zeta .
	\end{align*}
  That is, we have shown that with probability at least $1-\zeta$, for all $\delta \geq \max\{\psi_n^{\sharp}(1/8q), 4096tq^2/n\}$, we have: $\G^l(\delta) \subset \hatG^l(3\delta/2)$ and $\hatG^l(\delta)\subset \G^l(2\delta)$.
\end{proof}
Corollary \ref{cor:kol-linear} uses \Cref{lem:kol} and a bound on $\psi_n^{\sharp}(\cdot)$ when $\G$ is a convex subset of a $d$-dimensional linear space to show that for all $\delta\geq \frac{Cd\ln(1/\zeta)}{n}$, the $\delta$-minimal set ($\G^l(\delta)$) and the empirical $\delta$-minimal set ($\hatG^l(\delta)$) approximate each other with probability at least $1-\zeta$.
\begin{corollary}
\label{cor:kol-linear}
    Let $\G$ be a convex class of functions from $\Zscript$ to $[0,1]$, and a subset $d$ dimensional linear space. Suppose $\zeta\in(0,1/2)$. With probability at least $1-\zeta$, for all $\delta\geq \frac{Cd\ln(1/\zeta)}{n}$ we have: $$\G^l(\delta) \subset \hatG^l(3\delta/2), \;\;\; \hatG^l(\delta)\subset \G^l(2\delta).$$
    Where $C>0$ is a positive constant.
\end{corollary}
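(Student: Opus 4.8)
The plan is to deduce \Cref{cor:kol-linear} directly from \Cref{lem:kol} by evaluating the $\sharp$-transform of the local Rademacher average $\psi_n$ for a $d$-dimensional linear class. \Cref{lem:kol} already establishes the two inclusions, with probability at least $1-\zeta$, for every $\delta \geq \max\{\psi_n^{\sharp}(1/16),\, 16384\ln(2/\zeta)/n\}$. Since $\zeta < 1/2$ gives $\ln(1/\zeta) > \ln 2 > 0$ and $d \geq 1$, it therefore suffices to prove a bound of the form $\psi_n^{\sharp}(1/16) \leq C' d / n$ for an absolute constant $C'$: then $\max\{\psi_n^{\sharp}(1/16),\, 16384\ln(2/\zeta)/n\} \leq C d \ln(1/\zeta)/n$ for $C$ chosen large enough, and the corollary is immediate.

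To bound $\psi_n^{\sharp}(1/16)$, I first bound $\psi_n(\delta)$ itself. Let $L$ be the $d$-dimensional linear space containing $\G$, and let $e_1,\dots,e_{d'}$, with $d' \leq d$, be an orthonormal basis of the image of $L$ in $L_2(P)$, i.e. $P(e_j e_k) = \I\{j = k\}$. Every $g \in \G$ then admits an expansion $g - \hatg^* = \sum_{j=1}^{d'} c_j(g)\, e_j$ in $L_2(P)$ with $\sum_j c_j(g)^2 = \rho_P^2(g,\hatg^*)$, so whenever $\rho_P^2(g,\hatg^*) \leq 2\delta$, Cauchy--Schwarz yields
\begin{equation*}
  |R_n(g-\hatg^*)| = \Big| \sum_{j=1}^{d'} c_j(g)\, R_n(e_j) \Big|
  \leq \sqrt{2\delta}\,\Big( \sum_{j=1}^{d'} R_n(e_j)^2 \Big)^{1/2},
\end{equation*}
a right-hand side independent of $g$. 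Taking the supremum over such $g$ and then expectations, and using $\E[R_n(e_j)^2] = P(e_j^2)/n = 1/n$ together with Jensen's inequality, gives $\psi_n(\delta) \leq 16\sqrt{2\delta}\,(d'/n)^{1/2} \leq 16\sqrt{2}\,\sqrt{d\delta/n}$. (This is just the standard local Rademacher estimate for finite-dimensional classes; one could equally cite the corresponding computation in \cite{koltchinskii2011oracle}.)

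Feeding this into the transforms is then routine. The map $\delta \mapsto \psi_n(\delta)/\delta$ is dominated by $16\sqrt{2}\,\sqrt{d/(n\delta)}$, which is nonincreasing, so $\psi_n^{\flat}(\delta) = \sup_{\delta' \geq \delta} \psi_n(\delta')/\delta' \leq 16\sqrt{2}\,\sqrt{d/(n\delta)}$; solving $16\sqrt{2}\,\sqrt{d/(n\delta)} \leq 1/16$ for $\delta$ gives $\psi_n^{\sharp}(1/16) = \inf\{\delta > 0 : \psi_n^{\flat}(\delta) \leq 1/16\} \leq 2^{17} d / n$. Combining with the reduction in the first paragraph and choosing $C$ to absorb the constants $2^{17}$, $16384$ and $1/\ln 2$ completes the proof.

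The only step that is not pure bookkeeping is the local Rademacher bound $\psi_n(\delta) \lesssim \sqrt{d\delta/n}$, and even there the single subtlety is that the span of $\{g - \hatg^* : g \in \G\}$ may have $L_2(P)$-dimension $d' < d$ (so one must work with an $L_2(P)$-orthonormal basis and the bound $d' \leq d$, rather than an arbitrary basis of $L$). The monotonicity of $\psi_n(\delta)/\delta$ used to compute the $\flat$- and $\sharp$-transforms, and the replacement of $\ln(2/\zeta)$ by $\ln(1/\zeta)$ using $\zeta < 1/2$, are elementary.
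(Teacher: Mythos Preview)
Your proof is correct and follows essentially the same route as the paper's: both invoke \Cref{lem:kol} and reduce the task to bounding $\psi_n^{\sharp}(1/16)$ by $O(d/n)$, then compute the $\flat$- and $\sharp$-transforms from the local Rademacher estimate $\psi_n(\delta)\le 16\sqrt{2\delta}\sqrt{d/n}$, arriving at the identical constant $2^{17}d/n$. The only cosmetic difference is that the paper cites Proposition~3.2 of \cite{koltchinskii2011oracle} for that estimate, whereas you supply the orthonormal-basis / Cauchy--Schwarz argument directly (and acknowledge the citation as an alternative); your explicit handling of the $\ln(2/\zeta)$ versus $\ln(1/\zeta)$ conversion is also a touch more careful than the paper's.
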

\begin{proof}
Since $\G$ is a convex subset of a $d$ dimensional linear space, we get from proposition 3.2 in \cite{koltchinskii2011oracle} that:
\begin{align*}
  \psi_n(\delta) &= 16\E_{P,\epsilon}\sup\{|R_n(g-\hatg^*)| \suchthat g\in\G, \rho_P^2(g,\hatg^*) \leq 2\delta  \} \\
  & \leq 16 \sqrt{2\delta} \sqrt{\frac{d}{n}}.
\end{align*}
Which implies that:
\begin{align*}
  \psi_n^{\flat}(\delta)= \sup_{\delta'\geq\delta}\frac{\psi_n(\delta')}{\delta'} \leq \sup_{\delta'\geq\delta} 16\sqrt{\frac{2 d}{n\delta'}} = 16\sqrt{\frac{2d}{n\delta}}.
\end{align*}
Hence, we get that:
\begin{align*}
  \psi_n^{\sharp}(\epsilon)&=\inf\{ \delta>0 \suchthat \psi_n^{\flat}(\delta) \leq \epsilon \}\\
  &\leq \inf\bigg\{ \delta>0 \suchthat 16\sqrt{\frac{2d}{n\delta}} \leq \epsilon \bigg\}\\
  &=\inf\bigg\{ \delta>0 \suchthat \frac{512d}{n\epsilon^2} \leq \delta \bigg\} = \frac{512d}{n\epsilon^2}.
\end{align*}
Therefore:
\begin{align*}
  \psi_n^{\sharp}(1/16) \leq \frac{512d}{n (1/ 16)^2} = \frac{131072 d}{n}.
\end{align*}
Hence \Cref{cor:kol-linear} follows from \Cref{lem:kol} and the above inequality.
\end{proof}

\paragraph{Rates for general classes of functions}

Lemma \ref{lem:kol-psi-bound} provides rates for $\psi_n^{\sharp}$ for different classes of $\G$. Hence similar to \Cref{cor:kol-linear}, these bounds imply that for all $\delta\geq \ordO(\psi_n^{\sharp}(1/16)\ln(1/\zeta))$, the $\delta$-minimal set ($\G^l(\delta)$) and the empirical $\delta$-minimal set ($\hatG^l(\delta)$) approximate each other with probability at least $1-\zeta$. The results stated in \Cref{lem:kol-psi-bound} are from \cite{koltchinskii2011oracle} (pages 85 to 87), we state the same results without proof.

\begin{lemma}
\label{lem:kol-psi-bound}
Let $\G$ be a convex class of functions from $\Zscript$ to $[0,1]$.
\begin{itemize}[noitemsep,topsep=0pt]
    \item Suppose $\G$ is VC-subgraph class of functions with VC-dimension $V$. Then for all $\epsilon>0$, we have:
    $$ \psi_n^{\sharp}(\epsilon) \leq \ordO\bigg(\frac{V}{n\epsilon^2}\log\bigg(\frac{n\epsilon^2}{V}\bigg) \bigg).$$
    \item Let $N(\G, L_2(P_n), \epsilon)$ denote the number of $L_2(P_n)$ balls of radius $\epsilon$ covering $\G$. Suppose the empirical entropy is bounded, that is for some $\rho\in(0,1)$ we have that: $\log(N(\G, L_2(P_n), \epsilon))\leq \ordO(\epsilon^{-2\rho})$. Then for all $\epsilon>0$, we have:
    $$ \psi_n^{\sharp}(\epsilon) \leq \ordO\Big((n\epsilon^2)^{\frac{-1}{1+\rho}}\Big). $$
    \item Suppose $\G$ is a convex hull of a VC-subgraph class of functions with VC-dimension $V$. Then for all $\epsilon>0$, we have:
    $$ \psi_n^{\sharp}(\epsilon) \leq \ordO\Bigg(\bigg(\frac{V}{n\epsilon^2}\bigg)^{\frac{1}{2}\frac{2+V}{1+V}}\Bigg).$$
\end{itemize}
\end{lemma}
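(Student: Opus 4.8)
The plan is to reduce each of the three cases to a growth bound on the local Rademacher average $\psi_n(\delta) = 16\,\E_{P,\epsilon}\sup\{|R_n(g-\hatg^*)| \suchthat g\in\G,\ \rho_P^2(g,\hatg^*)\leq 2\delta\}$ and then push that bound mechanically through the $\flat$- and $\sharp$-transforms. The key observation is that if one establishes $\psi_n(\delta) \leq A\,\delta^{\beta}$ for $\delta$ in the relevant range with $\beta\in[0,1)$, then $\psi_n^{\flat}(\delta) = \sup_{\delta'\geq\delta}\psi_n(\delta')/\delta' \leq A\,\delta^{\beta-1}$ (the ratio is decreasing in $\delta'$ since $\beta-1<0$), and solving $A\,\delta^{\beta-1}\leq\epsilon$ gives $\psi_n^{\sharp}(\epsilon) \leq (A/\epsilon)^{1/(1-\beta)}$. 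So the whole lemma is a matter of identifying the right $(A,\beta)$ for each entropy regime, exactly as was done for the linear case in \Cref{cor:kol-linear}.

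For the first bullet, I would invoke the standard local-Rademacher bound for a VC-subgraph class, obtained by feeding the uniform entropy estimate $\log N(\G, L_2(Q), \tau) \lesssim V\log(1/\tau)$ into Dudley's entropy integral restricted to the ball $\{g : \rho_P^2(g,\hatg^*)\leq 2\delta\}$; this yields $\psi_n(\delta) \lesssim \sqrt{V\delta\log(1/\delta)/n}$, i.e.\ effectively $\beta = 1/2$ up to a logarithmic correction, and the $\sharp$-transform then produces the claimed $\ordO\!\big(\tfrac{V}{n\epsilon^2}\log(\tfrac{n\epsilon^2}{V})\big)$. For the second bullet, the hypothesis $\log N(\G, L_2(P_n), \tau) \leq \ordO(\tau^{-2\rho})$ enters Dudley's integral directly as $\psi_n(\delta) \lesssim \tfrac{1}{\sqrt n}\int_0^{\sqrt{2\delta}} \tau^{-\rho}\,d\tau \lesssim \tfrac{1}{\sqrt n}\,\delta^{(1-\rho)/2}$, so $\beta = (1-\rho)/2$, $1-\beta = (1+\rho)/2$, and the transform gives $\psi_n^{\sharp}(\epsilon) \leq \ordO\!\big((n\epsilon^2)^{-1/(1+\rho)}\big)$.

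The third bullet is the part I expect to be the main obstacle, since it is not a direct chaining computation: it rests on the classical entropy bound for convex hulls of VC-subgraph classes, $\log N(\mathrm{conv}(\G), L_2(P_n), \tau) \lesssim \tau^{-2V/(V+2)}$. Granting that estimate, one simply applies the second bullet with $\rho = V/(V+2)$; since $1/(1+\rho) = (V+2)/(2V+2) = \tfrac12\tfrac{2+V}{1+V}$, this yields $\psi_n^{\sharp}(\epsilon) \leq \ordO\!\big((n\epsilon^2)^{-(V+2)/(2V+2)}\big) = \ordO\!\big((V/(n\epsilon^2))^{\frac12\frac{2+V}{1+V}}\big)$ after absorbing the $V$-dependent constants. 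In all three cases the precise statements, with their exact constants and log powers, are those recorded in \cite{koltchinskii2011oracle} (pages~85--87, via the local Rademacher complexity bounds and the convex-hull entropy estimates therein), so the cleanest route is to cite those propositions directly; \Cref{lem:kol-psi-bound} is then just a convenient repackaging of them in the $\psi_n^{\sharp}$ notation used here, combined with \Cref{lem:kol} exactly as in the proof of \Cref{cor:kol-linear}.
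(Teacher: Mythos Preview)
Your proposal is correct, and in fact it does strictly more than the paper: the paper does not prove \Cref{lem:kol-psi-bound} at all but simply records it as a restatement of results from \cite{koltchinskii2011oracle} (pages 85--87) without proof. Your sketch of the mechanism---bounding $\psi_n(\delta)$ via Dudley's entropy integral in each entropy regime and then pushing the resulting $A\delta^{\beta}$ bound through the $\flat$/$\sharp$-transforms, with the convex-hull entropy estimate supplying $\rho = V/(V+2)$ for the third bullet---is exactly the content behind those cited pages, and your closing remark that ``the cleanest route is to cite those propositions directly'' lands on precisely what the paper does.
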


\paragraph{Proving \Cref{ass:main-assumption}}

We now describe the general outline to prove \Cref{ass:main-assumption} using the results in this section for different convex classes $\F$. Note that we need the conditions of \Cref{ass:main-assumption} to hold for any convex set $\F'\subseteq \F$, and any action selection kernel $p$. First let $\Zscript$ used in this section correspond to $\Xscript\times\A$, and let distribution $P$ correspond to the distribution described by $x_t \sim \mathcal{D}_\Xscript$, $a|x \sim p(a|x)$ and $r_t \sim \mathcal{D}_{r|x,a}$ induced by the action selection kernel $p$. Also note that the empirical distribution corresponding to $\Stilde$, in fact corresponds to $P_n$ in this section. Hence from \cref{lem:kol}, to show that the empirical and population $\eta$-minimal sets approximate each other with high-probability (as is required in \Cref{ass:main-assumption}), it is sufficient to bound $\psi_n^{\sharp}(1/16)$ uniformly for all convex subsets $\F'\subseteq \F$ and all distributions induced by action selection kernels. Such bounds can be proven for many interesting convex classes of estimators because the bounds on $\psi_n^{\sharp}$ are often distribution-free and we often have that $\comp(\F')\leq \comp(\F)$. 

For example, say $\F$ is a convex subset of a $d$ dimensional linear space, then any convex subset $\F'\subseteq\F$ is also a convex subset of a $d$ dimensional linear space. Hence, \cref{cor:kol-linear} can be used on $\F'$ to show that the empirical and population $\eta$-minimal sets approximate each other (as is required in \Cref{ass:main-assumption}). Note that this along with \Cref{thm:main-theorem} gives us \Cref{thm:linear-theorem}. Similarly, say $\F$ is a convex set with VC sub-graph dimension $V$. Note that, for any convex set $\F'\subseteq\F$, we have that $\F'$ has a VC sub-graph dimension $V$. Hence, we can then use \Cref{lem:kol-psi-bound} to bound $\psi_n^{\sharp}(1/16)$ in a distribution free manner and then show that the empirical and population $\eta$-minimal sets approximate each other (using \Cref{lem:kol}). Note that this along with \Cref{thm:main-theorem} gives us Example 1 in \Cref{sec:main_results}. We can similarly that Examples 2 and 3 follow from \Cref{thm:main-theorem} and the results in this section.


\section{Solving the constrained regression problem}
\label{app:conreg-tractability}

In this section, we show the constrained regression problem can be solved using a weighted regression oracle. The purpose of this argument is to show that the constrained regression problem is computationally tractable for many class of estimators. Suppose $\F$ is a convex set. Let $S, S'\subseteq \Xscript\times\A\times[0,1]$, often these sets represent the data collected in the active and passive phases respectively. Consider the following optimization problem:
\begin{equation}
\label{opt:genreral-conreg}
\begin{aligned}
\min_{f\in\F} \quad & \frac{1}{|S|}\sum_{(x,a,r(a))\in S} (f(x,a)-r(a))^2\\
\textrm{s.t.} \quad & \frac{1}{|S'|}\sum_{(x,a,r(a))\in S'} (f(x,a)-r(a))^2 \leq \alpha + \beta. \\
\end{aligned}
\end{equation}
Where $\beta>0$ is a fixed problem parameter, and $\alpha := \frac{1}{|S'|}\min_{f\in\F} \sum_{(x,a,r(a))\in S'} (f(x,a)-r(a))^2$. From the definition of $\alpha$ and $\beta$, we have that there exists a $g\in\F$ such that:
\begin{align}
\label{regurality-condition}
  \frac{1}{|S'|}\sum_{(x,a,r(a))\in S'} (g(x,a)-r(a))^2 < \alpha + \beta. 
\end{align}
That is there is a $g\in\F$ such that the constraint in the optimization problem (\ref{opt:genreral-conreg}) is not tight. Hence strong duality holds \footnote{See proposition 1.1.3 in \cite{bertsekas2015convex}.}. 
Now consider the lagragian of the constrained regression problem:
\begin{align*}
  L(f,\lambda) := \frac{1}{|S|}\sum_{(x,a,r(a))\in S} (f(x,a)-r(a))^2 + \lambda \Bigg(\frac{1}{|S'|}\sum_{(x,a,r(a))\in S'} (f(x,a)-r(a))^2 - \alpha - \beta\Bigg).
\end{align*}
Note that probelem \ref{opt:genreral-conreg} can be re-written as, $\min_{f\in\F}\max_{\lambda \geq 0} L(f,\lambda)$. Since strong duality holds, this is equivalent to solving the following dual optimization problem:
\begin{align*}
  \max_{\lambda \geq 0} \min_{f\in\F} L(f,\lambda) \equiv \max_{\lambda \geq 0} g(\lambda).
\end{align*}
Where, $g(\lambda):=\min_{f\in\F} L(f,\lambda)$. For any fixed $\lambda$, note that evaluating $g(\lambda)$ is equivalent to solving a weighted regression problem:
\begin{align*}
  \arg\min_{f\in\F} L(f,\lambda) = \arg\min_{f\in\F} \frac{1}{|S|}\sum_{(x,a,r(a))\in S} (f(x,a)-r(a))^2 + \frac{\lambda}{|S'|} \sum_{(x,a,r(a))\in S^{pass}} (f(x,a)-r(a))^2.
\end{align*}
Now, let $\lambda^*$ be an optimal dual solution. Since the dual problem is a one-dimensional concave maximization problem, we can use a bisection method to find the optimal dual solution. Hence one can solve the dual optimization problem with $\ordO(\log(\lambda^*))$ calls to evaluate $g(\cdot)$, where each evaluation call corresponds to one call to a weighted regression oracle. Suppose this procedure outputs $\bar{\lambda}$ as the optimal dual solution. We then output the estimator that solves:
\begin{align*}
   \arg\min_{f\in\F} \frac{1}{|S|} \sum_{(x,a,r(a))\in S} (f(x,a)-r(a))^2 + \frac{\bar{\lambda}}{|S'|} \sum_{(x,a,r(a))\in S^{pass}} (f(x,a)-r(a))^2.
\end{align*}
Note that this estimator must be optimal for the primal problem \footnote{Here when we say optimal, we mean optimal up to the accuracy thresholds.}. Since there are many algorithms and heuristics to solve weighted regression problems, this argument shows that the constrained regression problem is often computationally tractable. 
\begin{algorithm}[ht]
  \caption{Solving constrained regression}
  \label{alg:optimize-conreg}
  \textbf{input:} Given a threshold parameter $\kappa>0$ and a weighted regression oracle to evaluate $g(\cdot)$.
  \begin{algorithmic}[1] 
  \State Set $\lambda_L = 0$, $\lambda_M=1$, and $\lambda_R=2$.
  \While{$g(\lambda_M)<g(\lambda_R)$}
    \State Set $\lambda_R\leftarrow 2\lambda_R$ and set $\lambda_M \leftarrow 2\lambda_M$.
  \EndWhile
  \While{$|\lambda_R-\lambda_L|\geq \kappa$}
    \If{$g(\lambda_M+\kappa)>g(\lambda_M)$}
        \State Set $\lambda_L\leftarrow \lambda_M$.
    \Else
        \State Set $\lambda_R \leftarrow \lambda_M$.
    \EndIf
    \State Set $\lambda_M \leftarrow \frac{1}{2}(\lambda_L+\lambda_R)$.
  \EndWhile
  \State Return the output of the weighted regression oracle on the following problem:
  $$\min_{f\in\F} \frac{1}{|S|} \sum_{(x,a,r(a))\in S} (f(x,a)-r(a))^2 + \frac{\lambda_M}{|S'|} \sum_{(x,a,r(a))\in S^{pass}} (f(x,a)-r(a))^2.$$
  \end{algorithmic}
\end{algorithm}

We note that in practice, rather than solving multiple weighted regression problems, one may prefer to directly find a minimax solution to the lagrangian of the constrained regression problem (see \cite{jin2019local}).

\section{Sensitivity of confidence intervals to realizability}
\label{app:sensitivity-example}

In this section, we demonstrate that the confidence intervals used by LinUCB can be extremely sensitive to the realizability assumption. We also point out analogous issues in LinTS and FALCON (with linear estimates). We do this by constructing a family of contextual bandit problems where the approximation error to the class of linear models can be arbitrarily small, but given data from the policy induced by the best linear estimate (which also happens to be optimal), the confidence intervals used by LinUCB tightly concentrate around bad estimators that induce high-regret policies.

\begin{figure}[ht]
\vspace{.05in}
\centering
\includegraphics[width=.5\textwidth]{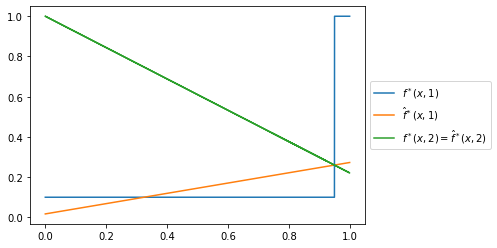}
\caption{This is a plot of the conditional expected reward ($f^*$) and the best linear estimate $\hatf^*$ when actions are sampled uniformly at random. Note that the conditional expected reward for arm 2 is linear. The problem is constructed so that the policy ($\pi_{\hatf^*}$) that is induced by the best linear estimates ($\hatf$) samples arm 2 for all $x$ such that $f^*(x,1)=0.1$, and samples arm 1 for all $x$ such that $f^*(x,1) = 1$. Note that this policy is also optimal.}
\vspace{.05in}
\label{fig:plot-hatf-star-policy}
\end{figure}

Consider a family of two armed contextual bandit problems that are parameterized by $\theta\in(0,0.05]$. Let $\Xscript=(0,1)$ be the set of contexts, and let $\A=\{1,2\}$ be the set of actions. At every time-step, the environment draws a context according to the continuous uniform distribution on $\Xscript$. That is, $D_{\Xscript}\equiv\Unif(\Xscript)$. To estimate the conditional expected reward ($f^*$) and select a policy, we pick estimators from a convex class of functions $\F$, where:
$$ \F :=\{ f:\Xscript\times\A \rightarrow \R \suchthat \text{ $f(\cdot,1)$ and $f(\cdot,2)$ are linear} \}. $$

For $\theta = 0.05$, \Cref{fig:plot-hatf-star-policy} plots the conditional expected rewards ($f^*$) and the best linear estimate $\hatf^*\in\F$ when actions are sampled uniformly at random. We will now specify these terms more generally, starting with the conditional expected reward for arm~1, which is given by:
$$
f^*(x,1):=
\begin{cases}
0.1, \; \text{for all $x\leq 1-\theta$}\\
1, \;\;\;\; \text{for all $x> 1-\theta$}.
\end{cases}
$$
The conditional expected reward for arm~2 is linear, and is given by  $f^*(x,2):=1+m_{\theta}x$. Where $m_{\theta}$ is such that $\hatf^*(x,1)$ and $f^*(x,2)$ meet at $x=1-\theta$, which is ensured by defining:
$$ m_{\theta} := \frac{\hatf^*(1-\theta,1)-1}{1-\theta}. $$
Since $f^*(\cdot,2)$ is linear, we get that $\hatf^*(\cdot,2)\equiv f^*(\cdot,2)$. Further since $m_{\theta}<0$, we get that $\hatf^*(x,2)$ is decreasing in $x$. Similarly, one can show that $\hatf^*(x,1)$ is increasing in $x$. Therefore, we get that $\pi_{\hatf^*}$ is given by:
$$
\pi_{\hatf^*}(x):=
\begin{cases}
2, \; \text{for all $x\leq 1-\theta$}\\
1, \; \text{for all $x> 1-\theta$}.
\end{cases}
$$
It is interesting to note that $\pi_{\hatf^*}$ is optimal for this family of bandit problems, that is $\pi_{f^*}\equiv \pi_{\hatf^*}$. Now let $\hatf$ be the best predictor of arm rewards under the distribution induced by $\pi_{\hatf^*}$. That is:
$$ \hatf \in \arg\min_{f\in\F} \E_{x\sim D_{\Xscript}}[(f(x,\pi_{\hatf^*}(x)) - f^*(x,\pi_{\hatf^*}(x)))^2]. $$
Since $f^*(x,1)=1$ for all $x>1-\theta$, we get that $\hatf(\cdot,1)\equiv 1$. Also since $f^*(\cdot,2)$ is linear and arm~2 is chosen for all $x\leq 1-\theta$, we get that $\hatf(\cdot,2)\equiv f^*(\cdot,2)$. For a more visual understanding, see \Cref{fig:plot-hatf-policy} which plots $f^*$ and $\hatf$ for $\theta = 0.05$.
\begin{figure}[ht]
\vspace{.05in}
\centering
\includegraphics[width=.5\textwidth]{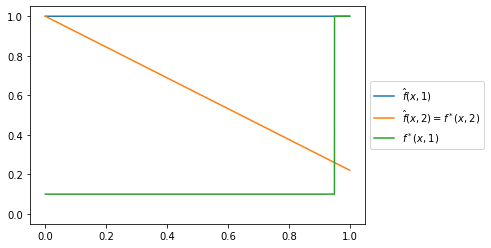}
\caption{This is a plot of the conditional expected reward ($f^*$) and the linear estimate ($\hatf$) that is learnt from data collected by $\pi_{\hatf^*}$. Note that $\pi_{\hatf^*}$ is infact the same as the optimal policy $\pi_{f^*}$. Also note that the policy $\pi_{\hatf}$ that is induced by the estimate $\hatf$ samples arm 1 for all $x$. Hence, this policy has high regret.}
\vspace{.05in}
\label{fig:plot-hatf-policy}
\end{figure}

Therefore $\pi_{\hatf}(x)=1$ for all $x$, and hence incurs high regret:
$$\Reg(\pi_{\hatf}) \geq \frac{1}{2}(1-\theta)(1-0.1) \geq 0.4275. $$
For this family of bandit problems, while the regret of $\pi_{\hatf}$ is at least $0.4275$,  the approximation error ($b$) can be arbitrarily small. In particular, since $f^*(\cdot,2)$ is linear, we get:
$$ b = \min_{f\in\F} \frac{1}{2}\E_{x\sim D_{\Xscript}}[(f(x,1) - f^*(x,1))^2] \leq \frac{1}{2}\E_{x\sim D_{\Xscript}}[(0.1 - f^*(x,1))^2] \leq \frac{\theta}{2}.$$
Further note that for this family of problems, as sufficient data is collected from policy $\pi_{\hatf^*}$ (which is also optimal), the confidence intervals used by LinUCB tightly concentrate around $\hatf$. 

Hence even under minor violations of realizability (the approximation error $b$ of the best linear estimator can be arbitrarily small), the confidence intervals that are used by LinUCB are invalid, in the sense that this confidence interval tightly concentrates on a bad linear estimate ($\hatf$) that induces a policy ($\pi_{\hatf}$) with high regret ($\Reg(\pi_{\hatf})>0.4275$). Note that a similar argument can be used to argue that for this family of bandit problems, given data from the optimal policy, the posterior of LinTS concentrates on the same bad linear estimate. Similarly for this family of bandit problems, given data from the optimal policy, the empirical risk minimizer would be the bad linear estimate $\hatf$ and the induced randomized policy constructed by FALCON would converge to the high regret policy ($\pi_{\hatf}$) induced by this estimate. This example calls into question the validity of any model update step in realizability-based approaches.

\end{document}